\documentclass{article}

% if you need to pass options to natbib, use, e.g.:
     \PassOptionsToPackage{numbers, compress}{natbib}
% before loading neurips_2019

% ready for submission
% \usepackage{neurips_2019}

% to compile a preprint version, e.g., for submission to arXiv, add add the
% [preprint] option:
%  \usepackage[preprint]{neurips_2019}

% to compile a camera-ready version, add the [final] option, e.g.:
\usepackage[final]{neurips_2019}

% to avoid loading the natbib package, add option nonatbib:
%     \usepackage[nonatbib]{neurips_2019}

\usepackage[utf8]{inputenc} % allow utf-8 input
\usepackage[T1]{fontenc}    % use 8-bit T1 fonts
\usepackage{hyperref}       % hyperlinks
\usepackage{url}            % simple URL typesetting
\usepackage{booktabs}       % professional-quality tables
\usepackage{amsfonts}       % blackboard math symbols
\usepackage{nicefrac}       % compact symbols for 1/2, etc.
\usepackage{microtype}      % microtypography
\usepackage{wrapfig}
\usepackage{amssymb,stmaryrd}
\usepackage{amsmath,amssymb,amsthm,amscd,epsfig,amsfonts,rotating,bm}
\usepackage{graphicx}
\usepackage{epsfig}
\usepackage{multirow}
\usepackage[ruled, vlined]{algorithm2e}
\usepackage[table]{xcolor}
\usepackage{enumitem}
\usepackage[font=small,labelfont=bf]{caption}

\makeatletter
\newtheorem*{rep@theorem}{\rep@title}
\newcommand{\newreptheorem}[2]{%
\newenvironment{rep#1}[1]{%
 \def\rep@title{#2 \ref{##1}}%
 \begin{rep@theorem}}%
 {\end{rep@theorem}}}
\makeatother

\newtheorem{defn}{Definition}
\newreptheorem{defn}{Definition}
\newtheorem{thm}{Theorem}
\newreptheorem{thm}{Theorem}

\newreptheorem{cor}{Corollary}

\DeclareMathOperator*{\argmax}{arg \, max}

\DeclareMathOperator*{\expe}{\mathbb{E}}

\definecolor{MyDarkGreen}{rgb}{0.02,0.6,0.02}

\title{A Generalized Algorithm for Multi-Objective \\ Reinforcement Learning and Policy Adaptation}

% The \author macro works with any number of authors. There are two commands
% used to separate the names and addresses of multiple authors: \And and \AND.
%
% Using \And between authors leaves it to LaTeX to determine where to break the
% lines. Using \AND forces a line break at that point. So, if LaTeX puts 3 of 4
% authors names on the first line, and the last on the second line, try using
% \AND instead of \And before the third author name.

\author{%
  Runzhe Yang\\
  Department of Computer Science \\
  Princeton University \\
%   Princeton, NJ 08544 \\
  \texttt{runzhey@cs.princeton.edu}\\
  \And
  Xingyuan Sun\\
  Department of Computer Science \\
  Princeton University \\
%   Princeton, NJ 08544 \\
  \texttt{xs5@cs.princeton.edu}\\
  \And
  Karthik Narasimhan \\
  Department of Computer Science \\
  Princeton University \\
%   Princeton, NJ 08544 \\
  \texttt{karthikn@cs.princeton.edu}\\
}

% \author{%
%  Runzhe Yang,~~ Xingyuan Sun,~~ Karthik Narasimhan\\
%   Department of Computer Science \\
%   Princeton University \\
% %   Princeton, NJ 08544 \\
%   \texttt{\{runzhey, xs5, karthikn\}@cs.princeton.edu}\\
% %   \And
% %   Xingyuan Sun\\
% %   Department of Computer Science \\
% %   Princeton University \\
% % %   Princeton, NJ 08544 \\
% %   \texttt{xs5@cs.princeton.edu}\\
% %   \And
% %   Karthik Narasimhan \\
% %   Department of Computer Science \\
% %   Princeton University \\
% % %   Princeton, NJ 08544 \\
% %   \texttt{karthikn@cs.princeton.edu}\\
% }

\begin{document}

\maketitle

\begin{abstract}
We introduce a new algorithm for multi-objective reinforcement learning (MORL) with linear preferences, with the goal of enabling few-shot adaptation to new tasks. In MORL, the aim is to learn policies over multiple competing objectives whose relative importance (\emph{preferences}) is unknown to the agent. While this alleviates dependence on scalar reward design, the expected return of a policy can change significantly with varying preferences, making it challenging to learn a single model to produce optimal policies under different preference conditions. We propose a generalized version of the Bellman equation to learn a single parametric representation for optimal policies over the space of all possible preferences. After an initial learning phase, our agent can execute the optimal policy under any given preference, or automatically infer an underlying preference with very few samples. Experiments across four different domains demonstrate the effectiveness of our approach.\footnote{Code is available at \url{https://github.com/RunzheYang/MORL}}
\end{abstract}

\section{Introduction}
\label{s:introduction}
\begin{wrapfigure}{R}{0.5\textwidth}
    \centering
    \vspace{-1em}
    \includegraphics[width=0.48\textwidth]{./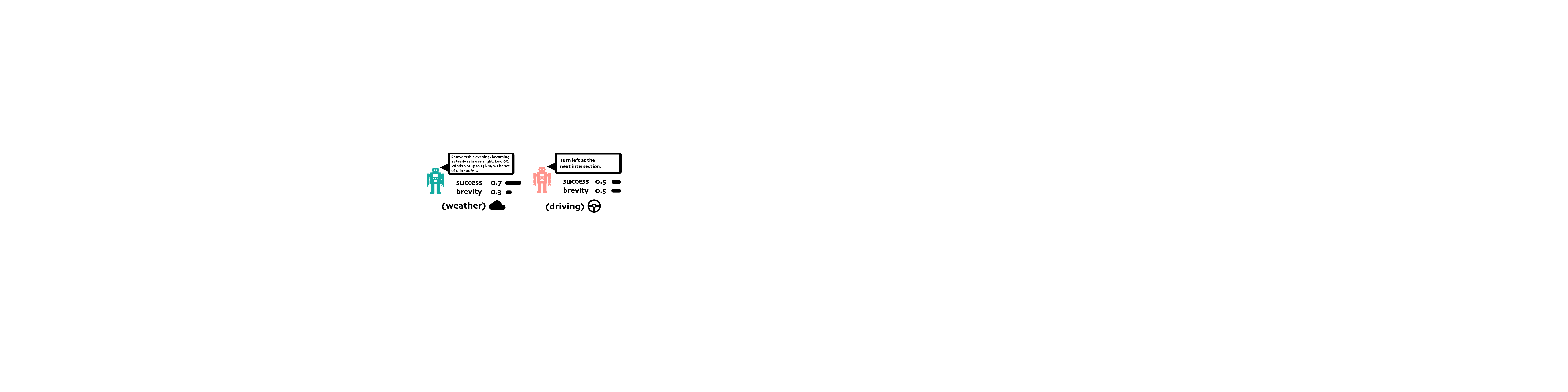}
    \vspace{-0.5em}
    \caption{\small Task-oriented dialogue policy learning is a real-life example of unknown linear preference scenario. Users may expect either briefer dialogue or more informative dialogue depending on the task.}
    \vspace{-0.5em}
    \label{fig:scenario}
%    \vspace{-10pt}
% \end{figure}
\end{wrapfigure}

In recent years, there has been increased interest in the paradigm of multi-objective reinforcement learning (MORL), which deals with learning control policies to simultaneously optimize over several criteria. Compared to traditional RL, where the aim is to optimize for a scalar reward, the optimal policy in a multi-objective setting depends on the \textit{relative preferences} among competing criteria. For example, consider a virtual assistant (Figure~\ref{fig:scenario}) that can communicate with a human to perform a specific task (e.g., provide weather or navigation information). Depending on the user's relative preferences between aspects like success rate or brevity, the agent might need to follow completely different strategies. If success is all that matters (e.g., providing an accurate weather report), the agent might provide detailed responses or ask several follow-up questions. On the other hand, if brevity is crucial (e.g., while providing turn-by-turn guidance), the agent needs to find the shortest way to complete the task. In traditional RL, this is often a fixed choice made by the designer and incorporated into the scalar reward. While this suffices in cases where we know the preferences of a task beforehand, the learned policy is limited in its applicability to scenarios with different preferences. The MORL framework provides two distinct advantages -- (1) reduced dependence on scalar reward design to combine different objectives, 
% \karthik{I changed this a bit - read it and see how you feel} 
% \runzhe{this is great!}
which is both a tedious manual task and can lead to unintended consequences~\cite{amodei2016concrete},  and (2) dynamic adaptation or transfer to related tasks with different preferences. 

% \begin{figure}[t]

% prior work
However, learning policies over multiple preferences under the MORL setting has proven to be quite challenging, with most prior work using one of two strategies~\cite{RoijersVWD13}. The first is to convert the multi-objective problem into a single-objective one through various techniques~\cite{kim2006adaptive,konak2006multi,nakayama2009sequential,lin2005min} and use traditional RL algorithms. These methods only learn an `average' policy over the space of preferences and cannot be tailored to be optimal for specific preferences. The second strategy is to compute a set of optimal policies that encompass the entire space of possible preferences in the domain~\cite{NatarajanT05,BarrettN08,MossalamARW16}. The main drawback of these approaches is their lack of scalability -- the challenge of representing a Pareto front (or its convex approximation) of optimal policies is handled by learning several individual policies, which can grow significantly with the size of the domain. 

% preferences as input, challenges with this.
% We tackle two concrete challenges in MORL by (1) providing theoretical convergence results of multi-objective versions of Q-Learning, and (2) demonstrating effective use of deep neural networks to tackle the feature representation problem.
In this paper, we propose a novel algorithm for learning a \textit{single policy network} that is optimized over the entire space of preferences in a domain. This allows our trained model to produce the optimal policy for any user-specified preference. We tackle two concrete challenges in MORL: (1) provide theoretical convergence results of a multi-objective version of Q-Learning for MORL with linear preferences, and (2) demonstrate effective use of deep neural networks to scale MORL to larger domains.
% Specifically, we use generalized versions of the Bellman equation~\cite{Bellman1957} to derive an algorithm for MORL with \textit{linear preferences}\footnote{If we have $n$ preferences, we assume their importance weights to lie on an $(n-1)$-simplex.} and provide proofs of their convergence. 
Our algorithm is based on two key insights -- (1) the optimality operator for a generalized version of Bellman equation~\cite{Bellman1957} with preferences is a valid contraction, and (2) optimizing for the convex envelope of multi-objective Q-values ensures an efficient alignment between preferences and corresponding optimal policies. We use hindsight experience replay~\cite{AndrychowiczCRS17} to re-use transitions for learning with different sampled preferences and homotopy optimization~\cite{watson1989modern} to ensure tractable learning. In addition, we also demonstrate how to use our trained model to automatically infer hidden preferences on a new task, when provided with just scalar rewards, through a combination of policy gradient and stochastic search over the preference parameters.

We perform empirical evaluation on four different domains -- deep sea treasure (a popular MORL benchmark), a fruit tree navigation task, task-oriented dialog, and the video game Super Mario Bros.
Our experiments demonstrate that our methods significantly outperform competitive baselines on all domains. 
For instance, our envelope MORL algorithm achieves an \textbf \% improvement on average user utility compared to the scalarized MORL in the dialog task and a factor {\bf 2x} average improvement on SuperMario game with random preferences. We also demonstrate that our agent can reasonably infer hidden preferences at test time using very few sampled trajectories. 
\section{Background}
\label{s:framework}
% \karthik{I'm inclined to merge this section with related work and call it 'Background and Related Work'? can explain outer and inner loop methods, etc. in context}

% \paragraph{Multi-Objective MDP}
\label{sec:momdp}
A multi-objective Markov decision process (MOMDP) can be represented by the tuple $\langle \mathcal S, \mathcal A, \mathcal P, \bm r, \Omega, f_{\bm \Omega} \rangle$  with state space $\mathcal S$, action space $\mathcal A$, transition distribution $\mathcal P (s' | s, a)$, vector reward function $\bm r(s, a)$, the space of preferences $\Omega$, and preference functions, e.g., $f_{\bm \omega}(\bm r)$ which produces a scalar \textit{utility} using preference $\bm \omega \in \Omega$. In this work, we consider the class of MOMDPs with linear preference functions, i.e., $f_{\bm \omega}(\bm r (s, a)) = \bm \omega^{\intercal}\bm r (s, a)$. We observe that if $\bm \omega$ is fixed to a single value, this MOMDP collapses into a standard MDP. On the other hand, if we consider all possible returns from an MOMDP, we have a Pareto frontier $\mathcal F^* := \{\bm {\hat r} \mid \not\exists \bm {\hat r}'\geq \bm {\hat r}\}$, 
% \xy{I am a little worried about the notation here: the return $\hat{r}$ is a random variable, but what we really mean by $\hat{r}$ here is all its possible values with positive probability density w.r.t. the Lebesgue measure.}
where the return $\bm {\hat r} := \sum_t\gamma^t \bm r(s_t,a_t)$. And for all possible preference in $\Omega$, we define a convex coverage set (CCS) of the Pareto frontier as: 
\[\mathtt{CCS} := \{\bm {\hat r} \in \mathcal F^* \mid \exists \bm \omega \in \Omega \textrm{ s.t. } \bm \omega^{\intercal}\bm {\hat r} \geq \bm \omega^{\intercal}\bm {\hat r}', \forall \bm {\hat r}' \in \mathcal F^*\},\] 
which contains all returns that provide the maximum cumulative utility.
% We use $\Pi^*_{\mathtt{CCS}}$ to denote the set of all corresponding policies.
Figure \ref{fig:ccs} (a) shows an example of CCS and the Pareto frontier.  The CCS is a subset of the Pareto frontier (points A to H, and K), containing all the solutions on its outer convex boundary (excluding point K). When a specific linear preference $\bm \omega$ is given, the point within the CCS with the largest projection along the direction of the relative importance weights will be the optimal solution (Figure~\ref{fig:ccs}(b)).
% Another key feature of an unknown linear preference scenario is that the linear preference is initially unrevealed, until a later phase of this scenario. Formally, the unknown linear preference scenario contains three consecutive phases: {\em learning phase}, {\em analysis phase}, and {\em execution phase}.

Our goal is to train an agent to recover policies for the \emph{entire CCS} of MOMDP and then adapt to the optimal policy for any given $\bm \omega \in \Omega$ at test time. We emphasize that we are not solving for a single, unknown $\bm \omega$, but instead aim for generalization across the entire space of preferences. Accordingly, our MORL setup has two phases:

\paragraph{Learning phase.} In this phase, the agent learns a set of {\em optimal policies} $\Pi_{\mathcal L}$ corresponding to the entire CCS of the MOMDP, using interactions with the environment and historical trajectories. For each $\pi \in \Pi_{\mathcal L}$, there exists at least one linear preference $\bm \omega$ such that no other policy $\pi'$ generates higher utility under that $\bm \omega$:
\begin{equation*}
    \pi \in \Pi_{\mathcal L} \Rightarrow \exists ~\bm \omega\in \Omega, \mbox{s.t.}~~ \forall\pi'\in \Pi, \bm \omega^{\intercal}    \bm v^{\pi}(s_0) \geq \bm \omega^{\intercal}    \bm v^{\pi'}(s_0),
\end{equation*}
where $s_0$ is a fixed initial state, and $\bm v^{\pi}$ is the value function, i.e., $\bm v^{\pi}(s) = \expe_{\pi}[\hat{\bm r}|s_0=s]$. Given any preference $\bm \omega$, $\Pi_{\mathcal L}(\bm \omega)$ determines the optimal policy.
% The agent in this phase is supposed to acquire a set of policies $\Pi_{\mathcal L} \subseteq \Pi_{\mathtt{CCS}}^*$ such that $\mathcal F_{\Pi_{\mathcal L}} = \mathtt{CCS}$, and $\Pi_{\mathcal L}(\bm \omega)$ determines a policy. Thus for any preference $\bm \omega$, one of the policies in  $\Pi_{\mathcal L}$ is optimal.

\paragraph{Adaptation phase.} After learning, the agent is provided a new task, with either a) a preference $\bm \omega$ specified by a human, or b) an unknown preference, where the agent has to automatically infer $\bm \omega$. 
% a task with unknown preference and has to automatically analyze the underlying preference using as few interactions as possible. This is similar to \textit{preference elicitation} by searching over the space of possible $\bm \omega$s. Alternatively, we may have a user specify a preference $\bm \omega$ based on the trade-off information the agent learned. During execution, the agent needs to respond with a policy $\Pi_{\mathcal L}(\bm \omega)$, using limited computational resources.
Efficiently aligning $\Pi_{\mathcal L}(\bm \omega)$ with the preferred optimal policy is non-trivial since the CCS can be very large. In both cases, the agent is evaluated on how well it can adapt to tasks with unseen preferences. 

\begin{figure}[t]
    \centering
    \includegraphics[width=1.0\textwidth]{./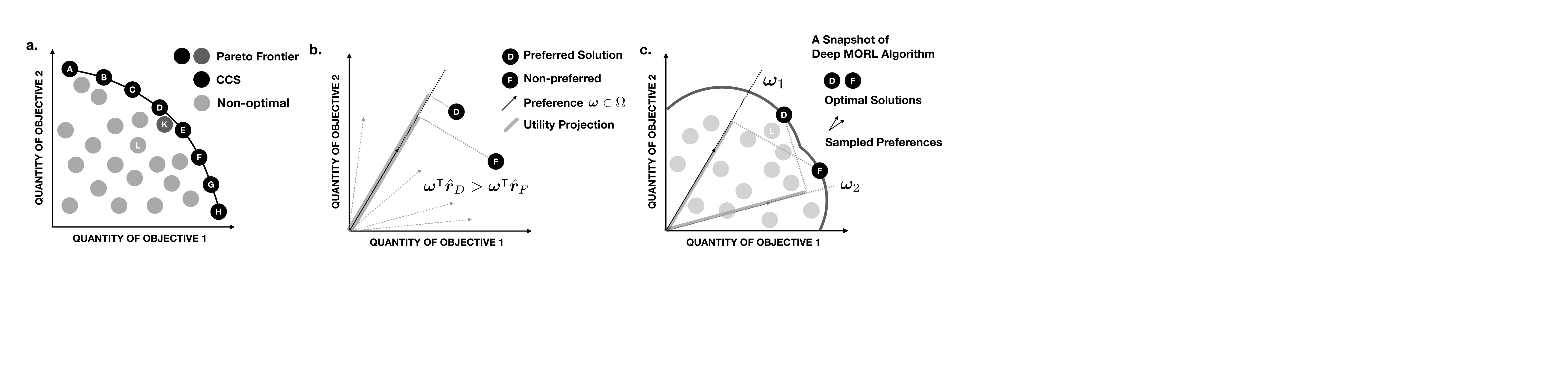}
    \vspace{-1em}
    \caption{\small (a) The Pareto frontier may encapsulate local concave parts (points A-H, plus point K), whereas CCS is a convex subset of Pareto frontier (points A-H). Point L indicates a non-optimal solution. (b) Linear preferences select the optimal solution from CCS with the highest utility, represented by the projection length along preference vector. Arrows are different linear preferences, and points indicate possible returns. Return D has better cumulative utility than return F under the preference in solid line. (c) The scalarized MORL algorithms (e.g., \cite{abels2018dynamic}) find the optimal solutions at a stage while they are not aligned with preference, e.g., two optimal solutions D and F in the CCS, misaligned with preferences $\bm \omega_2$ and $\bm \omega_1$.  The scalarized update cannot use the information of $\max_a Q(s,a,\bm \omega_1)$ (corresponding to F) to update the optimal solution aligned with $\bm \omega_2$ or vice versa. It only searches along $\bm \omega_{1}$ direction leading to non-optimal L, even if solution D has been seen under $\bm \omega_2$. It still requires many iterations for the value-preference alignment.}
    \label{fig:ccs}
    \vspace{-1em}
\end{figure}

% \section{Related Work}
\label{s:related_work}
% \karthik{maybe move related work to after section 3? might be easier to explain differences.}
\subsection{Related Work} 
% \karthik{move to later section?}
\paragraph{Multi-Objective RL} Existing MORL algorithms can be roughly divided into two main categories~\cite{VamplewDBID11,LiuXH15,RoijersVWD13}: {\em single-policy} methods and {\em multiple-policy} methods. Single-policy methods aim to find the optimal policy for a {\em given} preference among the objectives~\cite{MannorS01,TesauroDCKLRL07}. These methods explore different forms of preference functions, including non-linear ones such as the minimum over all objectives or the number of objectives that exceed a certain threshold. However, single-policy methods do not work when preferences are unknown.

\vspace{-0.5em}
Multi-policy approaches learn a {\em set} of policies to obtain the approximate Pareto frontier of optimal solutions. The most common strategy is to perform multiple runs of a single-policy method over different preferences~\cite{NatarajanT05,MoffaertDN13}. Policy-based RL algorithms \cite{PirottaPR15, ParisiPP17}  simultaneously learn the optimal manifold over a set of preferences.
Several value-based reinforcement learning algorithms employ an extended version of the Bellman equation and maintain the convex hull of the discrete Pareto frontier \cite{BarrettN08,HiraokaYM07,IimaK14}. Multi-objective fitted Q-iteration (MOFQI)~\cite{CastellettiPR11,CastellettiPR12} encapsulates preferences as input to a Q-function approximator and uses expanded historical trajectories to learn multiple policies. This allows the agent to construct the optimal policy for any given preference during testing. However, these methods explicitly maintain sets of policies, and hence are difficult to scale up to high-dimensional preference spaces. Furthermore, these methods are designed to work during the learning phase but cannot be easily adapted to new preferences at test time.

\vspace{-0.5em}
\paragraph{Scalarized Q-Learning.} Recent work has proposed the scalarized Q-learning algorithm~\cite{MossalamARW16} which uses a vector value function but performs updates after computing the inner product of the value function with a preference vector. This method uses an outer loop to perform a search over preferences, while the inner loop performs the scalarized updates. Recently, Abels et al.~\cite{abels2018dynamic} extended this to use a single neural network to represent value functions over the entire space of preferences. However, scalarized updates are not sample efficient and lead to sub-optimal MORL policies -- our approach uses a global optimality filter to perform envelope Q-function updates, leading to faster and better learning (as we demonstrate in Figure~\ref{fig:ccs}(c) and Section~\ref{s:results}). 
% \karthik{We need some more discussion on how exactly we differ from Abels. Can you add in some of the text from the rebuttal to better differentiate ourselves from Abels?}\runzhe{Three key contributions distinguish our work from Abels et al.~\cite{abels2018dynamic}:  (1) At algorithmic level, our envelope Q-learning algorithm utilizes the convex envelope of the solution frontier to update parameters of the policy network, which allows our method to quickly align one preference with optimal rewards and trajectories that may have been explored under other preferences. (2) At theoretical level, we introduce a theoretical framework for designing and analyzing value-based MORL algorithms, and convergence proofs for our envelope Q-learning algorithm. (3) At empirical level, we provide new evaluation metrics and benchmark environments for MORL. In terms of experiments, we apply our algorithm to a wider variety of domains including DST, FTN and two complex larger scale domains -- task-oriented dialog and supermario. Our FTN domain is a scaled up, more complex version of Minecart \cite{abels2018dynamic}}.

\vspace{-0.5em}
Three key contributions distinguish our work from Abels et al.~\cite{abels2018dynamic}:  (1) At algorithmic level, our envelope Q-learning algorithm utilizes the convex envelope of the solution frontier to update parameters of the policy network, which allows our method to quickly align one preference with optimal rewards and trajectories that may have been explored under other preferences. (2) At theoretical level, we introduce a theoretical framework for designing and analyzing value-based MORL algorithms, and convergence proofs for our envelope Q-learning algorithm. (3) At empirical level, we provide new evaluation metrics and benchmark environments for MORL and apply our algorithm to a wider variety of domains including two complex larger scale domains -- task-oriented dialog and supermario. Our FTN domain is a scaled up, more complex version of Minecart in \cite{abels2018dynamic}.

\vspace{-0.5em}
\paragraph{Policy Adaptation.} Our policy adaptation scheme is related to prior work in preference elicitation~\cite{conen2001preference,Boutilier02,chen2004survey} or inverse reinforcement learning~\cite{NgR00,PieterN04}. Inverse RL (IRL) aims to learn a scalar reward function from expert demonstrations, or directly imitate the expert's policy without intermediate steps for solving a scalar reward function~\cite{HoE16}. Chajewska et al.~\cite{ChajewskaKO01} proposed a Bayesian version to learn the utility function. IRL is effective when the hidden preference is fixed and expert demonstrations are available. In contrast, we require policy adaptation across various different preferences and do not use any demonstrations.

\section{Multi-objective RL with Envelope Value Updates}
% \label{s:methods}
\label{sec:envelope}

% \karthik{TODO: talk about limitations of current scalarized methods, what are they missing. Challenges and opportunities of using vectorized updates. We explore vectorized updates, prove convergence, provide new algo}

% While scalarized Q-learning produces reasonable policies for MORL tasks, they are not sample efficient in dealing with the exponential blow-up in value space due to the added dimension of preferences ($\bm \omega$).

In this section, we propose a new algorithm for multi-objective RL called \emph{envelope Q-learning}. Our key idea is to use vectorized value functions and perform \textit{envelope updates}, which utilize the convex envelope of the solution frontier to update parameters. This is in contrast to approaches like \textit{scalarized Q-Learning}, which perform value function updates using only a single preference at a time. Since we learn a set of policies simultaneously over multiple preferences, and our concept of optimality is defined on vectorized rewards, existing convergence results from single-objective RL no longer hold. Hence, we first provide a theoretical analysis of our proposed update scheme below followed by a sketch of the resulting algorithm.
% \todo{@Karthik: emphasize challenge a bit more.}

\paragraph{Bellman operators.}
The standard Q-Learning~\cite{WatkinsD92} algorithm for single-objective RL utilizes the Bellman optimality operator $T$:
\begin{equation}
\label{eq:single-obj-eval}
    (T Q)(s, a) := r(s, a) + \gamma \mathbb E_{s'\sim\mathcal P(\cdot | s,a)}(H Q)(s').
\end{equation}
where the operator $H$ is defined by $(H  Q)(s') := \sup_{a'\in\mathcal A}  Q(s', a')$ is an optimality filter over the Q-values for the next state $s'$.

We extend this to the MORL case by considering a value space $\mathcal Q \subseteq (\Omega \rightarrow \mathbb R^m)^{\mathcal S \times \mathcal A}$, containing all bounded functions $\bm Q(s,a,\omega)$ -- estimates of expected total rewards under $m$-dimensional preference ($\bm \omega$) vectors. We can define a corresponding value metric $d$ as:
\begin{equation}
    d(\bm Q, \bm Q') := \sup_{\substack{s\in\mathcal S, a\in\mathcal A\\ \bm \omega\in\Omega}} |\bm \omega^{\intercal} (\bm Q(s, a, \bm \omega) - \bm Q'(s, a, \bm \omega))|.
\end{equation}
Since the identity of indiscernibles~\cite{naylor2000linear} does not hold, we note that $d$ forms a complete pseudo-metric space, and refer to  $\bm Q$ as a \textit{Multi-Objective Q-value (MOQ) function}.
Given a policy $\pi$ and sampled trajectories $\tau$, we first define a multi-objective evaluation operator $\mathcal T_{\pi}$ as:
\begin{equation}
    (\mathcal T_{\pi}\bm Q)(s,a, \bm \omega) := \bm r(s,a) + \gamma \mathbb E_{\tau \sim (\mathcal P, \pi)} \bm Q(s', a', \bm \omega).
\end{equation}
% where $\tau$ is a sampled trajectory using policy $\pi$.

We then define an optimality filter $\mathcal H$ for the MOQ function as $(\mathcal H \bm Q)(s, \bm \omega) := \arg_{Q}\sup_{a\in\mathcal A, \bm \omega' \in \Omega} \bm \omega^{\intercal}\bm Q(s, a, \bm \omega')$, where the $\arg_{Q}$ takes the multi-objective value corresponding to the supremum (i.e., $\bm Q(s, a, \bm \omega')$ such that $(a, \bm \omega') \in \arg\sup_{a\in\mathcal A, \bm \omega' \in \Omega} \bm \omega^{\intercal}\bm Q(s, a, \bm \omega')$). 
% \runzhe{The return of $\arg_{Q}$ depends on which $\bm \omega$ is chosen for scalarization, and we keep $\arg_{Q}$ for simplicity.}
The return of $\arg_{Q}$ depends on which $\bm \omega$ is chosen for scalarization, and we keep $\arg_{Q}$ for simplicity.
This can be thought of as generalized version of the single-objective optimality filter in Eq.~\ref{eq:single-obj-eval}. Intuitively, $\mathcal H$ solves the convex envelope (hence the name envelope Q-learning) of the current solution frontier to produce the $\bm Q$ that optimizes utility given state $s$ and preference $\bm \omega$. This allows for more optimistic Q-updates compared to using just the standard Bellman filter ($H$) that optimizes over actions only -- this is the update used by scalarized Q-learning~\cite{abels2018dynamic}.
We can then define a \textit{multi-objective optimality operator} $\mathcal T$ as:
\begin{equation}
    (\mathcal T \bm Q)(s, a, \bm \omega) := \bm r(s, a) + \gamma \mathbb E_{s'\sim\mathcal P(\cdot | s,a)}(\mathcal H \bm Q)(s', \bm \omega).
\end{equation}
% This operator is basically a vectorized form of the single-objective one (Eq.~\ref{eq:single-obj-eval}), along with the convex

The following theorems demonstrate the feasibility of using our optimality operator for multi-objective RL. Proofs for all the theorems are provided in the supplementary material.
\begin{thm}[Fixed Point of Envelope Optimality Operator]
    Let $\bm Q^* \in \mathcal Q$ be the preferred optimal value function in the value space, such that 
    \begin{equation}
        \bm Q^*(s, a, \bm \omega) = \arg_{Q} \sup_{\pi\in \Pi}\bm \omega^{\intercal}\mathbb E_{\substack{\tau\sim (\mathcal P, \pi) \\| s_0=s, a_0=a}}\left[\sum_{t=0}^{\infty}\gamma^t \bm r(s_t,a_t) \right],
    \end{equation} 
    where the $\arg_{Q}$ takes the multi-objective value corresponding to the supremum. Then, $\bm Q^* = \mathcal T \bm Q^*$.
    \label{thm:fix-envelope}
\end{thm}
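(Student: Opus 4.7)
}
The plan is to verify directly that substituting $\bm Q^*$ into $\mathcal T$ reproduces $\bm Q^*$, by combining a one-step unrolling of the infinite-horizon return in the definition of $\bm Q^*$ with a multi-objective version of Bellman's principle of optimality. First, I would fix $(s,a,\bm\omega)$ and peel off the first step inside the expectation: writing $\sum_{t=0}^{\infty}\gamma^t\bm r(s_t,a_t)=\bm r(s,a)+\gamma\sum_{t=0}^{\infty}\gamma^t \bm r(s_{t+1},a_{t+1})$ and pushing the expectation over $s'\sim\mathcal P(\cdot\mid s,a)$ to the outside. This immediately gives $\bm\omega^\intercal\bm r(s,a)$ as the instantaneous term and leaves the continuation inside a supremum over policies $\pi$ starting from $s'$.

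Next I would invoke the principle of optimality for the scalarized objective: since $\bm\omega$ is fixed, the scalarized MOMDP collapses to a standard MDP and a continuation policy can be chosen independently for each successor $s'$, so
\begin{equation*}
\sup_{\pi\in\Pi}\bm\omega^{\intercal}\mathbb E_{\tau\mid s_0=s,a_0=a}\!\left[\sum_{t=0}^{\infty}\gamma^t \bm r(s_t,a_t)\right]
=\bm\omega^{\intercal}\bm r(s,a)+\gamma\,\mathbb E_{s'\sim\mathcal P(\cdot\mid s,a)}\!\Big[\sup_{\pi'\in\Pi}\bm\omega^{\intercal}\bm v^{\pi'}(s')\Big].
\end{equation*}
The remaining task is to recognize the inner supremum as the scalarization of $(\mathcal H\bm Q^*)(s',\bm\omega)$. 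For this step I would argue both inclusions: (i) any $\pi'$ decomposes into a first action $a'$ followed by a continuation whose vector value is $\bm Q^{\pi'}(s',a')$, which is dominated under $\bm\omega$ by $\bm Q^*(s',a',\bm\omega)$; and (ii) for every $(a',\bm\omega')$ there is a policy $\pi'$ achieving $\bm Q^*(s',a',\bm\omega')$, so $\sup_{a',\bm\omega'}\bm\omega^\intercal\bm Q^*(s',a',\bm\omega')\le\sup_{\pi'}\bm\omega^\intercal\bm v^{\pi'}(s')$. Equality then follows because varying $\bm\omega'$ together with $a'$ realizes every $\bm\omega$-optimal combination.

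Having matched the \emph{scalar} values, I would finally lift the identity to the vector-valued $\arg_Q$ selection. Because $\bm Q^*$ is itself defined through $\arg_Q$, the vector returned by $(\mathcal H\bm Q^*)(s',\bm\omega)$ is precisely the continuation value associated with the maximizing $(a',\bm\omega')$, so plugging into $\mathcal T$ yields a vector whose components sum (with the one-step reward and the discounted expectation) to the same vector selected by $\arg_Q$ in the definition of $\bm Q^*(s,a,\bm\omega)$. The main obstacle I anticipate is this last bookkeeping with $\arg_Q$: the supremum defining $\bm Q^*$ is over \emph{scalarized} returns but the operator $\mathcal T$ manipulates \emph{vector} values, so one must verify that whenever multiple maximizers exist the $\arg_Q$ convention is used consistently on both sides (and, if necessary, assume attainment or work with a selection rule that breaks ties identically). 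Once this selection is pinned down, $\bm Q^*=\mathcal T\bm Q^*$ follows componentwise in $(s,a,\bm\omega)$.
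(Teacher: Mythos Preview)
Your scalar-level argument---unrolling one step, invoking the principle of optimality for the fixed-$\bm\omega$ scalarized MDP, and then matching $\sup_{a',\bm\omega'}\bm\omega^\intercal\bm Q^*(s',a',\bm\omega')$ with $\sup_{\pi'}\bm\omega^\intercal\bm v^{\pi'}(s')$ via two-sided inclusions---is exactly the computation the paper performs; its ``sandwich inequality'' is your (i)--(ii). Where you diverge is in the final step: you try to lift the scalar identity to a genuine vector equality $\bm Q^*=\mathcal T\bm Q^*$, and you correctly flag the $\arg_Q$ tie-breaking as the obstacle. The paper simply does not attempt this lift. It opens by observing that, since $d$ is only a pseudo-metric, the fixed-point statement is to be read as $d(\bm Q^*,\mathcal T\bm Q^*)=0$, which is equivalent to $\bm\omega^\intercal\mathcal T\bm Q^*(s,a,\bm\omega)=\bm\omega^\intercal\bm Q^*(s,a,\bm\omega)$ for every $(s,a,\bm\omega)$; the chain of scalar equalities then finishes the proof directly. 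So the obstacle you anticipate is genuine, and the paper's resolution is precisely to stay at the scalarized level throughout---which is also all that the downstream Theorems~\ref{thm:contraction-envelope} and~\ref{thm:generalized-fix} require.
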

Theorem \ref{thm:fix-envelope} tells us the preferred optimal value function is a fixed-point of $\mathcal T$ in the value space. 

\begin{thm}[Envelope Optimality Operator is a Contraction]
    Let $\bm Q, \bm Q'$ be any two multi-objective Q-value functions in the value space $\mathcal Q$ as defined above. Then, the Lipschitz condition $d(\mathcal T \bm Q, \mathcal T \bm Q') \leq \gamma d(\bm Q, \bm Q')$ holds, where $\gamma \in [0, 1)$ is the discount factor of the underlying MOMDP $\mathcal M$.
    \label{thm:contraction-envelope}
\end{thm}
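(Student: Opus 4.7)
The plan is to show contractivity of $\mathcal{T}$ by reducing it, through the definition of the operator, to a contractivity statement about the envelope filter $\mathcal{H}$, and then handling $\mathcal{H}$ with the standard ``sup of two functions'' trick adapted to a joint supremum over actions and preferences.

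First I would unpack the difference $(\mathcal T \bm Q)(s,a,\bm\omega) - (\mathcal T \bm Q')(s,a,\bm\omega)$; the reward term $\bm r(s,a)$ cancels, so taking the inner product with $\bm\omega$ gives
\begin{equation*}
\bm\omega^{\intercal}\bigl((\mathcal T \bm Q)(s,a,\bm\omega) - (\mathcal T \bm Q')(s,a,\bm\omega)\bigr)
= \gamma\, \mathbb{E}_{s'\sim\mathcal{P}(\cdot\mid s,a)}\bigl[\bm\omega^{\intercal}\bigl((\mathcal H \bm Q)(s',\bm\omega) - (\mathcal H \bm Q')(s',\bm\omega)\bigr)\bigr].
\end{equation*}
Pulling $|\cdot|$ inside the expectation via Jensen, it suffices to show that $|\bm\omega^{\intercal}((\mathcal H \bm Q)(s',\bm\omega) - (\mathcal H \bm Q')(s',\bm\omega))| \le d(\bm Q,\bm Q')$ pointwise in $s'$ and $\bm\omega$.

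The core step is to bound this filter difference. I would pick $(a_1,\bm\omega_1)\in\arg\sup_{a,\bm\omega'}\bm\omega^{\intercal}\bm Q(s',a,\bm\omega')$ and $(a_2,\bm\omega_2)\in\arg\sup_{a,\bm\omega'}\bm\omega^{\intercal}\bm Q'(s',a,\bm\omega')$, so that $\bm\omega^{\intercal}(\mathcal H \bm Q)(s',\bm\omega)=\bm\omega^{\intercal}\bm Q(s',a_1,\bm\omega_1)$ and $\bm\omega^{\intercal}(\mathcal H \bm Q')(s',\bm\omega)=\bm\omega^{\intercal}\bm Q'(s',a_2,\bm\omega_2)$. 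Without loss of generality assume the first is the larger; then by the defining optimality of $(a_2,\bm\omega_2)$ for $\bm Q'$,
\begin{equation*}
0 \le \bm\omega^{\intercal}\bm Q(s',a_1,\bm\omega_1) - \bm\omega^{\intercal}\bm Q'(s',a_2,\bm\omega_2) \le \bm\omega^{\intercal}\bm Q(s',a_1,\bm\omega_1) - \bm\omega^{\intercal}\bm Q'(s',a_1,\bm\omega_1),
\end{equation*}
and the symmetric case follows by swapping roles. The right-hand side is of the form $\bm\omega^{\intercal}(\bm Q(s',a_1,\bm\omega_1)-\bm Q'(s',a_1,\bm\omega_1))$, which is bounded in absolute value by $d(\bm Q,\bm Q')$ after noting a subtle point that I would address carefully: the metric $d$ is defined with $\bm\omega$ playing two roles (both the scalarization weight and the third argument of $\bm Q$), whereas here the scalarization weight is $\bm\omega$ while the third argument is $\bm\omega_1$. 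I would resolve this by reading $d$ as $\sup_{s,a,\bm\omega,\bm\omega'}|\bm\omega^{\intercal}(\bm Q(s,a,\bm\omega')-\bm Q'(s,a,\bm\omega'))|$ (since the three sups are independent), or equivalently note that this is the natural quantity that bounds the filter gap.

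Plugging this back, $|\bm\omega^{\intercal}((\mathcal T \bm Q)(s,a,\bm\omega)-(\mathcal T \bm Q')(s,a,\bm\omega))| \le \gamma\, d(\bm Q,\bm Q')$ for every $(s,a,\bm\omega)$, and taking the supremum yields $d(\mathcal T \bm Q, \mathcal T \bm Q') \le \gamma\, d(\bm Q, \bm Q')$. I anticipate the main obstacle to be the bookkeeping around $\mathcal{H}$: because the envelope filter maximizes jointly over $(a,\bm\omega')$ and returns a whole vector via $\arg_Q$, one has to be careful that the ``optimizer swap'' inequality is applied with the same $(a_1,\bm\omega_1)$ on both sides and that the resulting $\bm\omega^{\intercal}$-gap is indeed controlled by $d$ as defined. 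Everything else is the standard contraction argument for Bellman-type operators.
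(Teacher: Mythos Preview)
Your overall route is the paper's: cancel $\bm r(s,a)$, pull the absolute value through the expectation, rewrite $\bm\omega^{\intercal}(\mathcal H\bm Q)(s',\bm\omega)=\sup_{a,\bm\omega'}\bm\omega^{\intercal}\bm Q(s',a,\bm\omega')$, and then use the ``swap to the other function at the same maximizer'' trick on the joint $\sup$ over $(a,\bm\omega')$. The paper does exactly this, with the same WLOG assumption and the same add--subtract of $\bm\omega^{\intercal}\bm Q'(s',a_1,\bm\omega_1)$.

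The point you single out is real, but your proposed resolution does not go through. After the optimizer swap you are left with $\bigl|\bm\omega^{\intercal}\bigl(\bm Q(s',a_1,\bm\omega_1)-\bm Q'(s',a_1,\bm\omega_1)\bigr)\bigr|$, where the scalarization weight $\bm\omega$ and the third argument $\bm\omega_1$ are genuinely different. The pseudo-metric $d$ in the paper is defined with the \emph{same} $\bm\omega$ in both roles, $d(\bm Q,\bm Q')=\sup_{s,a,\bm\omega}\bigl|\bm\omega^{\intercal}(\bm Q(s,a,\bm\omega)-\bm Q'(s,a,\bm\omega))\bigr|$, so you cannot simply ``read $d$ as $\sup_{s,a,\bm\omega,\bm\omega'}|\bm\omega^{\intercal}(\bm Q(s,a,\bm\omega')-\bm Q'(s,a,\bm\omega'))|$'': those two quantities are not equal in general (the sups over $s,a,\bm\omega$ are independent of each other, but the two occurrences of $\bm\omega$ inside the supremand are tied together by definition, not independent). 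That said, the paper's own proof makes exactly the same leap without comment---its final line writes $\sup_{s',a',\bm\omega}\bigl|\bm\omega^{\intercal}\bm Q(s',a',\bm\omega')-\bm\omega^{\intercal}\bm Q'(s',a',\bm\omega')\bigr|=\gamma\, d(\bm Q,\bm Q')$ with no justification for collapsing the decoupled $(\bm\omega,\bm\omega')$ back to the diagonal. So your argument matches the paper's at the level of rigor it actually provides; you have merely been more honest about where the bookkeeping is delicate.
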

% Since the value distance $d$ is a pseudo-metric, iteratively applying contraction $\mathcal T$ may not necessarily to a desired fixed point. 
%\karthik{might be worth using different symbols for the optimality operator, etc. in naive vs envelope?}

Finally, we provide a generalized version of Banach's Fixed-Point Theorem in the pseudo-metric space.

% \paragraph{Generalized Fixed-Point Theorem}
\label{sec:envelope:theory}
\begin{thm}[Multi-Objective Banach Fixed-Point Theorem]
    If $\mathcal T$ is a contraction mapping with Lipschitz coefficient $\gamma$ on the complete pseudo-metric space $\langle \mathcal Q, d \rangle$, and $\bm Q^*$ is defined as in Theorem \ref{thm:fix-envelope}, then $\lim_{n\rightarrow\infty} d(\mathcal T^n \bm Q, \bm Q^*) = 0$ for any $\bm Q\in \mathcal Q$.
    \label{thm:generalized-fix}
\end{thm}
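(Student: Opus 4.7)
The plan is to adapt the classical proof of the Banach fixed-point theorem, exploiting the two results already in place: $\bm Q^*$ is a fixed point of $\mathcal T$ (Theorem \ref{thm:fix-envelope}) and $\mathcal T$ is a $\gamma$-contraction with respect to $d$ (Theorem \ref{thm:contraction-envelope}). Because the target point is already supplied by Theorem \ref{thm:fix-envelope}, I will bypass the usual Cauchy-sequence-plus-completeness construction that one needs when producing a fixed point from scratch, and instead directly estimate $d(\mathcal T^n \bm Q, \bm Q^*)$ as $n\to\infty$. Completeness of $\langle \mathcal Q, d\rangle$ is therefore used only implicitly through the fact that $\bm Q^*$ lives in $\mathcal Q$.

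The argument is then essentially three lines. First, Theorem \ref{thm:fix-envelope} gives $\mathcal T \bm Q^* = \bm Q^*$, so by a trivial induction $\mathcal T^n \bm Q^* = \bm Q^*$ for every $n \geq 0$. Second, I iterate the contraction inequality of Theorem \ref{thm:contraction-envelope} applied at each step to the pair $(\mathcal T^{k}\bm Q, \mathcal T^{k}\bm Q^*)$, noting that the Lipschitz constant $\gamma$ does not depend on $k$:
\begin{equation*}
d(\mathcal T^n \bm Q, \bm Q^*) \;=\; d(\mathcal T^n \bm Q, \mathcal T^n \bm Q^*) \;\leq\; \gamma\, d(\mathcal T^{n-1}\bm Q, \mathcal T^{n-1}\bm Q^*) \;\leq\; \cdots \;\leq\; \gamma^n d(\bm Q, \bm Q^*).
\end{equation*}
Third, $d(\bm Q, \bm Q^*)$ is finite because $\mathcal Q$ consists of bounded functions and $\Omega$ is a bounded preference set, so $|\bm \omega^{\intercal}(\bm Q(s,a,\bm\omega) - \bm Q^*(s,a,\bm\omega))|$ is uniformly bounded over $(s,a,\bm\omega)$. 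Since $\gamma \in [0,1)$, the right-hand side tends to $0$, which establishes the claim.

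The one conceptual point worth flagging, and the only place where the usual Banach proof needs to be handled with care, is that $d$ is only a pseudo-metric: two MOQ functions can have $d$-distance zero without being pointwise equal. This means the fixed point of $\mathcal T$ is not unique as a function in $\mathcal Q$, but only unique up to the equivalence relation $\{(\bm Q, \bm Q'): d(\bm Q, \bm Q') = 0\}$. The theorem as stated sidesteps this by asking only for convergence to the specific $\bm Q^*$ of Theorem \ref{thm:fix-envelope}, not for uniqueness, so no additional quotient argument is required. I do not expect any serious obstacle beyond bookkeeping in the induction and verifying the boundedness step that keeps $d(\bm Q,\bm Q^*)$ finite at the base of the geometric bound.
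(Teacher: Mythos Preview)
Your argument is correct, and it is genuinely different from the paper's route. The paper does not exploit Theorem~\ref{thm:fix-envelope} up front; instead it runs the classical Banach construction inside the pseudo-metric space: it first derives the estimate $d(\bm Q,\bm Q')\le [d(\mathcal T\bm Q,\bm Q)+d(\mathcal T\bm Q',\bm Q')]/(1-\gamma)$, uses it to show $\{\mathcal T^n\bm Q\}$ is Cauchy, invokes completeness to obtain a limit $\bm Q^\diamond$ with $d(\mathcal T\bm Q^\diamond,\bm Q^\diamond)=0$, and only then argues by contradiction that $d(\bm Q^\diamond,\bm Q^*)=0$. Your approach short-circuits all of this by feeding the already-available fixed point $\bm Q^*$ directly into the contraction inequality, yielding the geometric bound $d(\mathcal T^n\bm Q,\bm Q^*)\le \gamma^n d(\bm Q,\bm Q^*)$ in one line and never touching completeness. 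What the paper's longer route buys is a self-contained existence-and-uniqueness statement for fixed points in a pseudo-metric space (it would stand even without Theorem~\ref{thm:fix-envelope}); what your route buys is brevity and transparency given that Theorem~\ref{thm:fix-envelope} is already in hand. One small caveat: the proof of Theorem~\ref{thm:fix-envelope} in the appendix actually establishes only $d(\mathcal T\bm Q^*,\bm Q^*)=0$ rather than literal equality of functions, but your induction step survives this weakening with a one-line triangle-inequality patch, so it is not a real gap.
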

% \karthik{Should we call this multi-objective Banach Thm?}

Theorems \ref{thm:fix-envelope}-\ref{thm:generalized-fix} guarantee that iteratively applying optimality operator $\mathcal T$ on any MOQ-value function will terminate with a function $\bm Q$ that is equivalent to $\bm Q^*$ under the measurement of pseudo-metric $d$. These $\bm Q$s are as good as $\bm Q^*$ since they all have the same utilities for each $\bm \omega$, and will only differ when the utility corresponds to a recess in the frontier (see Figure \ref{fig:ccs}(c) for an example, at the recess, either D or F is optimal).
% \karthik{Is this because we assume convexity for this proof?}

Maintaining the envelope $\sup_{{\bm \omega'}} {\bm \omega^{\intercal}}Q(\cdot,\cdot,{\bm \omega'})$ allows our method to quickly align one preference with optimal rewards and trajectories that may have been explored under other preferences, while scalarized updates that optimizes the scalar utility cannot use the information of $\max_a Q(s,a, {\bm \omega'})$ to update the optimal solution aligned with a different ${\bm \omega}$.
As illustrated in Figure 2 (c), assuming we have found two optimal solutions D and F in the CCS, misaligned with preferences $\bm \omega_2$ and $\bm \omega_1$. The scalarized update cannot use the information of $\max_a Q(s,a,\bm \omega_1)$ (corresponding to F) to update the optimal solution aligned with $\bm \omega_2$ or vice versa. It only searches along $\bm \omega_{1}$ direction leading to non-optimal L, even if solution D has been seen under $\bm \omega_2$. Hence, the envelope updates can have better sample efficiency in theory, as is also seen from the empirical results.

\begin{wrapfigure}{R}{0.53\textwidth}
{
\centering
\begin{minipage}[c]{0.53\textwidth}
\vspace{-1em}
\begin{algorithm}[H] %[t]
\small
    \caption{Envelope MOQ-Learning}
    \label{algo:envelope}
    \KwIn{a preference sampling distribution $\mathcal D_{\omega}$,  path $p_{\lambda}$ for the balance weight $\lambda$ increasing from 0 to 1.}
    Initialize replay buffer $\mathcal D_{\tau}$, network $\bm Q_{\theta}$, and $\lambda = 0$.\\
    \For{episode = $1, \dots, M$} {
        Sample a linear preference $\bm \omega \sim \mathcal D_{\omega}$.\\
        \For{t = $0, \dots, N$} {
        	Observe state $s_t$.\\
            Sample an action $\epsilon$-greedily:
%                \vspace{-1em}
                \[a_t = \begin{cases}
                    \textrm{random action in } \mathcal A, & \textrm{w.p. ~} \epsilon;\\ 
                    \max_{a\in\mathcal A} \bm \omega^{\intercal} \bm Q(s_t, a, \bm \omega; \theta), & \textrm{w.p ~} 1-\epsilon.
                \end{cases}\]\\
%				\vspace{-1em}
            Receive a vectorized reward $\bm r_t$ and observe $s_{t+1}$.\\
            Store transition $(s_t, a_t, \bm r_t, s_{t+1})$ in $\mathcal D_{\tau}$.\\
            \If{update}{
                Sample $N_{\tau}$ transitions $(s_j, a_j, \bm r_j, s_{j+1})\sim\mathcal D_{\tau}$.\\
                Sample $N_{\omega}$ preferences $W = \{\bm \omega_{i}\sim \mathcal D_{\omega}\}$.\\
                Compute $y_{ij} = (\mathcal T \bm Q)_{ij} = $
                {\small
                \[\begin{cases}
                    \bm r_j, \hfill \hfill \hfill \hfill \hfill \hfill \hfill \hfill \hfill \hfill \hfill   
                    \textrm {for terminal~} s_{j+1}; \\
                    \bm r_j + \gamma \arg_{Q}\displaystyle \max_{\substack{a\in\mathcal A, \\ \bm \omega'\in W}} \bm \omega^{\intercal}_i\bm Q(s_{j+1}, a, \bm \omega'; \theta), \hfill \textrm {o.w.}
                \end{cases}\]
                }
                for all $ 1 \leq i \leq N_{\omega}$ and  $1 \leq j \leq N_{\tau}$.\\
%                \vspace{-1em}
                Update $Q_{\theta}$ by descending its stochastic gradient according to equations \ref{eq:enve-lossA} and \ref{eq:enve-lossB}:
%				\vspace{-1em}
                \begin{equation*}
                    \nabla_{\theta}L(\theta) = (1-\lambda) \cdot \nabla_{\theta}L^{\mathtt A}(\theta) + \lambda \cdot \nabla_{\theta}L^{\mathtt B}(\theta).
                \end{equation*}\\
                Increase $\lambda$ along the path $p_{\lambda}$.\\
            }
        }
    }
\end{algorithm}
\end{minipage}
% \par
}
\vspace{-20pt}
\end{wrapfigure}
% \vspace{-2em}
\paragraph{Learning Algorithm.} 
\label{sec:envelope-update}
Using the above theorems, we provide a sample-efficient learning algorithm for multi-objective RL (Algorithm~\ref{algo:envelope}). Since our goal is to induce a single model that can adapt to the entire space of $\Omega$, we use one parameterized function to represent $\mathcal Q \subseteq (\Omega \rightarrow \mathbb R^m)^{\mathcal S \times \mathcal A}$. We achieve this by using a deep neural network with $s, \bm\omega$ as input and $|\mathcal A|\times m$ Q-values as output.
%\karthik{@Runzhe: is this consistent?} 
We then minimize the following loss function at each step $k$:\footnote{We use double Q learning with target Q networks following Mnih et al.~\cite{MnihKSRVBGRFOPB15}}
\begin{equation}
    L^{\mathtt A} (\theta) = \mathbb E_{s, a, \bm \omega} \left[ \left\| \bm y - \bm Q(s, a, \bm \omega; \theta) \right\|_2^2 \right],
    \label{eq:enve-lossA}
\end{equation}
where ${\bm y = \expe_{s'}[\bm r + \gamma \arg_{Q}\max_{a, \bm \omega'} \bm \omega^{\intercal}\bm Q(s', a, \bm \omega'; \theta_k)}]$, which empirically can be estimated by sampling transition $(s, a, s', r)$ from a replay buffer.

% \runzhe{should we explain subscript $i$ and $j$ in the text? I made it clearer in the algorithm 1}

Optimizing $L^{\mathtt A}$ directly is challenging in practice because the optimal frontier contains a large number of discrete solutions, which makes the landscape of loss function considerably non-smooth.
%\karthik{@Runzhe: add in the reason here}. 
% Minimizing the loss function $L^{\mathbb A}$ is trying to drag the vector $\bm Q$ close to $\mathcal T \bm Q$. This ensures the correctness of our algorithm, to predict a $\bm Q$ as the real solution, while this means the square error is hard to be optimized in practice. 
To address this, we use an auxiliary loss function $L^{\mathtt B}$:
\begin{equation}
    L^{\mathtt B}(\theta) = \mathbb E_{s,a,\bm \omega}[\left|\bm \omega^{\intercal}\bm y - \bm \omega^{\intercal} \bm Q(s,a,\bm \omega;\theta)\right|].
    \label{eq:enve-lossB}
\end{equation}

Combined, our final loss function is $L(\theta) = (1-\lambda)\cdot L^{\mathtt A}(\theta) + \lambda \cdot L^{\mathtt B}(\theta)$, where $\lambda$ is a weight to trade off between losses $L^{\mathtt A}$ and $L^{\mathtt B}_k$. We slowly increase the value of $\lambda$ from $0$ to $1$, to shift our loss function from  $L^{\mathtt A}$ to  $L^{\mathtt B}$. This method, known as {\em homotopy optimization}~\cite{watson1989modern}, is effective since for each update step, it uses the optimization result from the previous step as the initial guess. $L^{\mathtt A}$ first ensures the prediction of $\bm Q$ is close to any real expected total reward, although it may not be optimal. $L^{\mathtt B}$ provides an auxiliary pull along the direction with better utility.  

The loss function above has an expectation over $\bm \omega$ -- this entails sampling random preferences in the algorithm. However, since the $\bm \omega$s are decoupled from the transitions, we can increase sample efficiency by using a scheme similar to Hindsight Experience Replay~\cite{AndrychowiczCRS17}.
% When updating the utility-based multi-objective Q-network \runzhe{utility-based MOQ-net is used in scalarized algo.} according to equation \ref{eq:naive-dqn}, we associate each sampled transition $(s_t^i, a_t^i, \bm r_t^i, s_{t+1}^i)$ from replay buffer $\mathcal D_{\tau}$ with $N_{\omega}$ preferences $\{\bm \omega_{1}, \bm \omega_{2}, \dots, \bm \omega_{N_{\omega}}\}$ sampled from $\mathcal D_{\omega}$. 
Furthermore, computing the optimality filter $\mathcal H$ over the entire $\mathcal Q$ is infeasible; instead we approximate this by applying $\mathcal H$ over a minibatch of transitions before performing parameter updates. Further details on our model architectures and implementation details are available in the supplementary material (Section~\ref{app:sec:envelope:update}).

\paragraph{Policy adaptation.}
\label{s:inference}

Once we obtain a policy model $\Pi_{\mathcal L}(\bm \omega)$ from the learning phase, the agent can adapt to any provided preference by simply feeding the $\bm \omega$ into the network. While this is a straightforward scenario, we also consider a more challenging test where only scalar rewards are available and the agent has to uncover a hidden preference $\bm \omega$ while adapting to the new task. For this case, we assume preferences are drawn from a truncated multivariable Gaussian distribution $\mathcal D_{\omega}^m(\mu_1,  \dots, \mu_m; \sigma)$ on an $(m-1)$-simplex, where nonnegative parameters $\mu_1, \dots, \mu_m$ are the means with $\mu_1+\cdots +\mu_m = 1$, and $\sigma$ is a fixed standard deviation for all dimensions. Our goal is then to infer the parameters of this Gaussian distribution, for which we perform a combination of policy gradient (e.g., REINFORCE~\cite{Williams92}) and stochastic search while keeping the policy model fixed. We determine the best preference parameters that maximize the expected return in the target task:
\begin{equation}
\argmax_{\mu_1, \dots,\mu_m}\mathbb E_{\bm \omega \sim \mathcal D_{\omega}^m} \left[\mathbb E_{\tau \sim (\mathcal P, \Pi_{\mathcal L}(\bm \omega)) }\left[\sum_{t=0}^{\infty}\gamma^t r_t(s_t,a_t)\right]\right].
\end{equation}

% This optimization problem can be efficiently solved by stochastic search with policy gradient algorithm such as REINFORCE \cite{Williams92} in few iterations, since there are only $m$ parameters. 
% Starting with an initial guess of $\mu_1,\cdots,\mu_m$, we sample a few trajectories to estimate the expected returns and update the parameters to maximize the objective.
% Since the deviation $\sigma$ is small, the algorithm infers the final  preference as $\bm \omega^* = (\mu_1^*, \dots, \mu_m^*)$.

% In each iteration, we use few samples to estimate the expected returns under certain preference $\hat R_{\omega}  \approx \mathbb E_{\tau \sim (\mathcal P, \Pi_{\mathcal L}(\bm \omega)) }\left[\sum_{t=0}^{\infty}[\gamma^t r_t(s_t,a_t)\right]$, update $(\mu_1,\dots,\mu_m)$ by increasing it with a small step $\eta$, where

% \[\eta\propto \mathbb E_{\bm \omega \sim \mathcal D_{\omega}^m} \left[\nabla_{\mu_1, \dots, \mu_m} \log \mathtt{Pr}_{\mathcal D_{\omega}^{m}}[\bm \omega]\cdot\hat R_{\omega}\right],\]
% and then project it back to $(m-1)$-simplex. 

\section{Experiments}
\label{s:experiments}
% We first describe two quantitative evaluation metrics we use and then demonstrate our experimental results on two synthetic domains, Deep Sea Treasure (DST) and Fruit Tree Navigation (FTN), as well as two complex real domains, Dialog Policy Learning and SuperMario Game. We leave specific model architecture and other domain and implementation details to the supplementary material.
% \subsection{Experiment
\paragraph{Evaluation Metrics.}
\label{sec:metrics}
% \karthik{emphasize that this is evaluation on test, not train tasks}
% We use two quantitative metrics to evaluate the empirical performance of our algorithms:
Three metrics are
to evaluate the empirical performance on test tasks:

\begin{wrapfigure}{R}{0.55\textwidth}
    \vspace{-1.0em}
    \centering
    \includegraphics[width=0.55\columnwidth]{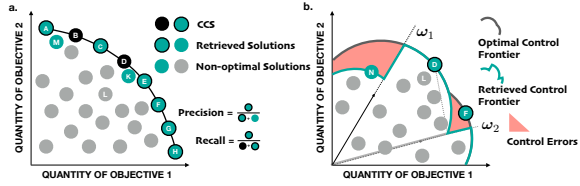}
    \vspace{-0.2em}
    \caption{Illustration of evaluation metrics for MORL. (a.) Coverage ratio (CR) measures an agent's ability to find all the potential optimal solutions in the convex coverage set of Pareto frontier. Dots with black boundary are solutions in CCS, dots without black boundary are non-optimal returns, and dots in green are solutions retrieved by an MORL algorithm. CR is the F1 based on the precision and recall calculation. (b.) Adaptation error (AE) measures an agent's ability of policy adaptation to real-time specified preferences. The gray curve indicates the theoretical limit of the best cumulative utilities under all preference, and the green curve indicates the cumulative utilities of an MORL algorithm. AE is the average gap between these two curves over all preferences.}
  	\vspace{-0.2em}
    \label{fig:eval-metric}
\end{wrapfigure}

a) \textit{Coverage Ratio (CR).}
\label{sec:eval-metic:cr}
The first metric is \textit{coverage ratio} (CR), which evaluates the agent's ability to recover optimal solutions in the convex coverage set ($\mathtt{CCS}$). If $\mathcal F \subseteq \mathbb R^{m}$ is the set of solutions found by the agent (via sampled trajectories),
% and $\mathcal F_{Q} \subseteq \mathbb R^{m}$ is the set of agent's guessed solutions (via predictions from Q-network), 
we define $\mathcal F \cap_{\epsilon}\mathtt{CCS}:=\{x\in \mathcal F \mid \exists y \in \mathtt{CCS} \textrm{~s.t.~} \|x-y\|_{1}/\|y\|_1 \leq \epsilon \}$ as the intersection between these sets with a tolerance of $\epsilon$. 
The CR is then defined as:
\begin{equation}
    \mathtt{CR_{F1}}(\mathcal F) := \mathtt{2} \cdot \frac{ \mathtt{precision\cdot recall}}{\mathtt{precision} + \mathtt{recall}},
\end{equation}
where the $\mathtt{precision} = |\mathcal F \cap_{\epsilon} \mathtt{CCS}|/|\mathtt{\mathcal F}|$, indicating the fraction of optimal solutions among the retrieved solutions, and the $\mathtt{recall} = |\mathcal F \cap_{\epsilon}\mathtt{CCS}|/|\mathtt{CCS}|$, indicating the fraction of optimal instances that have been retrieved over the total amount of optimal solutions (see Figure~\ref{fig:eval-metric}(a)).
% The F1 score is their harmonic mean. 
% In our evaluation of all the three synthetic tasks, we set $\epsilon = 0.00$ for $\mathcal F_{\Pi_{\mathcal L}}$ and $\epsilon = 0.20$ for $\mathcal F_{Q}$. 
% Figure \ref{fig:eval-metric} (a.) illustrates an example of the computation of coverage ratio.

b) \textit{Adaptation Error (AE).}
\label{sec:eval-metic:aq}
Our second metric compares the retrieved control frontier with the optimal one, when an agent is provided with a specific preference $\bm\omega$ during the adaptation phase:
\begin{equation}
    \mathtt{AE}(\mathcal C) := \mathbb E_{\bm \omega \sim \mathcal D_{\omega}} [|\mathcal C(\bm \omega) - \mathcal C_{\textrm{opt}}(\bm \omega)| / \mathcal C_{\textrm{opt}}(\bm \omega)],
\end{equation}
% \karthik{Fix equation above}
which is the expected relative error between optimal control frontier $\mathcal C_{\textrm{opt}}: \Omega \rightarrow \mathbb R$ with $\bm \omega \mapsto \max_{\bm{\hat r}\in \mathtt{CCS}} \bm \omega^{\intercal} \bm{\hat r}$ and the agent's control frontier $\mathcal C_{\pi_{\bm \omega}} = \bm \omega^{\intercal} \bm{\hat r}_{\pi_{\bm \omega}}$. 

c) \textit{Average Utility (UT).} This measures the average utility obtained by the trained agent on randomly sampled preferences and is a useful proxy to AE when we don't have access to the optimal policy.

\paragraph{Domains.}
% \karthik{Can we itemize this just like the baselines?}
% We test our algorithms in four experimental domains: Two synthetic tasks, Deep Sea Treasure and Fruit Tree Navigation, allow us to concretely measure performance of our algorithms and compare with baselines since we have complete knowledge of the optimal set of policies. We also test on the other two complex domains, Task-Oriented Dialog Policy Learning and SuperMario Game to further examine the effectiveness of our algorithms. We list the task descriptions below and leave the domain details and illustrations to the supplementary material.
We evaluate on four different domains (complete details in supplementary material):

\begin{enumerate}[leftmargin=0.2in]
    \item {\bf Deep Sea Treasure (DST)} A classic MORL benchmark~\cite{VamplewDBID11} in which an agent controls a submarine searching for treasures in a $10 \times 11$-grid world while trading off {\tt time-cost} and {\tt treasure-value}. The grid world contains 10 treasures of different values. Their values increase as their distances from the starting point $s_0= (0, 0)$ increase. We ensure the Pareto frontier of this environment to be convex.
    \item {\bf Fruit Tree Navigation (FTN)} A full binary tree of depth $d$ with randomly assigned vectorial reward $\bm r\in \mathbb R^6$ on the leaf nodes. These rewards encode the amounts of six different components of nutrition of the {\em fruits} on the tree: $\{\mathtt{Protein}, \mathtt{Carbs}, \mathtt{Fats}, \mathtt{Vitamins}, \mathtt{Minerals}, \mathtt{Water}\}$. For every leaf node, $\exists \bm \omega$ for which its reward is optimal, thus all leaves lie on the $\mathtt{CCS}$. The goal of our MORL agent is to find a path from the root to a leaf node that maximizes utility for a given preference, choosing between left or right subtrees at every non-terminal node.
    \item {\bf Task-Oriented Dialog Policy Learning (Dialog)}  A modified task-oriented dialog system in the restaurant reservation domain based on PyDial~\cite{UltesRSVKCBMWGY17}. We consider the task success rate and the dialog brevity (measured by number of turns) as two competing objectives of this domain.
    \item {\bf Multi-Objective SuperMario Game (SuperMario)} A multi-objective version of the popular video game Super Mario Bros. We modify the open-source environment from OpenAI gym~\cite{gym-super-mario-bros} to provide vectorized rewards encoding five different objectives: {\tt x-pos}: value corresponding to the difference in Mario's horizontal position between current and last time point,  {\tt time}: a small negative time penalty, {\tt deaths}: a large negative penalty given each time Mario dies
    % \footnote{Mario has up to three lives in one episode.}
    ,  {\tt coin}:  rewards for collecting coins, and {\tt enemy}: rewards for eliminating an enemy. 
\end{enumerate}

\paragraph{Baselines.}  We compare our envelope MORL algorithm with classic and state-of-the-art baselines: 
\begin{enumerate}[leftmargin=0.2in]
    \item {\bf MOFQI} \cite{CastellettiPR12}: Multi-objective fitted Q-iteration where the Q-approximator is a large linear model.
    \item {\bf CN+OLS} \cite{abels2018dynamic}: Conditional neural network with Optimistic Linear Support (OLS) method as the outer loop for selecting $\bm \omega$. This method is first proposed in \cite{MossalamARW16} with multiple neural networks, and we employ an improved version using single conditional neural network \cite{abels2018dynamic}. 
    \item {\bf Scalarized} \cite{abels2018dynamic}: The state-of-the-art algorithm uses scalarized Q-update with double Q-learning, prioritized and hindsight experience replay, which is equivalent to CN+DER proposed in \cite{abels2018dynamic}.
\end{enumerate}

\paragraph{Main Results.}
\label{s:results}

% \begin{table}[t]
%     \centering
%     \caption{Performance comparison of different MORL algorithm in learning and adaptation phases across four experimental domains. The numbers of training episodes are the same for all methods.}
%     \resizebox{\columnwidth}{!}{
%     \begin{tabular}{ccccccc}
%     	\toprule
% 		\multirow{2}{3em}{Method} &  \multicolumn{2}{c}{DST} 
% 		       & \multicolumn{2}{c}{FTN ($d=6$)} 
% 		       & \multicolumn{1}{c}{Dialog} 
% 		       & \multicolumn{1}{c}{SuperMario}\\
% 		\cmidrule(lr){2-3}\cmidrule(lr){4-5}\cmidrule(lr){6-6}\cmidrule(lr){7-7}
% 		{} & {\tt CR} & {\tt AQ} ($\alpha=0.01$) & {\tt CR} & {\tt AQ} ($\alpha=10.0$) & {\tt Avg.}{\tt UT} & {\tt Avg.}{\tt UT}\\
% 		\midrule
% 		MOFQI &  
% 		0.639 $\pm$ 0.421 & 0.417 $\pm$ 0.134 &
% 		0.197 $\pm$ 0.000 & 0.362 $\pm$ 0.001 &
% 		2.17 $\pm$ 0.21 & --
%         \\
% 		CN+OLS&
% 		0.751 $\pm$ 0.163 & 0.743 $\pm$ 0.008 &
% 		-- & -- &
% 		2.53 $\pm$ 0.22$^*$ & --
%         \\
% 		Scalarized &
% 		0.989 $\pm$ 0.024 & 0.992 $\pm$ 0.005 &
% 		0.914 $\pm$ 0.044 & 0.859 $\pm$ 0.035 &
% 		2.38 $\pm$ 0.22 & 301.7 $\pm$ 49.2
%         \\
% 		Envelope (ours) &
% 		{\bf 0.994 $\pm$ 0.001} & {\bf 0.998 $\pm$ 0.000} &
% 		{\bf 0.987 $\pm$ 0.021} & {\bf 0.940 $\pm$ 0.011} &
% 		{\bf 2.65 $\pm$ 0.22} & {\bf 319.7 $\pm$ 34.4}
%         \\
% 		\bottomrule
%     \end{tabular}
%     }
%     \label{tab:overall}
% \end{table}

\begin{table}[t]
    \centering
    
    \resizebox{\columnwidth}{!}{
        \begin{tabular}{ccccccc}
    	\toprule
		\multirow{2}{3em}{Method} &  \multicolumn{2}{c}{DST} 
		       & \multicolumn{2}{c}{FTN ($d=6$)} 
		       & \multicolumn{1}{c}{Dialog$^2$} 
		       & \multicolumn{1}{c}{SuperMario$^2$}\\
		\cmidrule(lr){2-3}\cmidrule(lr){4-5}\cmidrule(lr){6-6}\cmidrule(lr){7-7}
		{} & {\tt CR $\uparrow$ } & {\tt AE $\downarrow$ } & {\tt CR $\uparrow$} & {\tt AE $\downarrow$} & {\tt Avg.}{\tt UT} $\uparrow$ & {\tt Avg.}{\tt UT} $\uparrow$\\
		\midrule
		MOFQI &  
		0.639 $\pm$ 0.421 & 139.6 $\pm$ 25.98 &
		0.197 $\pm$ 0.000 & 0.176 $\pm$ 0.001 &
		2.17 $\pm$ 0.21 & --
        \\
		CN+OLS&
		0.751 $\pm$ 0.163 & 34.63 $\pm$ 1.396 &
		-- & -- &
		2.53 $\pm$ 0.22 & --
        \\
		Scalarized &
		0.989 $\pm$ 0.024 & 0.165 $\pm$ 0.096 &
		0.914 $\pm$ 0.044 & 0.016 $\pm$ 0.005 &
		2.38 $\pm$ 0.22 & 162.7 $\pm$ 77.66
        \\
		Envelope (ours)$^1$ &
		{\bf 0.994 $\pm$ 0.001} & {\bf 0.152 $\pm$ 0.006} &
		{\bf 0.987 $\pm$ 0.021} & {\bf 0.006 $\pm$ 0.001} &
		{\bf 2.65 $\pm$ 0.22} & {\bf 321.2 $\pm$ 146.9}
        \\
		\bottomrule
    \end{tabular}
    }
    \vspace{5pt}
    \caption{
    % \karthik{Can we add up and down arrows next to CR and AE to say higher is better and lower is better for their scores?} 
    Comparison of different MORL algorithms in learning and adaptation phases across four experimental domains. $\uparrow$ indicates higher is better, and $\downarrow$ indicates lower is better for the scores. Each data point indicates the mean and standard deviation over 5 independent training and test runs. $^1$Using the unpaired t-test, we obtain significance scores of $p<0.05$ vs MOFQI on all domains, $p<0.01$ vs CN+OLS on DST and $p<0.05$ vs Scalarized on FTN, Dialog and SuperMario. $^2$Additional results are in the supplementary material \ref{supp:sec:dialog} and  \ref{supp:sec:mario}. 
    % \karthik{add link to section in appendix}
    }
    % \vspace{-1.0em}
    \label{tab:overall}
    \vspace{-10pt}
\end{table}

Table~\ref{tab:overall} shows the performance comparison of different MORL algorithms in four domains. We elaborate training and test details for each domain in supplementary material. In DST and FTN we compare CR and AE as defined in section~\ref{sec:metrics}. In the task-oriented dialog policy learning task, we compare the average utility (Avg. UT) for 5,000 test dialogues with uniformly sampled user preferences %\todo{how many prefs?} 
on success and brevity. In the SuperMario game, the Avg. UT is over 500 test episodes with uniformly sampled preferences. %\todo{how many prefs?}
The envelope algorithm steadily achieves the best performance in terms of both learning and adaptation among all the MORL methods in all four domains. 

\begin{figure*}[h]
	\centering
	\includegraphics[width=0.98\textwidth]{./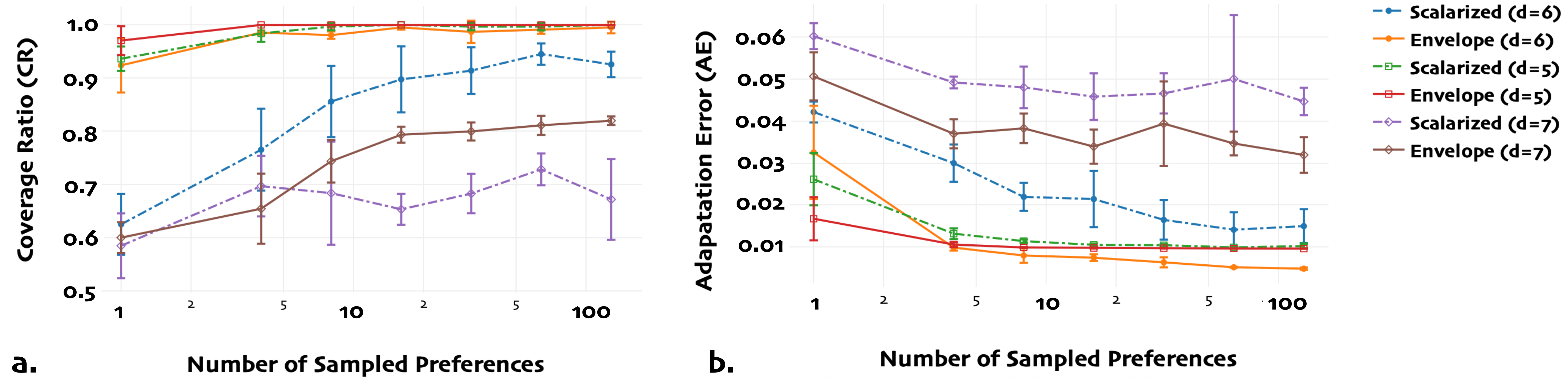}
% 	\vspace{-0.5em}
	\caption{\small Coverage Ratio (CR) and Adaptation Error (AE) comparison of the scalarized algorithm~\cite{abels2018dynamic} and our envelope deep MORL algorithm over 5000 episodes of FTN tasks of depths $d= 5, 6, 7$. Higher CR indicates better coverage of optimal policies, lower AE indicates better adaptation. The error bars are standard deviations of CR and AE estimated from 5 independent runs under each configuration.}
%	\karthik{make labels bigger. }
	\vspace{-1em}
	\label{fig:sample-ft}
\end{figure*}

\paragraph{Scalability.} There are three aspects of the scalability of a MORL algorithm: the ability to deal with (1) large state space, (2) many objectives, and (3) large optimal policy set. 
Unlike other neural network-based methods, MOFQI cannot deal with the large state space, e.g., the video frames in SuperMario Game. The CN+OLS baseline requires solving all the intersection points of a set of hyper-planes thus is computationally intractable in domains with $m>3$ objectives, such as FTN and SuperMario. We denote these entries as ``--" in Table \ref{tab:overall}. Both scalarized and envelope methods can be applied to cases having large state space and reasonably many objectives. However, the size of optimal policy set may affect the performance of these algorithms.
Figure~\ref{fig:sample-ft} shows CR and AE results in three FTN environments with $d=5$ (with 32 solutions), $d=6$ (with 64 solutions), and $d=7$ (with 128 solutions). We observe that both scalarized and envelope algorithms are close to optimal when $d=5$ but both CR and AE values are worse for $d=7$. However, the envelope version is more stable and outperforms the scalarized MORL algorithm in all three cases. These results point to the robustness and scalability of our algorithms.

\paragraph{Sample Efficiency.} 
To compare sample efficiency during the learning phase, we train both our scalarized and envelope deep MORL on the FTN task with different depths for 5,000 episodes. We compute coverage ratio (CR) over 2,000 episodes and adaptation error (AE) over 5,000 episodes. Figure~\ref{fig:sample-ft} shows plots for the metrics computed over a varying number of sampled preferences $N_{\omega}$ (more details can be found in the supplementary material). Each point on the curve is averaged over 5 experiments. We observe that the envelope MORL algorithm consistently has a better CR and AE scores than the scalarized version, with smaller variances. As $N_{\omega}$ increases, CR increases and AE decreases, which shows better use of historical interactions for both algorithms when $N_{\omega}$ is larger. And to achieve the same level AE the envelope algorithm requires smaller $N_\omega$ than the scalarized algorithm. This reinforces our theoretical analysis that the envelope MORL algorithm has better sample efficiency than the scalarized version.

% \begin{figure}[t]
\begin{wrapfigure}{R}{0.45\textwidth}
    \centering
    \vspace{-2em}
    \includegraphics[width=0.45\columnwidth]{./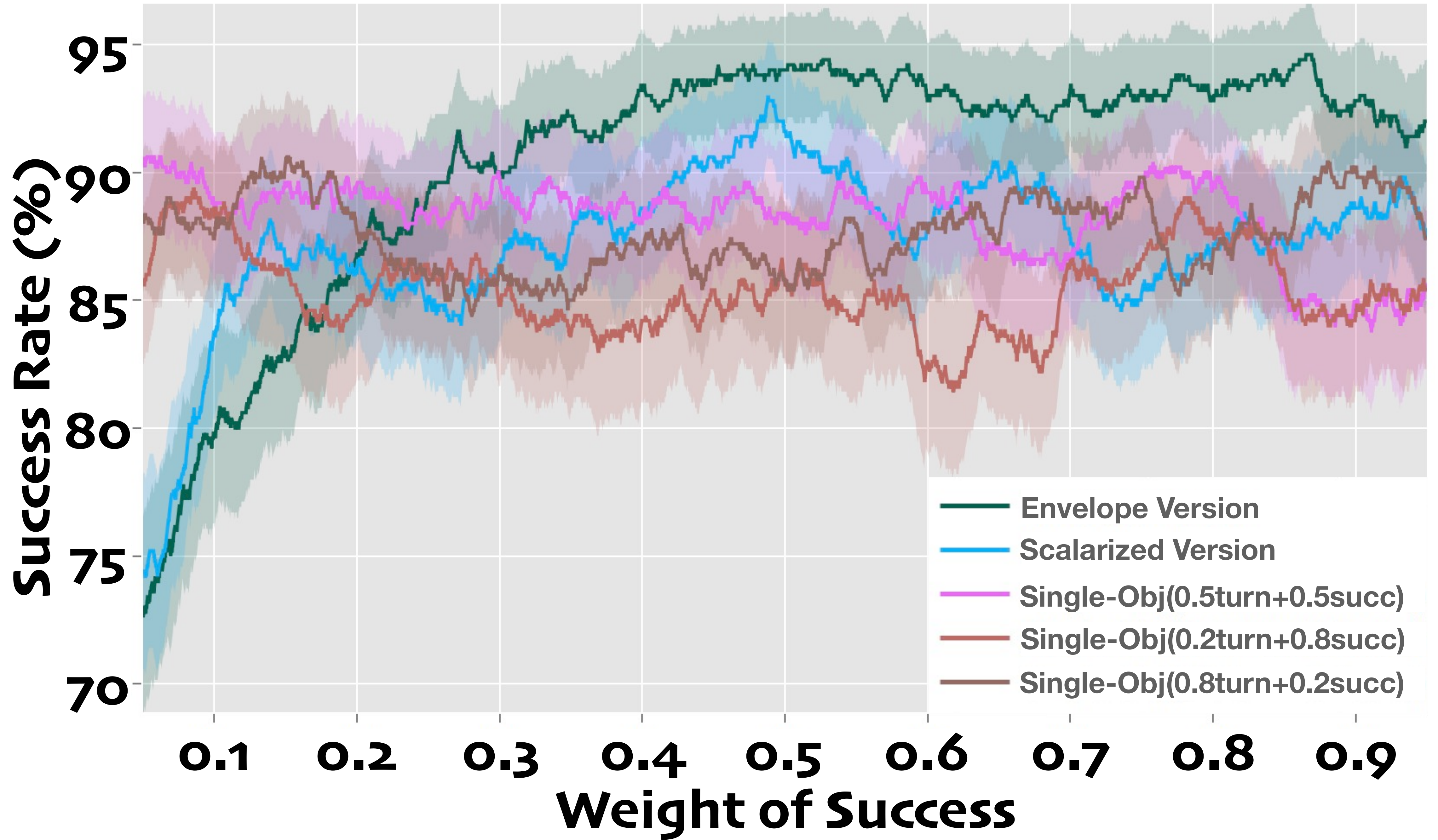}
    \vspace{-1.0em}
    \caption{\small The success-weight curves of task-oriented dialog. Each data point is a moving average of closest around 500 dialogues in the interval of around $\pm$ 0.05 weight of success. The light shadow indicates the standard deviations of 5 independent runs under each configuration.}
    \vspace{-1.0em}
    \label{fig:dialogue-dist-succ}
% \end{figure}
\end{wrapfigure} 

\paragraph{Policy Adaptation.} 

We show how the MORL agents respond to user preference during the adaptation phase in the dialog policy learning task, where the agent must trade off between the dialog success rate and the conversation brevity. Figure~\ref{fig:dialogue-dist-succ} shows the success rate (SR) curves as we vary the weight of the preference on task completion success. The success rates of both MORL algorithms increase as the user's weight on success increases, while those of the single-objective algorithms do not change. This shows that our envelope MORL agent can adapt gracefully to the user's preference. Furthermore, our envelope deep MORL algorithm outperforms other algorithms whenever success is relatively more important to the user (weight $> 0.5$).

\paragraph{Revealing underlying preferences.} 
% \runzhe{I changed the paragraph title to ``revealing underlying preferences" because we do not show the final utility increase here?}
%KN: Sounds good!
Finally, we test the ability of our agent to infer and adapt to unknown preferences on FTN and SuperMario. During the learning phase, the agent does not know the underlying preference, and hence learns a multi-objective policy. During the adaptation phase, our agent performs recovers underlying preferences (as described in Section~\ref{s:inference}) to uncover the underlying preference that maximizes utility.
Table~\ref{tab:ftn-enve-infer} shows the learned preferences for 6 different FTN tasks ($\tt v1$ to $\tt v6$) with unknown one-hot preferences $[1,0,0,0,0,0]$ to $[0,0,0,0,0,1]$, respectively, meaning the agent should only care about one elementary nutrition. These were learned in a few-shot adaption setting, using just 15 episodes.
For the SuperMario Game, we implement an A3C~\cite{MnihBMGLHSK16} variant of our envelope MORL agent (see supplementary material for details). Table~\ref{tab:mario-enve-infer} shows the learned preferences for 5 different tasks ($\tt g1$ to $\tt g5$) with unknown one-hot preferences using just 100 episodes.

We observe that the learned preferences are concentrated on the diagonal, indicating good alignment with the actual underlying preferences. For example, in the SuperMario game variant $\tt g4$, the envelope MORL agent finds the preference with the highest weight ($0.6960$) on the coin objective can best describe the goal of $\tt g4$, which is to collect as many coins as possible. We also tested policy adaptation on the original Mario game using game scores for the scalar rewards. We find that the agent learns preference weights of 0.37 for {\tt x-pos} and 0.23 for {\tt time}, which seems consistent with a common strategy that humans employ -- simply move Mario towards the flag as quickly as possible.

\begin{table}[t]
\parbox{.45\linewidth}{
	\centering
	
	\resizebox{0.48\columnwidth}{!}{
	\begin{tabular}{c|c|c|c|c|c|c}
	& {\tt Protein} & {\tt Carbs} & {\tt Fats} & {\tt Vitamins} & {\tt Minerals} & {\tt Water} \\ \hline
  {\tt v1}
  &\cellcolor{pink!96}\underline{0.9639} 
  &\cellcolor{pink!0}0.
  &\cellcolor{pink!07}0.0361
  &\cellcolor{pink!0}0.
  &\cellcolor{pink!0}0.
  &\cellcolor{pink!0}0.\\ \hline
  {\tt v2}
  &\cellcolor{pink!0}0.
  &\cellcolor{pink!91}\underline{0.9067} &\cellcolor{pink!0}0.
  &\cellcolor{pink!0}0.
  &\cellcolor{pink!0}0.
  &\cellcolor{pink!09}0.0933\\ \hline
  {\tt v3}
  &\cellcolor{pink!05}0.0539
  &\cellcolor{pink!0}0. 
  &\cellcolor{pink!94}\underline{0.9461} 
  &\cellcolor{pink!0}0.
  &\cellcolor{pink!0}0.
  &\cellcolor{pink!0}0.\\ \hline
  {\tt v4}
  &\cellcolor{pink!14}0.1366
  &\cellcolor{pink!04}0.0459
  &\cellcolor{pink!00}0. &\cellcolor{pink!75}\underline{0.7503} &\cellcolor{pink!06}0.0671
  &\cellcolor{pink!00}0.\\ \hline
  {\tt v5}
  &\cellcolor{pink!00}0.
  &\cellcolor{pink!01}0.0148
  &\cellcolor{pink!02}0.0291
  &\cellcolor{pink!04}0.0428 &\cellcolor{pink!75}\underline{0.7503}
  &\cellcolor{pink!17}0.1629\\ \hline
  {\tt v6}
  &\cellcolor{pink!00}0.
  &\cellcolor{pink!05}0.0505
  &\cellcolor{pink!00}0.
  &\cellcolor{pink!00}0.
  &\cellcolor{pink!00}0.
  &\cellcolor{pink!95}\underline{0.9495}\\ \hline
	\end{tabular}}\vspace{5pt}
	\caption{\small Inferred preferences of the envelope MOQ-learning algorithm on different FTN ($d=6$) tasks (\texttt{v1} to \texttt{v6}) after only 15 episodes interaction. The underlying preferences are all ones on the diagonal of the table and zeros for the off-diagonal.}
	\label{tab:ftn-enve-infer}
}
~~~~~~
\parbox{.45\linewidth}{
    \centering
	
	\resizebox{0.43\columnwidth}{!}{
	\begin{tabular}{c|c|c|c|c|c}
	& {\tt x-pos} & {\tt time} & {\tt life} & {\tt coin} & {\tt enemy} \\ \hline
  {\tt g1}
  &\cellcolor{pink!53}\underline{0.5288} &\cellcolor{pink!18}0.1770 &\cellcolor{pink!15}0.1500 &\cellcolor{pink!04}0.0470 &\cellcolor{pink!09}0.0972\\ \hline
  {\tt g2}
  &\cellcolor{pink!19}0.1985
  &\cellcolor{pink!22}\underline{0.2237} &\cellcolor{pink!24}0.2485 
  &\cellcolor{pink!14}0.1422 
  &\cellcolor{pink!18}0.1868\\ \hline
  {\tt g3}
  &\cellcolor{pink!22}0.2196
  &\cellcolor{pink!13}0.1296 &\cellcolor{pink!35}\underline{0.3541} &\cellcolor{pink!18}0.1792
  &\cellcolor{pink!12}0.1175\\ \hline
  {\tt g4}&\cellcolor{pink!02}0.0211 &\cellcolor{pink!24}0.2404 &\cellcolor{pink!02}0.0211 &\cellcolor{pink!69}\underline{0.6960} &\cellcolor{pink!02}0.0211\\ \hline
  {\tt g5}&\cellcolor{pink!07}0.0715&\cellcolor{pink!10}0.1038 &\cellcolor{pink!20}0.2069 &\cellcolor{pink!39}0.3922 &\cellcolor{pink!22}\underline{0.2253}\\ \hline
	\end{tabular}}\vspace{5pt}
	\caption{\small Inferred preferences of the envelope multi-objective A3C algorithm in different Mario Game variants (\texttt{g1} to \texttt{g5}) with 100 episodes. The underlying preferences are all ones on the diagonal of the table and zeros for the off-diagonal.}
	\label{tab:mario-enve-infer}
}
    \vspace{-10pt}
\end{table}
\vspace{-0.5em}
\section{Conclusion}
\vspace{-0.5em}
\label{s:conclusion}
We have introduced a new algorithm for multi-objective reinforcement learning (MORL) with linear preferences,  with the goal of enabling few-shot adaptation of autonomous agents to new scenarios. Specifically, we propose a multi-objective version of the Bellman optimality operator, and utilize it to learn a single parametric representation for all optimal policies over the space of preferences. We provide convergence proofs for our multi-objective algorithm and also demonstrate how to use our model to adapt and elicit an unknown preference on a new task. Our experiments across four different domains demonstrate that our algorithms exhibit effective generalization and policy adaptation.

% \runzhe{Do we need acknowledgements?} \karthik{yes, please add acknowledgements. You can thank the anonymous reviewers and people who helped proofread the paper before submission. This section does not need to be within page limit usually (but check)}

\vspace{-0.5em}
\section*{Acknowledgements}
\vspace{-0.5em}
The authors would like to thank Yexiang Xue at Purdue University, Carla Gomes at Cornell University for helpful discussions on multi-objective optimization, Lu Chen, Kai Yu at Shanghai Jiao Tong University for discussing dialogue applications, Haoran Cai at MIT for helping running a part of synthetic experiments, and anonymous reviewers for constructive suggestions.

\small
\bibliography{ms.bib}

\begin{thebibliography}{10}

\bibitem{amodei2016concrete}
Dario Amodei, Chris Olah, Jacob Steinhardt, Paul Christiano, John Schulman, and
  Dan Man{\'e}.
\newblock Concrete problems in ai safety.
\newblock {\em arXiv preprint arXiv:1606.06565}, 2016.

\bibitem{RoijersVWD13}
Diederik~M. Roijers, Peter Vamplew, Shimon Whiteson, and Richard Dazeley.
\newblock A survey of multi-objective sequential decision-making.
\newblock {\em J. Artif. Intell. Res.}, 48:67--113, 2013.

\bibitem{kim2006adaptive}
Il~Yong Kim and OL~De~Weck.
\newblock Adaptive weighted sum method for multiobjective optimization: a new
  method for pareto front generation.
\newblock {\em Structural and multidisciplinary optimization}, 31(2):105--116,
  2006.

\bibitem{konak2006multi}
Abdullah Konak, David~W Coit, and Alice~E Smith.
\newblock Multi-objective optimization using genetic algorithms: A tutorial.
\newblock {\em Reliability Engineering \& System Safety}, 91(9):992--1007,
  2006.

\bibitem{nakayama2009sequential}
Hirotaka Nakayama, Yeboon Yun, and Min Yoon.
\newblock {\em Sequential approximate multiobjective optimization using
  computational intelligence}.
\newblock Springer Science \& Business Media, 2009.

\bibitem{lin2005min}
JiGuan~G Lin.
\newblock On min-norm and min-max methods of multi-objective optimization.
\newblock {\em Mathematical programming}, 103(1):1--33, 2005.

\bibitem{NatarajanT05}
Sriraam Natarajan and Prasad Tadepalli.
\newblock Dynamic preferences in multi-criteria reinforcement learning.
\newblock In Luc~De Raedt and Stefan Wrobel, editors, {\em Machine Learning,
  Proceedings of the Twenty-Second International Conference {(ICML} 2005),
  Bonn, Germany, August 7-11, 2005}, volume 119 of {\em {ACM} International
  Conference Proceeding Series}, pages 601--608. {ACM}, 2005.

\bibitem{BarrettN08}
Leon Barrett and Srini Narayanan.
\newblock Learning all optimal policies with multiple criteria.
\newblock In William~W. Cohen, Andrew McCallum, and Sam~T. Roweis, editors,
  {\em Machine Learning, Proceedings of the Twenty-Fifth International
  Conference {(ICML} 2008), Helsinki, Finland, June 5-9, 2008}, volume 307 of
  {\em {ACM} International Conference Proceeding Series}, pages 41--47. {ACM},
  2008.

\bibitem{MossalamARW16}
Hossam Mossalam, Yannis~M. Assael, Diederik~M. Roijers, and Shimon Whiteson.
\newblock Multi-objective deep reinforcement learning.
\newblock {\em CoRR}, abs/1610.02707, 2016.

\bibitem{Bellman1957}
Richard~Ernest Bellman.
\newblock {\em Dynamic Programming}.
\newblock Princeton University Press, Princeton, NJ, USA, 1957.

\bibitem{AndrychowiczCRS17}
Marcin Andrychowicz, Dwight Crow, Alex Ray, Jonas Schneider, Rachel Fong, Peter
  Welinder, Bob McGrew, Josh Tobin, Pieter Abbeel, and Wojciech Zaremba.
\newblock Hindsight experience replay.
\newblock In {\em Advances in Neural Information Processing Systems 30: Annual
  Conference on Neural Information Processing Systems 2017, 4-9 December 2017,
  Long Beach, CA, {USA}}, pages 5055--5065, 2017.

\bibitem{watson1989modern}
Layne~T Watson and Raphael~T Haftka.
\newblock Modern homotopy methods in optimization.
\newblock {\em Computer Methods in Applied Mechanics and Engineering},
  74(3):289--305, 1989.

\bibitem{abels2018dynamic}
Axel Abels, Diederik~M. Roijers, Tom Lenaerts, Ann Now{\'e}, and Denis
  Steckelmacher.
\newblock Dynamic weights in multi-objective deep reinforcement learning.
\newblock In {\em Proceedings of the 36th International Conference on Machine
  Learning}, page TBA, 2019.

\bibitem{VamplewDBID11}
Peter Vamplew, Richard Dazeley, Adam Berry, Rustam Issabekov, and Evan Dekker.
\newblock Empirical evaluation methods for multiobjective reinforcement
  learning algorithms.
\newblock {\em Machine Learning}, 84(1-2):51--80, 2011.

\bibitem{LiuXH15}
Chunming Liu, Xin Xu, and Dewen Hu.
\newblock Multiobjective reinforcement learning: {A} comprehensive overview.
\newblock {\em {IEEE} Trans. Systems, Man, and Cybernetics: Systems},
  45(3):385--398, 2015.

\bibitem{MannorS01}
Shie Mannor and Nahum Shimkin.
\newblock The steering approach for multi-criteria reinforcement learning.
\newblock In Thomas~G. Dietterich, Suzanna Becker, and Zoubin Ghahramani,
  editors, {\em Advances in Neural Information Processing Systems 14 [Neural
  Information Processing Systems: Natural and Synthetic, {NIPS} 2001, December
  3-8, 2001, Vancouver, British Columbia, Canada]}, pages 1563--1570. {MIT}
  Press, 2001.

\bibitem{TesauroDCKLRL07}
Gerald Tesauro, Rajarshi Das, Hoi Chan, Jeffrey~O. Kephart, David Levine,
  Freeman L.~Rawson III, and Charles Lefurgy.
\newblock Managing power consumption and performance of computing systems using
  reinforcement learning.
\newblock In John~C. Platt, Daphne Koller, Yoram Singer, and Sam~T. Roweis,
  editors, {\em Advances in Neural Information Processing Systems 20,
  Proceedings of the Twenty-First Annual Conference on Neural Information
  Processing Systems, Vancouver, British Columbia, Canada, December 3-6, 2007},
  pages 1497--1504. Curran Associates, Inc., 2007.

\bibitem{MoffaertDN13}
Kristof~Van Moffaert, Madalina~M. Drugan, and Ann Now{\'{e}}.
\newblock Scalarized multi-objective reinforcement learning: Novel design
  techniques.
\newblock In {\em Proceedings of the 2013 {IEEE} Symposium on Adaptive Dynamic
  Programming and Reinforcement Learning, {ADPRL} 2013, {IEEE} Symposium Series
  on Computational Intelligence (SSCI), 16-19 April 2013, Singapore}, pages
  191--199. {IEEE}, 2013.

\bibitem{PirottaPR15}
Matteo Pirotta, Simone Parisi, and Marcello Restelli.
\newblock Multi-objective reinforcement learning with continuous pareto
  frontier approximation.
\newblock In Blai Bonet and Sven Koenig, editors, {\em Proceedings of the
  Twenty-Ninth {AAAI} Conference on Artificial Intelligence, January 25-30,
  2015, Austin, Texas, {USA.}}, pages 2928--2934. {AAAI} Press, 2015.

\bibitem{ParisiPP17}
Simone Parisi, Matteo Pirotta, and Jan Peters.
\newblock Manifold-based multi-objective policy search with sample reuse.
\newblock {\em Neurocomputing}, 263:3--14, 2017.

\bibitem{HiraokaYM07}
Kazuyuki Hiraoka, Manabu Yoshida, and Taketoshi Mishima.
\newblock Parallel reinforcement learning for weighted multi-criteria model
  with adaptive margin.
\newblock In Masumi Ishikawa, Kenji Doya, Hiroyuki Miyamoto, and Takeshi
  Yamakawa, editors, {\em Neural Information Processing, 14th International
  Conference, {ICONIP} 2007, Kitakyushu, Japan, November 13-16, 2007, Revised
  Selected Papers, Part {I}}, volume 4984 of {\em Lecture Notes in Computer
  Science}, pages 487--496. Springer, 2007.

\bibitem{IimaK14}
Hitoshi Iima and Yasuaki Kuroe.
\newblock Multi-objective reinforcement learning for acquiring all pareto
  optimal policies simultaneously - method of determining scalarization
  weights.
\newblock In {\em 2014 {IEEE} International Conference on Systems, Man, and
  Cybernetics, {SMC} 2014, San Diego, CA, USA, October 5-8, 2014}, pages
  876--881. {IEEE}, 2014.

\bibitem{CastellettiPR11}
Andrea Castelletti, Francesca Pianosi, and Marcello Restelli.
\newblock Multi-objective fitted q-iteration: Pareto frontier approximation in
  one single run.
\newblock In {\em Proceedings of the {IEEE} International Conference on
  Networking, Sensing and Control, {ICNSC} 2011, Delft, The Netherlands, 11-13
  April 2011}, pages 260--265. {IEEE}, 2011.

\bibitem{CastellettiPR12}
Andrea Castelletti, Francesca Pianosi, and Marcello Restelli.
\newblock Tree-based fitted q-iteration for multi-objective markov decision
  problems.
\newblock In {\em The 2012 International Joint Conference on Neural Networks
  (IJCNN), Brisbane, Australia, June 10-15, 2012}, pages 1--8. {IEEE}, 2012.

\bibitem{conen2001preference}
Wolfram Conen and Tuomas Sandholm.
\newblock Preference elicitation in combinatorial auctions.
\newblock In {\em Proceedings of the 3rd ACM conference on Electronic
  Commerce}, pages 256--259. ACM, 2001.

\bibitem{Boutilier02}
Craig Boutilier.
\newblock A {POMDP} formulation of preference elicitation problems.
\newblock In {\em Proceedings of the Eighteenth National Conference on
  Artificial Intelligence and Fourteenth Conference on Innovative Applications
  of Artificial Intelligence, July 28 - August 1, 2002, Edmonton, Alberta,
  Canada.}, pages 239--246, 2002.

\bibitem{chen2004survey}
Li~Chen and Pearl Pu.
\newblock Survey of preference elicitation methods.
\newblock Technical report, 2004.

\bibitem{NgR00}
Andrew~Y. Ng and Stuart~J. Russell.
\newblock Algorithms for inverse reinforcement learning.
\newblock In Pat Langley, editor, {\em Proceedings of the Seventeenth
  International Conference on Machine Learning {(ICML} 2000), Stanford
  University, Stanford, CA, USA, June 29 - July 2, 2000}, pages 663--670.
  Morgan Kaufmann, 2000.

\bibitem{PieterN04}
Pieter Abbeel and Andrew~Y. Ng.
\newblock Apprenticeship learning via inverse reinforcement learning.
\newblock In Carla~E. Brodley, editor, {\em Machine Learning, Proceedings of
  the Twenty-first International Conference {(ICML} 2004), Banff, Alberta,
  Canada, July 4-8, 2004}, volume~69 of {\em {ACM} International Conference
  Proceeding Series}. {ACM}, 2004.

\bibitem{HoE16}
Jonathan Ho and Stefano Ermon.
\newblock Generative adversarial imitation learning.
\newblock In Daniel~D. Lee, Masashi Sugiyama, Ulrike von Luxburg, Isabelle
  Guyon, and Roman Garnett, editors, {\em Advances in Neural Information
  Processing Systems 29: Annual Conference on Neural Information Processing
  Systems 2016, December 5-10, 2016, Barcelona, Spain}, pages 4565--4573, 2016.

\bibitem{ChajewskaKO01}
Urszula Chajewska, Daphne Koller, and Dirk Ormoneit.
\newblock Learning an agent's utility function by observing behavior.
\newblock In {\em Proceedings of the Eighteenth International Conference on
  Machine Learning {(ICML} 2001), Williams College, Williamstown, MA, USA, June
  28 - July 1, 2001}, pages 35--42, 2001.

\bibitem{WatkinsD92}
Christopher J. C.~H. Watkins and Peter Dayan.
\newblock Technical note q-learning.
\newblock {\em Machine Learning}, 8:279--292, 1992.

\bibitem{naylor2000linear}
Arch~W Naylor and George~R Sell.
\newblock {\em Linear operator theory in engineering and science}.
\newblock Springer Science \& Business Media, 2000.

\bibitem{MnihKSRVBGRFOPB15}
Volodymyr Mnih, Koray Kavukcuoglu, David Silver, Andrei~A. Rusu, Joel Veness,
  Marc~G. Bellemare, Alex Graves, Martin~A. Riedmiller, Andreas Fidjeland,
  Georg Ostrovski, Stig Petersen, Charles Beattie, Amir Sadik, Ioannis
  Antonoglou, Helen King, Dharshan Kumaran, Daan Wierstra, Shane Legg, and
  Demis Hassabis.
\newblock Human-level control through deep reinforcement learning.
\newblock {\em Nature}, 518(7540):529--533, 2015.

\bibitem{Williams92}
Ronald~J. Williams.
\newblock Simple statistical gradient-following algorithms for connectionist
  reinforcement learning.
\newblock {\em Machine Learning}, 8:229--256, 1992.

\bibitem{UltesRSVKCBMWGY17}
Stefan Ultes, Lina~Maria Rojas{-}Barahona, Pei{-}Hao Su, David Vandyke, Dongho
  Kim, I{\~{n}}igo Casanueva, Pawel Budzianowski, Nikola Mrksic, Tsung{-}Hsien
  Wen, Milica Gasic, and Steve~J. Young.
\newblock Pydial: {A} multi-domain statistical dialogue system toolkit.
\newblock In {\em Proceedings of the 55th Annual Meeting of the Association for
  Computational Linguistics, {ACL} 2017, Vancouver, Canada, July 30 - August 4,
  System Demonstrations}, pages 73--78, 2017.

\bibitem{gym-super-mario-bros}
Christian Kauten.
\newblock {S}uper {M}ario {B}ros for {O}pen{AI} {G}ym.
\newblock \url{https://github.com/Kautenja/gym-super-mario-bros}, 2018.

\bibitem{MnihBMGLHSK16}
Volodymyr Mnih, Adri{\`{a}}~Puigdom{\`{e}}nech Badia, Mehdi Mirza, Alex Graves,
  Timothy~P. Lillicrap, Tim Harley, David Silver, and Koray Kavukcuoglu.
\newblock Asynchronous methods for deep reinforcement learning.
\newblock In {\em Proceedings of the 33nd International Conference on Machine
  Learning, {ICML} 2016, New York City, NY, USA, June 19-24, 2016}, pages
  1928--1937, 2016.

\bibitem{khamsi2011introduction}
Mohamed~A Khamsi and William~A Kirk.
\newblock {\em An introduction to metric spaces and fixed point theory},
  volume~53.
\newblock John Wiley \& Sons, 2011.

\bibitem{Bertsekas17}
Dimitri~P. Bertsekas.
\newblock Regular policies in abstract dynamic programming.
\newblock {\em {SIAM} Journal on Optimization}, 27(3):1694--1727, 2017.

\bibitem{bertsekas2018abstract}
Dimitri~P Bertsekas.
\newblock {\em Abstract dynamic programming}.
\newblock Athena Scientific Belmont, MA, 2018.

\bibitem{BellemareDM17}
Marc~G. Bellemare, Will Dabney, and R{\'{e}}mi Munos.
\newblock A distributional perspective on reinforcement learning.
\newblock In {\em Proceedings of the 34th International Conference on Machine
  Learning, {ICML} 2017, Sydney, NSW, Australia, 6-11 August 2017}, pages
  449--458, 2017.

\bibitem{SchatzmannY09}
Jost Schatzmann and Steve~J. Young.
\newblock The hidden agenda user simulation model.
\newblock {\em {IEEE} Trans. Audio, Speech {\&} Language Processing},
  17(4):733--747, 2009.

\bibitem{UltesBCMRSWGY17}
Stefan Ultes, Pawel Budzianowski, I{\~{n}}igo Casanueva, Nikola Mrksic,
  Lina~Maria Rojas{-}Barahona, Pei{-}Hao Su, Tsung{-}Hsien Wen, Milica Gasic,
  and Steve~J. Young.
\newblock Reward-balancing for statistical spoken dialogue systems using
  multi-objective reinforcement learning.
\newblock In {\em Proceedings of the 18th Annual SIGdial Meeting on Discourse
  and Dialogue, Saarbr{\"{u}}cken, Germany, August 15-17, 2017}, pages 65--70,
  2017.

\bibitem{LillicrapHPHETS15}
Timothy~P. Lillicrap, Jonathan~J. Hunt, Alexander Pritzel, Nicolas Heess, Tom
  Erez, Yuval Tassa, David Silver, and Daan Wierstra.
\newblock Continuous control with deep reinforcement learning.
\newblock {\em CoRR}, abs/1509.02971, 2015.

\bibitem{GuLSL16}
Shixiang Gu, Timothy~P. Lillicrap, Ilya Sutskever, and Sergey Levine.
\newblock Continuous deep q-learning with model-based acceleration.
\newblock In {\em Proceedings of the 33nd International Conference on Machine
  Learning, {ICML} 2016, New York City, NY, USA, June 19-24, 2016}, pages
  2829--2838, 2016.

\bibitem{MetzIJD17}
Luke Metz, Julian Ibarz, Navdeep Jaitly, and James Davidson.
\newblock Discrete sequential prediction of continuous actions for deep {RL}.
\newblock {\em CoRR}, abs/1705.05035, 2017.

\bibitem{HasseltGS16}
Hado van Hasselt, Arthur Guez, and David Silver.
\newblock Deep reinforcement learning with double q-learning.
\newblock In Dale Schuurmans and Michael~P. Wellman, editors, {\em Proceedings
  of the Thirtieth {AAAI} Conference on Artificial Intelligence, February
  12-17, 2016, Phoenix, Arizona, {USA.}}, pages 2094--2100. {AAAI} Press, 2016.

\bibitem{SchaulQAS15}
Tom Schaul, John Quan, Ioannis Antonoglou, and David Silver.
\newblock Prioritized experience replay.
\newblock {\em CoRR (Published at ICLR 2016)}, abs/1511.05952, 2015.

\bibitem{t-SNE}
L.J.P van~der Maaten and G.E. Hinton.
\newblock Visualizing high-dimensional data using t-sne.
\newblock {\em Journal of Mahcine Learning Research}, Nov 2008.

\end{thebibliography}
\bibliographystyle{unsrt}

\newpage
\appendix
% \onecolumn

%# -*- coding: utf-8-unix -*-
\hrule\hrule\hrule\hrule
\hrule\hrule\hrule\hrule
~
\begin{center}
{\Large\bf Supplementary Material for Generalized Algorithm for Multi-Objective RL and Policy Adaptation}
\end{center}
~
\hrule\hrule\hrule

\section{Theoretical Framework for Value-Based MORL Algorithms}
\label{app:theory}

In this section, we introduce a theoretical framework for analyzing and designing the value-based multi-objective reinforcement learning algorithms. This framework is based on the well-known Banach's Fixed-Point Theorem, which guarantees the existence and uniqueness of fixed-point of a {\em contraction} on a {\em complete metric space}. Therefore, generalizing this theorem a bit, we can imagine all value functions of reinforcement learning are in some metric space, and finding the optimal value or policy is to find the fixed point of a certain contraction on that space. We first recall the following concepts.

\subsection{General Framework for Value-Based Reinforcement Learning}
\label{app:sec:framework}

\begin{defn}[Contraction] Let $(X, d)$ be a metric space. We say that $\mathcal T$ is a {\em contraction}, if there is a real number $\gamma\in[0,1)$ such that 
\begin{equation}
    d(\mathcal T(x), \mathcal T(x')) \leq \gamma d(x,x')
\end{equation}
for all points $x, x' \in X$, where $\gamma$ is called a Lipschitz coefficient for the contraction $\mathcal T$.
\end{defn}

\begin{thm}[Banach's Fixed-Point Theorem]
    Let $(X, d)$ be a complete metric space and let $\mathcal T: X \rightarrow X$ be a contraction. Then there exists a {\bf \em unique} fixed point $x^{*} \in X$ such that $\mathcal T(x^*) = x^*$.
    Moreover, if $x$ is any point in $X$ and $\mathcal T^{n}(x)$ is inductively defined by $\mathcal T^{n}(x) = \mathcal T(\mathcal T^{n-1}(x))$, then we have $\mathcal T^{n}(x)\rightarrow x^{*}$ as $n\rightarrow \infty$.
    \label{thm:contraction}
\end{thm}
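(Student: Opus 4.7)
The plan is to carry out the textbook construction: build a candidate fixed point by iterating $\mathcal T$ from an arbitrary starting point, use completeness to upgrade that sequence to a convergent one, verify by continuity that its limit is fixed, and then read off uniqueness and global convergence directly from the contraction inequality. Since this is the classical metric-space version, both existence \emph{and} uniqueness are in play, unlike the pseudo-metric generalization in Theorem~\ref{thm:generalized-fix}.

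First, I would pick any $x_0 \in X$ and set $x_n := \mathcal T^n(x_0)$. Applying the contraction condition inductively gives $d(x_{n+1}, x_n) \leq \gamma^n d(x_1, x_0)$. The triangle inequality together with a geometric-series tail bound then yields $d(x_{n+m}, x_n) \leq \frac{\gamma^n}{1-\gamma}\, d(x_1, x_0)$, an estimate uniform in $m$. Since $\gamma \in [0,1)$ the right-hand side tends to $0$ as $n \to \infty$, so $(x_n)$ is Cauchy. Completeness of $(X,d)$ then yields a limit $x^{*} := \lim_n x_n \in X$; because every contraction is Lipschitz and therefore continuous, I can pass the limit through $\mathcal T$ in the identity $x_{n+1} = \mathcal T(x_n)$ to conclude $x^{*} = \mathcal T(x^{*})$.

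For uniqueness, suppose $y^{*}$ is another fixed point. The contraction inequality applied to the pair $(x^{*}, y^{*})$ gives $d(x^{*}, y^{*}) \leq \gamma\, d(x^{*}, y^{*})$; since $\gamma < 1$ this forces $d(x^{*}, y^{*}) = 0$, and the identity-of-indiscernibles axiom of a genuine metric (the feature that was unavailable in the pseudo-metric setting of Theorem~\ref{thm:generalized-fix}) gives $x^{*} = y^{*}$. For global convergence from an arbitrary starting point $x \in X$, iterating the contraction directly yields $d(\mathcal T^n(x), x^{*}) = d(\mathcal T^n(x), \mathcal T^n(x^{*})) \leq \gamma^n d(x, x^{*}) \to 0$.

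The argument is classical and presents no real theoretical obstacle; the only care needed is in the Cauchy estimate, where one must sum the geometric tail uniformly in $m$ and explicitly invoke completeness (not merely continuity) to secure a limit inside $X$. Uniqueness then comes essentially for free, which is precisely the structural distinction from the weaker existence-and-convergence-only conclusion that one can obtain for the MORL envelope operator in the pseudo-metric space $\langle \mathcal Q, d \rangle$.
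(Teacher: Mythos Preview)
Your argument is the standard textbook proof and is correct in every step: the Cauchy estimate via the geometric tail, the appeal to completeness, continuity of $\mathcal T$ to identify the limit as a fixed point, uniqueness from $d(x^*,y^*)\le \gamma\,d(x^*,y^*)$, and the direct bound $d(\mathcal T^n(x),x^*)\le \gamma^n d(x,x^*)$ for global convergence.

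Note, however, that the paper does not supply its own proof of this theorem; immediately after the statement it writes ``The above introduced Banach fixed-point theorem is well-known. Readers may refer to the book~\cite{khamsi2011introduction} for more details.'' So there is nothing to compare against beyond the observation that your write-up is precisely the classical argument one would find in such a reference, and your remarks contrasting it with the pseudo-metric Theorem~\ref{thm:generalized-fix} (where identity of indiscernibles fails and one only gets convergence into an equivalence class) are apt.
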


The above introduced Banach fixed-point theorem is well-known.
Readers may refer to the book~\cite{khamsi2011introduction} for more details.
% Readers who are unfamiliar with this theorem and interested in the proof, may refer to the book~\cite{khamsi2011introduction}.
Practically, this provides us with an iterative method for converging to any desired solution in the large solution space, by repeatedly applying a properly designed contraction. For example, the foundation for standard value-based single-objective reinforcement learning is the use of Bellman's optimality equation~\cite{Bellman1957}: 
\begin{equation}
    Q^*(s,a) = r(s,a) + \gamma \mathbb E_{s'\sim\mathcal P(\cdot|s,a)} \sup_{a'\in \mathcal A} Q^*(s',a'),    
    \label{eq:std-opt}
\end{equation}
where $\gamma\in [0,1)$ is the discount factor and the optimal Q-value function $Q^*(s, a)$ is the desired solution in the space $\mathcal Q \subseteq \mathbb R^{\mathcal S \times \mathcal A}$ consisting of all the bounded functions with $\ell_{\infty}$-distance metric 
\begin{equation}
    d(Q, Q') := \sup_{s \in \mathcal S, a\in \mathcal A}|Q(s, a) - Q'(s,a)|.
\end{equation}
Since the all the functions in this space is bounded, it follows that with this $\ell_{\infty}$-distance metric, the space $(\mathcal Q, d)$ is complete. Besides, according to the equation (\ref{eq:std-opt}), we can design an {\em Bellman optimality operator} $\mathcal T$ such that
\begin{equation}
    (\mathcal T Q)(s,a):= r(s,a)+\gamma\mathbb E_{s' \sim \mathcal P(\cdot|s, a)} \sup_{a'\in \mathcal A} Q(s',a'),
\end{equation}
which can be shown as a contraction on $(\mathcal Q, d)$. Many popular value-based reinforcement algorithms, such as {\em deep Q-learning}~\cite{MnihKSRVBGRFOPB15}, can be seen as asynchronous iteration methods with approximately applied contraction.

We can verify that the Bellman optimality operator $\mathcal T$ indeed is a contraction on $(\mathcal Q, d)$, and the optimal value function $Q^*$ is a fixed point in $(\mathcal Q, d)$. Therefore, we can find the unique optimal Q-value function by applying the optimality operator iteratively many times on any initial Q-value function. Similarly, we can also define {\em Bellman evaluation operator} $\mathcal T_{\pi}$ using the Bellman expectation equation $(\mathcal T_{\pi} Q)(s,a) := r(s,a)+\gamma\mathbb E_{\tau\sim (\mathcal P, \pi)}Q(s',a')$, which is also a contraction.

Knowing that the optimality operator is a contraction is important. In practice, we use a minibatch to update previous Q-value function approximated by neural networks, not updating all states and actions. Thus the updated Q-value function is not a strict $\mathcal T Q$, but only close to $\mathcal T Q$ on some state and action pairs. We can still provide a theoretical guarantee that a minibatch iterative algorithm can still converge to a promising result, under certain extra assumptions.

\begin{defn}[Minibatch Iteration]
    Consider the Q-value function $Q$ as a composition of $\{Q_{S,A}\}_{S\subseteq\mathcal S, A \subseteq \mathcal A}$ such that in each iteration,
    \begin{equation*}
        Q^{k+1}_{S,A}(s, a) := \begin{cases}
            \mathcal T Q^{k}_{S,A}(s, a), & \textrm{if~} s \in S \textrm{~and~} a\in A;\\
             Q^{k}_{S,A}(s, a), & \textrm{otherwise.}
        \end{cases}
    \end{equation*}
\end{defn}
\begin{thm}[Minibatch Convergence Theorem]
     Suppose each restricted Q-value function $Q_{S,A}$ can be update an arbitrary number of times, and there is a nest sequence of nonempty sets $\{\mathcal Q^k\}_{k \in \mathbb{Z}}$ with $\mathcal Q^{k+1}\subseteq \mathcal Q^{k} \subseteq \mathcal Q$, $k = 0, 1, \dots$ such that if $\{Q^k\}_{k \in \mathbb{N}}$ is any sequence with $Q^k\in \mathcal Q^k$ for all $k \geq 0$, then $\{Q^k\}$ converges pointwisely to $Q^*$. Assume further the following:
     \begin{enumerate}
         \item Convergence Condition: We have 
         \begin{equation}
             \forall Q \in \mathcal Q^k, \mathcal T Q \in \mathcal Q^{k+1};
         \end{equation}
        \item Box Condition: For all $k, \mathcal Q^k$ is a Cartesian product of the form
        \begin{equation}
            \mathcal Q^k = \times_{s \in \mathcal S, a \in \mathcal A} \mathcal Q_{\{s\},\{a\}}^k,
        \end{equation}
        where $\mathcal Q_{S,A}^k$ is a set of bounded real-valued functions on states $S$ and actions $A$.
     \end{enumerate}
     Then for every $Q^0 \in \mathcal Q^0$ the sequence $\{Q^k\}$ generated by the minibatch iteration algorithm converges to $Q^* $~\cite{Bertsekas17}. 
     \label{thm:minibatch}
\end{thm}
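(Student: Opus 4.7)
The plan is to follow the classical asynchronous convergence argument (in the style of Bertsekas--Tsitsiklis) by showing the iterates $Q^k$ eventually enter every level $\mathcal Q^j$ of the nested sequence, after which the pointwise-convergence hypothesis on sequences through the boxes finishes the job. The key leverage is the box condition, which decouples the $(s,a)$-coordinates so that partial updates on a minibatch $S \times A$ preserve membership in $\mathcal Q^k$, combined with the convergence condition, which guarantees that enough $\mathcal T$-updates promote an iterate from $\mathcal Q^k$ into the tighter set $\mathcal Q^{k+1}$.

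First I would establish an invariance lemma: if $Q \in \mathcal Q^k$ and $Q'$ agrees with $Q$ outside $S \times A$ and equals $(\mathcal T Q)(s,a)$ on $S \times A$, then $Q' \in \mathcal Q^k$. This is immediate from the two hypotheses. The box condition reduces membership to coordinate-wise checks $Q'(s,a) \in \mathcal Q^k_{\{s\},\{a\}}$; for $(s,a) \notin S \times A$ nothing changes, and for $(s,a) \in S \times A$ the convergence condition gives $\mathcal T Q \in \mathcal Q^{k+1} \subseteq \mathcal Q^k$, so the new value lies in $\mathcal Q^k_{\{s\},\{a\}}$. Consequently, once an iterate enters $\mathcal Q^k$, all subsequent minibatch iterates stay in $\mathcal Q^k$.

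Next I would prove by induction on $j$ that there is a time $t_j$ such that $Q^t \in \mathcal Q^j$ for every $t \geq t_j$. The base case $t_0 = 0$ is just the hypothesis $Q^0 \in \mathcal Q^0$ together with invariance. For the inductive step, assume $Q^t \in \mathcal Q^j$ for all $t \geq t_j$ and use the ``arbitrary number of times'' clause in its customary fairness reading that every coordinate is updated infinitely often. Let $t_{j+1} \geq t_j$ be a time by which each $(s,a)$ has been updated at least once after $t_j$; at this time each coordinate equals $(\mathcal T Q^{\tau-1})(s,a)$ for some $\tau \in (t_j, t_{j+1}]$ with $Q^{\tau-1} \in \mathcal Q^j$, and the convergence condition then places that value in $\mathcal Q^{j+1}_{\{s\},\{a\}}$. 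Taking the Cartesian product over $(s,a)$ gives $Q^{t_{j+1}} \in \mathcal Q^{j+1}$, and invariance at level $j+1$ propagates the containment to all $t \geq t_{j+1}$.

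Finally, the pointwise-convergence hypothesis forces the coordinate diameters $\delta_j(s,a) := \sup_{Q \in \mathcal Q^j} |Q(s,a) - Q^*(s,a)|$ to tend to $0$ as $j \to \infty$, since otherwise one could pick $Q^j \in \mathcal Q^j$ realising a persistent gap at some $(s,a)$ and contradict the assumed pointwise convergence of every sequence through the boxes. Combining this with $Q^t \in \mathcal Q^j$ for $t \geq t_j$ yields $|Q^t(s,a) - Q^*(s,a)| \leq \delta_j(s,a) \to 0$, which is the desired pointwise convergence $Q^t \to Q^*$. The main obstacle I anticipate is pinning down the asynchronous schedule rigorously: the statement only says each $Q_{S,A}$ ``can be updated an arbitrary number of times,'' so I would need to commit to the standard fairness assumption that every coordinate is updated infinitely often, and to verify carefully that partial updates using the \emph{current global} $Q$ as their argument still deliver the coordinate-wise membership in $\mathcal Q^{j+1}_{\{s\},\{a\}}$ required by the inductive step.
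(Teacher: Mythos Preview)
Your proposal is correct and follows essentially the same asynchronous-convergence argument as the paper: induct on $j$ to find times $t_j$ with $Q^t \in \mathcal Q^j$ for all $t \ge t_j$, use the fairness assumption to locate a time after which every $(s,a)$ has been updated, apply the convergence condition to push each coordinate into $\mathcal Q^{j+1}_{\{s\},\{a\}}$, and invoke the box condition to reassemble. Your write-up is in fact more careful than the paper's own proof, since you make explicit the invariance lemma (staying in $\mathcal Q^k$ under partial updates) and the final diameter-shrinking step, both of which the paper leaves implicit.
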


\begin{proof}
    Showing the convergence of the algorithm is equivalent to showing that the iterations of elements from $\mathcal Q^k$ will get in to $\mathcal Q^{k+1}$ eventually, i.e., for each $k \geq 0$, there is a time $t_k$ such that $Q^t\in \mathcal Q^{k}$ for all $t \geq t_k$. We can prove it by mathematical induction.

When $k = 0$, the statement is true since $Q^0\in \mathcal Q^0$. Assuming the statement is true for a given $k$, we will show there is a time $t_{k+1}$ with the required properties. For each $s\in \mathcal S, a\in \mathcal A$, let set $L_{s,a} = \{t: s^t = s, a^t = a\}$ record the time a minibatch update happens on the state $s$ and action $a$. Let $t(s, a)$ be the first element in $L_{s, a}$ such that $t(s, a) \geq t_{k}$. Then by the convergence condition, we have $\mathcal TQ^{t(s,a)} \in \mathcal Q^{k+1}$ for all $s\in \mathcal S$ and $a \in \mathcal A$. In the view of the box condition, it implies $Q^{t+1}(s,a)\in \mathcal Q_{{s}, {a}}^{k+1}$ for all $t \geq t(s, a)$ for any $s\in \mathcal S$ and $a \in \mathcal A$. Therefore, let $t_{k+1} = \max_{s,a}t(s,a) + 1$, using the box condition, we have $Q^t \in \mathcal Q^{k+1}$ for all $t \leq t_{k+1}$. By mathematical induction, the statement holds, and $\mathcal Q^k$ will shrink in to $\mathcal Q^{k+1}$ eventually. Hence, we have proved $\{Q^t\}$ converges to $Q^* $ finally.

\end{proof}

The above theorem indicates that minibatch update with experience replay will not affect the convergence of iteratively applying the optimality operator $\mathcal T$. This gives us the flexibility to design value-based algorithm's updating scheme. Besides, even though we use deep Q-network to approximate the Q-value function and update it by using gradient descent, this will not impair the magical function of optimality operator too much. This is because if after n-round neural network updates, we always have $\sup_{s\in\mathcal S, a\in \mathcal A} \left|Q^{t+n} - \mathcal T Q^{t}\right| \leq \epsilon$, where $n$ is an arbitrary bounded integer, by applying the triangle inequality we conclude that the final error $d(Q_{final}, Q^*) \leq \epsilon / (1-\gamma)$ is bounded. 

We summarize a general theoretical framework for value-based RL algorithm design, which consists of five elements:
\begin{itemize}[leftmargin=0.2in]
    \item {\bf Value Space} $\mathcal X$: A common choice of $\mathcal X$ is the space of value functions in $\mathbb R^\mathcal S$ or the space of Q-value functions in $\mathbb R^{\mathcal S\times \mathcal A}$. There are many other choices such as the space of ordered pair of $(\mathcal V, \mathcal Q)$ (see 2.6.2 in book~\cite{bertsekas2018abstract} for example), or the space of vectorized value functions as we show in the following sections.
    \item {\bf Value Metric} $d$: It defines the ``distance" between two points in the value space. Besides the basic four requirements, the metric $d$ should ensure a complete metric space $(\mathcal X, d)$ to validate the Banach's fixed-point theorem. A compatible selection of value metric will make the convergence analysis easier.
    \item {\bf Evaluation Operator} $\mathcal T_{\pi}$: We have constructed a recursive expression of a certain value point in the value space associated with some policy, e.g., the Bellman expectation equation, to depict the value of a policy $v_{\pi}$ as a fixed point we desire. Carefully verify that the contraction property holds for $\mathcal T_{\pi}$.
    \item {\bf Optimality Operator} $\mathcal T$: A recursive expression of the optimal point in the value space, e.g., the Bellman optimality equation. Note that when the metric $d$ is the supremum of the absolute value of the difference, and $\mathcal T_{\pi}$ is a contraction, we can prove the contraction property of $\mathcal T$ is always automatically satisfied.
    \item {\bf Updating Scheme}: To make a reinforcement learning algorithm practical and scalable, we need to consider many factors in terms of updating scheme. For example: How do we approximate the value and policies? If it is an online algorithm, how do we trade off exploration and exploitation? All these details will significantly influence the performance of our algorithm on real-world tasks.
\end{itemize}

In summary, there are five essential components for analyzing and designing general value-base reinforcement learning algorithms: {\bf (1) value space}, {\bf (2) value metric}, {\bf (3) evaluation operator}, {\bf (4) optimality operator}, and {\bf (5) updating scheme}. In fact, there is some work~\cite{BellemareDM17} developing distributional reinforcement learning in a way similar to this framework.
We will discuss how to design these five elements of our framework to develop envelope multi-objective reinforcement learning algorithms in the next section.

\subsection{MORL with envelope updates}
\label{app:envelope}
The deep MORL algorithm with scalarized update is capable of solving unknown linear preference scenarios of multi-objective reinforcement learning. However, there are several limitations of this algorithm restrict its applicability and performance in practice. Aiming at solving problems stated in Section \ref{s:related_work}, we design a new algorithm called {envelope deep MORL algorithm}. Following the value-based theoretical framework introduced in Section \ref{app:sec:framework}, our key idea to upgrade the scalarized algorithm is to consider a different value space, where every Q-value function is a mapping to multi-objective solutions, not utilities, and therefore maintains the necessary information for prediction in the adaptation phase. Furthermore, we generalize the optimality filter $\mathcal H$ to use that information to boost up the alignment in the learning phase. 

We consider a new value space $\mathcal Q \subseteq (\Omega \rightarrow \mathbb R^m)^{\mathcal S \times \mathcal A}$, containing all bounded functions $\bm Q(s,a,\omega)$ returning the estimates of preferred expected total rewards under preference $\bm \omega$, which are $m$-dimensional vectors. Besides, we employ a value metric $d$ defined by
\begin{equation}
    d(\bm Q, \bm Q') := \sup_{\substack{s\in\mathcal S, a\in\mathcal A\\ \bm \omega\in\Omega}} |\bm \omega^{\intercal} (\bm Q(s, a, \bm \omega) - \bm Q'(s, a, \bm \omega))|.
\end{equation}
Notice that this metric is a pseudo-metric, since the identity of indiscernibles does not hold for it. It is easy to show that metric space $(\mathcal Q, d)$ is complete.

We refer the Q-value functions in this $\mathcal Q$ {\em multi-objective value functions}. Similar to the scalarized one, we design an evaluation operator and an optimality operator for this envelope version algorithm. As for the updating scheme, we use hindsight experience replay and a homotopy optimization trick.

\subsubsection{Multi-Objective Bellman Optimality Operator}
\label{app:sec:envelope:operator}
In this section, we give the evaluation operator and the optimality operator in the new envelope version value space $(\mathcal Q, d)$ as stated above. The evaluation operator now is even simpler than that of the scalarized version. Give a policy $\pi$, the evaluation operator is defined by 
\begin{equation}
    (\mathcal T_{\pi}\bm Q)(s,a, \bm \omega) := \bm r(s,a) + \gamma \mathbb E_{\tau \sim (\mathcal P, \pi)} \bm Q(s', a', \bm \omega).
\end{equation}
Since now the multi-objective Q-value function is also in a vector form, this evaluation operator is almost the same as that of the single-objective reinforcement learning. It can be easily verified as a contraction.

As for the envelope version of optimality operator, we employ a stronger optimality filter $\mathcal H$, defined by $(\mathcal H \bm Q)(s, \bm \omega) := \arg_{Q}\sup_{a'\in\mathcal A, \bm \omega' \in \Omega} \bm \omega^{\intercal}\bm Q(s, a', \bm \omega')$, where the $\arg_{Q}$ takes the multi-objective value corresponding to the supremum, i.e., $\bm Q(s, a'', \bm \omega'')$ such that $(a'', \bm \omega'') \in \arg\sup_{a'\in\mathcal A, \bm \omega' \in \Omega} \bm \omega^{\intercal}\bm Q(s, a', \bm \omega')$. The return of $\arg_{Q}$ depends on scalarization weights $\bm \omega$, and we use $\arg_{Q}$ for simplicity of notation. This filter is solving the convex envelope of the current Pareto frontier, therefore we name this algorithm as ``envelope" version. We can write the optimality operator $\mathcal T$ in terms of the optimal filter:
\begin{equation}
    (\mathcal T \bm Q)(s, a, \bm \omega) := \bm r(s, a) + \gamma \mathbb E_{s'\sim\mathcal P(\cdot | s,a)}(\mathcal H \bm Q)(s', \bm \omega).
\end{equation}
\begin{repthm}{thm:fix-envelope}[Fixed Point of Evelope Optimality Operator for MORL]
    Use above definitions in the envelope version value space. Let $\bm Q^* \in \mathcal Q$ be the preferred optimal value function in the value space, such that 
    \begin{equation}
        \bm Q^*(s, a, \bm \omega) = \arg_{Q} \sup_{\pi\in \Pi}\bm \omega^{\intercal}\mathbb E_{\tau\sim (\mathcal P, \pi) | s_0=s, a_0=a}\left[\sum_{t=0}^{\infty}\gamma^t \bm r(s_t,a_t) \right],
    \end{equation} 
    where the $\arg_{Q}$ takes the multi-objective value corresponding to the supremum, then $\bm Q^* = \mathcal T \bm Q^*$ holds.
\end{repthm}
\begin{proof}
    First, we observe that $d(\bm Q^*, \mathcal T\bm Q^*)= \sup_{\substack{s\in\mathcal S, a\in\mathcal A\\ \bm\omega\in\Omega}}|\bm \omega^{\intercal} (\bm Q^*(s,a,\bm\omega) - \mathcal T\bm Q^*(s,a,\bm \omega))| = 0 \Leftrightarrow \bm \omega^{\intercal}\mathcal T\bm Q^*(s,a,\bm \omega) = \bm \omega^{\intercal} \bm Q^*(s,a,\bm \omega)$ for all $s, a, \bm \omega$. Then, by substituting the definition of $Q^*$ into $\mathcal T Q^*$,
    % \begin{eqnarray*}
    %     \bm \omega^{\intercal} \mathcal T Q^*(s,a, \bm \omega) & = & \bm \omega^{\intercal} \bm r(s, a) + \gamma \cdot \bm \omega^{\intercal} \mathbb E_{s' \sim \mathcal P(\cdot |s, a)} (\mathcal H Q)(s', \bm \omega)\\
    %     & = & \bm \omega^{\intercal} \bm r(s, a) + \gamma \cdot \bm \omega^{\intercal} \mathbb E_{s'\sim \mathcal P(\cdot|s,a)}\arg_{Q}\sup_{a'\in\mathcal A, \bm \omega' \in \Omega} \bm \omega^{\intercal}\bm Q(s, a', \bm \omega')\\
    %     & = & \bm \omega^{\intercal} \bm r(s, a) + \gamma \sup_{\pi} \mathbb E_{\substack{\tau \sim (\mathcal P, \pi)\\s_0\sim \mathcal P(\cdot| s,a)}} \left[\sum_{t = 0}^{\infty} {\gamma^{t}\bm \omega^{\intercal} \bm r(s_t, a_t)}\right].
    %     \end{eqnarray*}
    %     The last step is because the elimination of $\bm \omega^{\intercal}$ and $\arg_{Q}$, and the expectation over $s'$ does not affect the supremum over $a'$, and the supremum over $a'$ is absorbed by the supremum over $\pi$. Then by the definition of $Q^*$ again, we have
    %     \begin{equation*}
    %     \bm \omega^{\intercal} \mathcal T \bm Q^*(s,a, \bm \omega) = \sup_{\pi\in\Pi} \mathbb E_{\substack{\tau \sim (\mathcal P, \pi) | s_0 = s, a_0 = a}} [\sum_{t = 0}^{\infty}{\gamma^{t}\omega^T r(s_t, a_t)}]\\
    %      = \bm \omega^{\intercal} \bm Q^*(s, a, \bm \omega).
    % \end{equation*}
    
{\tiny
\begin{eqnarray*}
        {\color{cyan}\bm \omega^{\intercal}} \mathcal T \bm Q^*(s,a, {\color{cyan}\bm \omega}) 
        & = & {\color{cyan}\bm \omega^{\intercal}} \bm r(s, a) + \gamma \cdot {\color{cyan}\bm \omega^{\intercal}} \mathbb E_{s' \sim \mathcal P(\cdot |s, a)} (\mathcal H \bm Q^*)(s', {\color{cyan}\bm \omega})\\
      {\color{gray} (\text{def. of } \mathcal H)} 
       & = & {\color{cyan}\bm \omega^{\intercal}} \bm r(s, a) + \gamma \cdot {\color{cyan}\bm \omega^{\intercal}} \mathbb E_{s'\sim \mathcal P(\cdot|s,a)}\arg_{Q}\sup_{a'\in\mathcal A, {\color{orange} \bm \omega' \in \Omega}} {\color{cyan}\bm \omega^{\intercal}}\bm Q^*(s', a', {\color{orange}\bm \omega'})\\
       {\color{gray} (\text{linearity of exp. \& cancel } \bm \omega^{\intercal} \text{ and } \arg_Q)} 
       & = & {\color{cyan}\bm \omega^{\intercal}} \bm r(s, a) + \gamma \cdot \mathbb E_{s'\sim \mathcal P(\cdot|s,a)}\sup_{a'\in\mathcal A, {\color{orange} \bm \omega' \in \Omega}} {\color{cyan}\bm \omega^{\intercal}}\bm Q^*(s', a', {\color{orange}\bm \omega'})\\
       {\color{gray} (\text{insert eq. (20), def. of $\bm Q^*$})} 
       & = & {\color{cyan}\bm \omega^{\intercal}} \bm r(s, a) + \gamma \cdot \mathbb E_{s'\sim \mathcal P(\cdot|s,a)}\sup_{a'\in\mathcal A, {\color{orange} \bm \omega' \in \Omega}} {\color{cyan}\bm \omega^{\intercal}}\left\{\arg_{Q} \sup_{\pi\in \Pi}{\color{orange}\bm \omega'^{\intercal}}\mathbb E_{\substack{\tau\sim (\mathcal P, \pi) \\| s_0=s', a_0=a'}}\left[\sum_{t=0}^{\infty}\gamma^t \bm r(s_t,a_t) \right]\right\}\\
       {\color{gray} (\text{use def. of $\arg_{Q}$, explained below})} 
       & = & {\color{cyan}\bm \omega^{\intercal}} \bm r(s, a) + \gamma \cdot \mathbb E_{s'\sim \mathcal P(\cdot|s,a)}\sup_{a'\in\mathcal A} {\color{cyan}\bm \omega^{\intercal}}\left\{\arg_{Q} \sup_{\pi\in \Pi}{\color{cyan}\bm \omega^{\intercal}}\mathbb E_{\substack{\tau\sim (\mathcal P, \pi) \\| s_0=s', a_0=a'}}\left[\sum_{t=0}^{\infty}\gamma^t \bm r(s_t,a_t) \right]\right\}\\
       {\color{gray} (\text{rearrange expectation and sup})}
        & = & {\color{cyan}\bm \omega^{\intercal}} \bm r(s, a) + \gamma \cdot {\color{cyan}\bm \omega^{\intercal}} \arg_{Q}  \sup_{\pi\in \Pi}{\color{cyan}\bm \omega^{\intercal}}\mathbb E_{\substack{\tau\sim (\mathcal P, \pi) \\ s_0\sim \mathcal P(\cdot | s,a)}}\left[\sum_{t=0}^{\infty}\gamma^t \bm r(s_t,a_t) \right]\\
       {\color{gray} (\text{merge 1st term to sum \& use def. of $\bm Q^*$ again})}
        & = & {\color{cyan}\bm \omega^{\intercal}}\left\{\arg_{Q} \sup_{\pi\in \Pi}{\color{cyan}\bm \omega^{\intercal}}\mathbb E_{\substack{\tau\sim (\mathcal P, \pi) \\| s_0=s, a_0=a}}\left[\sum_{t=0}^{\infty}\gamma^t \bm r(s_t,a_t) \right] \right\} = {\color{cyan}\bm \omega^{\intercal}} \bm Q^*(s,a, {\color{cyan}\bm \omega})
\end{eqnarray*}}

The fourth equation is due to a sandwich inequality, 
{\scriptsize $\displaystyle {\color{cyan}\bm \omega^{\intercal}} \arg_{Q}  \sup_{\pi\in \Pi}{\color{cyan}\bm \omega^{\intercal}}\bm Q^{\pi} \leq \sup_{{\color{orange} \bm \omega' \in \Omega}}{\color{cyan}\bm \omega^{\intercal}} \arg_{Q}  \sup_{\pi\in \Pi}{\color{orange}\bm \omega'^{\intercal}}\bm Q^{\pi} = {\color{cyan}\bm \omega^{\intercal}} \arg_{Q}  \sup_{\pi\in \Pi}{\color{orange}\bm \omega_*^{\intercal}}\bm Q^{\pi} = {\color{cyan}\bm \omega^{\intercal}} \bm Q^{{\color{orange} \pi_{\bm \omega_*'}'}} \leq {\color{cyan}\bm \omega^{\intercal}} \arg_{Q}  \sup_{\pi\in \Pi}{\color{cyan}\bm \omega^{\intercal}}\bm Q^{\pi}$}, 
where ${\color{orange} \bm \omega_{*}'}$ and ${\color{orange} \pi_{\bm \omega_*'}'}$ are preference and policy corresponding to the supremums. According to the observation stated at the beginning, $d(\bm Q^*, \mathcal T \bm Q^*) = 0$. The preferred optimal value function is a fixed point of the proposed envelope version optimality operator.
\end{proof}

Theorem \ref{thm:fix-envelope} tells us the preferred optimal value function is one of the fixed-points of envelope optimality operator $\mathcal T$ in the value space. And we still need to show that this $\mathcal T$ is a contraction. 
\begin{repthm}{thm:contraction-envelope}[Envelope Optimal Operator is a Contraction]
    Let $\bm Q, \bm Q'$ be any two multi-objective Q-value functions in the envelope value space $\mathcal Q$ as defined above, the Lipschitz condition $d(\mathcal T \bm Q, \mathcal T \bm Q') \leq \gamma d(\bm Q, \bm Q')$ holds, where $\gamma \in [0, 1)$ is the discount factor of the underlying MOMDP $\mathcal M$ (see Section \ref{sec:momdp}).
\end{repthm}
\begin{proof}
    Without the loss of generality, we assume $\sup_{a\in \mathcal A, \bm \omega'\in\Omega}\bm \omega^{\intercal}\bm Q(s, a, \bm \omega') \geq \sup_{a\in \mathcal A, \bm \omega'\in\Omega}\bm \omega^{\intercal} \bm Q'(s, a, \bm \omega')$ for some state $s$ and $\bm \omega$ of interest. Expand the expression of $d(\mathcal T\bm Q, \mathcal T\bm Q')$ we have
    
    \begin{eqnarray*}
         d(\mathcal T \bm Q, \mathcal T \bm Q')(s,a) & = & \sup_{\substack{s\in\mathcal S, a\in\mathcal A\\ \bm \omega\in\Omega}}\left| \bm \omega^{\intercal}((\mathcal T \bm Q)(s,a) - (\mathcal T \bm Q')(s,a))\right|\\
        & = & \sup_{\substack{s\in\mathcal S, a\in\mathcal A\\ \bm \omega\in\Omega}}\left| \gamma \cdot \bm \omega^{\intercal} \mathbb E_{s' \sim P(\cdot |s, a)} (\mathcal H \bm Q)(s',\bm \omega) - \gamma \cdot \bm \omega^{\intercal}\mathbb E_{s' \sim P(\cdot |s, a)} (\mathcal H \bm Q')(s', \bm \omega)\right|\\
        & \leq & \gamma\cdot\sup_{\substack{s'\in\mathcal S, \bm \omega\in\Omega}}\scalebox{0.88}{$\displaystyle \left|\bm \omega^{\intercal} \left[\arg_{Q}\sup_{a'\in\mathcal A, \bm \omega' \in \Omega} \bm \omega^{\intercal}\bm Q(s', a', \bm \omega')  - \arg_{Q}\sup_{a''\in\mathcal A, \bm \omega'' \in \Omega} \bm \omega^{\intercal} Q'(s', a'', \bm \omega'') \right]\right|$}\\
        & \leq & \gamma\cdot\sup_{\substack{s'\in\mathcal S, \bm \omega\in\Omega}} \left|\sup_{a'\in\mathcal A, \bm \omega' \in \Omega} \bm \omega^{\intercal}\bm Q(s', a', \bm \omega')  - \sup_{a''\in\mathcal A, \bm \omega'' \in \Omega} \bm \omega^{\intercal} Q'(s', a'', \bm \omega'') \right|\\        
    \end{eqnarray*}
    Step 2 to 3 is because $|\mathbb E[\cdot]| \leq \mathbb E[|\cdot|] \leq \sup|\cdot|$, and step 3 to 4 results from the cancellation between ${\color{cyan}\bm \omega^{\intercal}}$ and ${\color{violet} \arg_{Q}}$ (as justified above). 
    According to our assumption, let $a'$ and $\bm \omega'$ be the action and preference chosen to maximize the value of $\bm \omega^{\intercal}\bm Q$ for some state $s'$ and preference $\bm \omega$ of interest, then we derive
    \begin{eqnarray*}
        d(\mathcal T \bm Q, \mathcal T \bm Q')(s,a)    
        & \leq & \gamma\cdot\sup_{s'\in\mathcal S, \bm \omega \in \Omega} \left| \bm \omega^{\intercal} \bm Q(s', a', \bm \omega')  - \sup_{a''\in\mathcal A, \bm \omega'' \in \Omega} \bm \omega^{\intercal} \bm Q'(s', a'', \bm \omega'') \right|\\        
        & = & \gamma\cdot\sup_{s'\in\mathcal S, \bm \omega \in \Omega} | \bm \omega^{\intercal} \bm Q(s', a', \bm \omega')  - \bm \omega^{\intercal} \bm Q'(s', a', \bm \omega') \\
        && + \bm \omega^{\intercal} \bm Q'(s', a', \bm \omega')  -\sup_{a''\in\mathcal A, \bm \omega'' \in \Omega} \bm \omega^{\intercal} \bm Q'(s', a'', \bm \omega'') |\\
        & \leq & \gamma\cdot\sup_{s'\in\mathcal S, \bm \omega \in \Omega} \left| \bm \omega^{\intercal} \bm Q(s', a', \bm \omega')  - \bm \omega^{\intercal} \bm Q'(s', a', \bm \omega') \right|\\
        & \leq & \gamma\cdot\sup_{\substack{s'\in\mathcal S, a'\in\mathcal A\\ \omega\in\Omega}} \left| \bm \omega^{\intercal} \bm Q(s', a', \bm \omega')  - \bm \omega^{\intercal} \bm Q'(s', a', \bm \omega')\right| = \gamma d(\bm Q, \bm Q')
    \end{eqnarray*}
    The step 2 to 3 arises from the w.l.o.g. assumption that {\small $\displaystyle {\color{cyan}\bm \omega^{\intercal}} \bm Q(s', a', {\color{orange}\bm \omega'})  - \sup_{a'',{\color{brown} \bm \omega''}}{\color{cyan}\bm \omega^{\intercal}} \bm Q'(s', a'', {\color{brown}\bm \omega''}) \geq 0$}, as stated in lines 612 and 615. Thus, the whole expression in $|\cdot|$ is nonnegative and
{\small ${\color{cyan}\bm \omega^{\intercal}} \bm Q(s', a', {\color{orange}\bm \omega'})  - {\color{cyan}\bm \omega^{\intercal}} \bm Q'(s', a', {\color{orange}\bm \omega'}) \geq 0$}
. We can discard the last two terms since
{\small $\displaystyle {\color{cyan}\bm \omega^{\intercal}} \bm Q'(s', a', {\color{orange}\bm \omega'})\leq \sup_{a''\in\mathcal A, {\color{brown}  \bm \omega'' \in \Omega}} {\color{cyan}\bm \omega^{\intercal}} \bm Q'(s', a'', {\color{brown} \bm \omega''})$}.
Step 3 to 4 is because 
{\small $\displaystyle \sup_{s', \bm \omega'} f(s',a', \bm \omega') \leq \sup_{s', a'', \bm \omega'} f(s',a'',\bm \omega')$} 
holds for any $a'$ and $f(\cdot)$.
   
This completes our proof that $\mathcal T$ is a contraction.
\end{proof}
Remember that in our design, envelope version value distance $d$ is a pseudo-metric. In a pseudo-metric space iteratively applying contraction may not shrink to the desired fixed point. To assert the convergence effectiveness of our design for optimality operator, we need to investigate a generalized Banach's Fixed-Point Theorem in the pseudo-metric space.

\subsubsection{Generalized Banach's Fixed-Point Theorem}
\label{app:sec:envelope:theory}
\begin{repthm}{thm:generalized-fix}[Generalized Banach Fixed-Point Theorem]
    Given that $\mathcal T$ is a contraction mapping with Lipschitz coefficient $\gamma$ on the complete pseudo-metric space $\langle \mathcal Q, d \rangle$, and $\bm Q^*$ is defined as that in Theorem \ref{thm:fix-envelope}, it is always true that $\lim_{n\rightarrow\infty} d(\mathcal T^n \bm Q, \bm Q^*) = 0$ for any $\bm Q\in \mathcal Q$.
\end{repthm}
\begin{proof}
    By the symmetry and triangle inequality of pseudo-metric, for any $\bm Q, \bm Q' \in \mathcal Q$,
    \begin{eqnarray*}
        d(\bm Q, \bm Q') & \leq & d(\bm Q, \mathcal T\bm Q) + d(\mathcal T\bm Q, \mathcal T\bm Q') + d(\mathcal T\bm Q', \bm Q')\\
        & \leq & d(\bm Q, \mathcal T\bm Q) + \gamma d(\bm Q, \bm Q') + d(\mathcal T\bm Q', \bm Q')\\
        \Rightarrow d(\bm Q, \bm Q') & \leq &  [d(\mathcal T \bm Q, \bm Q)+ d(\mathcal T \bm Q', \bm Q')]/(1-\gamma)
    \end{eqnarray*}
    Consider two points $\mathcal T^\ell \bm Q, \mathcal T^m \bm Q$ in the sequence $\{\mathcal T^n \bm Q\}$. Their distance is bounded by
    \begin{eqnarray*}
        d(\mathcal T^\ell \bm Q, \mathcal T^m \bm Q) & \leq & [d(\mathcal T^{\ell+1} \bm Q, \mathcal T^\ell \bm Q) + d(\mathcal T^{m+1} \bm Q, \mathcal T^m \bm Q)] / (1-\gamma)\\
        & \leq & [\gamma^\ell d(\mathcal T \bm Q, \bm Q) + \gamma^m d(\mathcal T \bm Q, \bm Q)]/(1-\gamma)\\
        & \leq & \frac{\gamma^{\ell} + \gamma^{m}}{(1-\gamma)} d(\mathcal T\bm Q, \bm Q)
    \end{eqnarray*}
    since $\gamma \in [0, 1)$ the distance $d(\mathcal T^\ell \bm Q, \mathcal T^m \bm Q)$ converge to $0$ as $\ell, m \rightarrow \infty$, proving that $\{\mathcal T^n \bm Q\}$ is a Cauchy sequence. Because $\langle \mathcal Q, d \rangle$ is a complete pseudo-metric space, $\lim_{n\rightarrow \infty} d(\mathcal T^n \bm Q, \bm Q^\diamond) = 0$ for some $\bm Q^\diamond \in \mathcal Q$. Therefore,
    \[d(\mathcal T \bm Q^\diamond, \bm Q^\diamond) = \lim_{n\rightarrow\infty}d(\mathcal T^{n+1} \bm Q, \bm Q^\diamond) = \lim_{n\rightarrow\infty}d(\mathcal T^{n} \bm Q, \bm Q^\diamond) = 0\]
    We claim that $\bm Q^\diamond$ and $\bm Q^*$ must lie in the same equivalent class partitioned by relation $d(\cdot, \cdot) = 0$. Suppose $d(\bm Q^\diamond, \bm Q^*) \neq 0$, then we can get a contradiction
    \begin{eqnarray*}
        d(\bm Q^\diamond, \bm Q^*) & = & d(\bm Q^\diamond, \mathcal T \bm Q^\diamond) + d(\mathcal T\bm Q^\diamond, \mathcal T \bm Q^*) + d(\mathcal T\bm Q^*, \bm Q^*)\\
        & \leq & 0 + \gamma d(\bm Q^\diamond, \bm Q^*) + 0\\
        & < & d(\bm Q^\diamond, \bm Q^*)
    \end{eqnarray*}
    This proves our claim. Therefore, $\lim_{n\rightarrow\infty} d(\mathcal T^n \bm Q, \bm Q^*) = 0$ for any $\bm Q\in \mathcal Q$.
\end{proof}

In other words, Theorem \ref{thm:generalized-fix} guarantees that iteratively applying optimal operator $\mathcal T$ on any multi-objective Q-value function, the algorithm will terminate with a function $\bm Q^{\diamond}$ which is equivalent to $\bm Q^*$ under the measurement of pseudo-metric $d$. Actually, these $\bm Q^{\diamond}$'s are as good as $\bm Q^*$, since they all have the same utilities for each $\bm \omega$, and only differ in the real value when the utility corresponds a recess in the utility control frontier.

\subsubsection{Updating Scheme}
\label{app:sec:envelope:update}

\paragraph{Hindsight Experience Replay (HER)}

Paper~\cite{AndrychowiczCRS17} presents a technique for training a reinforcement learning to serve multiple goals. For each episode, the agent performs following a policy according to a randomly sampled goals. When updating, the agent uses the past trajectory update for multiple other goals in parallel. They referred this method as {\em hindsight experience replay} (HER). Though our settings are completely different, a similar method can be employed to update utility-based multi-objective Q-network here.

In the learning phase of the unknown linear preference scenario, for each training episode, the MORL agent randomly sample a preference  $\bm \omega$ from a certain distribution $\mathcal D_{\omega}$. When updating the multi-objective Q-network accordingly, for each sampled transition record $(s_t^i, a_t^i, \bm r_t^i, s_{t+1}^i)$ from replay buffer $\mathcal D_{\tau}$, we associate it with $N_{\omega}$ preferences $\{\bm \omega_{1}, \bm \omega_{2}, \dots, \bm \omega_{N_{\omega}}\}$ sampled from $\mathcal D_{\omega}$. The update is applied to an expended batch of size $\mathtt{minibatch\_size \times N_{\omega}}$. Note that the preferences sampled for actions only influence the agent’s actions, not the environment dynamics. In this way, the trajectories can be replayed with arbitrary preferences with ``hindsight".

\paragraph{Homotopy Optimization}
We use deep neural networks to approximate bounded functions in $\mathcal Q \subseteq (\Omega \rightarrow \mathbb R^m)^{\mathcal S \times \mathcal A}$ with parameters $\theta$. We refer to this neural network as a {\em multi-objective Q-network}. To drag $\bm Q$ close to $\mathcal T \bm Q$ at each update step, the multi-objective Q-network can be trained by minimizing a series of loss functions 
\begin{equation}
    L^{\mathtt A}_{k} (\theta) = \mathbb E_{s, a, \bm \omega} \left[ \left\| \bm y_k - \bm Q(s, a, \bm \omega; \theta) \right\|_2^2 \right],
    \label{eq:enve-lossA-2}
\end{equation}
which changes at each iteration $k$, where $\bm y_{k} = \mathbb E_{s'} \left[ \bm r(s,a) + \gamma (\mathcal H \bm Q)(s', a', \bm \omega; \theta_{k}) \right]$ is the target of iteration $k$. The target is fixed during optimizing this loss function. 

Minimizing the loss function $L^{\mathbb A}$ is trying to drag the vector $\bm Q$ close to $\mathcal T \bm Q$. This ensures the correctness of our algorithm, to predict a $\bm Q$ as the real solution, while this means the square error is hard to be optimized in practice. To address this problem, we use a sequence of auxiliary loss function $L^{\mathbb B}$ to directly optimized the value metric $d$, which is defined by
\begin{equation}
    L^{\mathtt B}_{k}(\theta) = \mathbb E_{s,a,\bm \omega}[\left|\bm \omega^{\intercal}\bm y_k - \bm \omega^{\intercal} \bm Q(s,a,\bm \omega;\theta)\right|]
    \label{eq:enve-lossB-2}
\end{equation}

Our final loss function sequence $L_{k}(\theta) = (1-\lambda_k)\cdot L^{\mathtt A}_{k}(\theta) + \lambda_k \cdot L^{\mathtt B}_{k}(\theta)$, where $\lambda_k$ is a weight to trade off between losses $L^{\mathtt A}_k$ and $L^{\mathtt B}_k$. We increase the value of $\lambda_k$ from $0$ to $1$, to shift our loss function from  $L^{\mathtt A}$ to  $L^{\mathtt B}$. This method called {\em homotopy optimization}~\cite{watson1989modern} is effective since for each update step, it uses the optimization result from the previous step as the initial guess. In the envelope deep MORL algorithm, $L^{\mathtt A}$ first ensure the prediction of $\bm Q$ is close to any real expected total reward, though it is hard to be optimal. And then $L^{\mathtt B}$ can provide an auxiliary force to pull the current guess along the direction with better utility. Figure \ref{fig:homotopy} illustrate an explanation for this homotopy optimization.

The parameters of the multi-objective Q-network will be updated by $\theta_{k+1} \leftarrow \theta_k - \eta$, where 
\begin{equation}
    \eta \propto \nabla_{\theta = \theta_k}L_k(\theta) = -\mathbb E_{s, a, s'}\left[\left(\bm r + \gamma (\mathcal H \bm Q)(s', a', \bm \omega; \theta_{k}) - \bm Q(s, a, \bm \omega; \theta_k)\right)^{\intercal}\nabla_{\theta=\theta_k}\bm Q(s,a, \bm \omega;\theta) \right].
    \label{eq:envelope-dqn-2}
\end{equation}

\begin{figure}[h]
    \centering
    \includegraphics[width=\textwidth]{./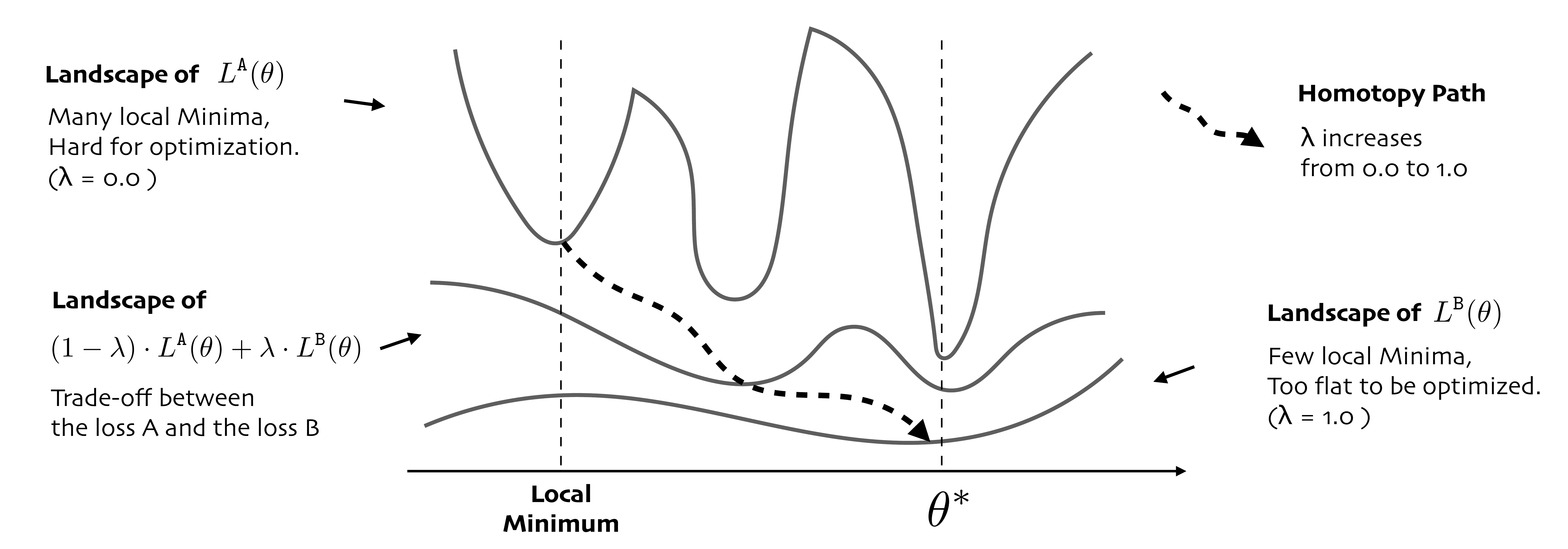}
    \caption{An explanation for homotopy optimization method used in the envelope deep MORL algorithm. The MSE loss $L^{\mathtt A}$ is hard for optimization since there are many local minima over its landscape. Although the value metric loss $L^{\mathtt B}$ has fewer local minima, it is also hard for optimization since there are many vectors $\bm Q$  minimizing value metric $d$. The landscape of $L^{\mathtt B}$ is too flat. The homotopy path connecting $L^{\mathtt A}$ and $L^{\mathtt B}$ provides better opportunities to find the global optimal parameters $\theta^*$}
    \label{fig:homotopy}
\end{figure}

When updating, we sample a minibatch of transition records from this replay buffer with HER. Theorems \ref{thm:fix-envelope}-\ref{thm:generalized-fix} and \ref{thm:minibatch} guarantees the convergence of this minibatch updating, with an extra assumption that we can update the Q-function according to equation~\ref{eq:envelope-dqn-2} for each $\bm \omega \in \Omega$ infinite times. We use {\em hindsight experience replay} (HER) to ensure this. Notice that we will apply our optimality filter on the HER expended batch. Therefore the cost of solving the convex envelope is acceptable. Our multi-objective Q-network can also be replaced with other models similar to those in single-objective off-policy RL algorithms. In the experiment, we also use some popular deep reinforcement learning techniques to stabilize and speed up our algorithms. The skeleton of our envelope deep MORL is shown as Algorithm \ref{algo:envelope}.
\section{Experimental Details}
\label{app:experiments}
We first demonstrate our experimental results on two synthetic domains, Deep Sea Treasure (DST) and Fruit Tree Navigation (FTN), as well as two complex real domains, Task-Oriented Dialog Policy Learning (Dialog) and SuperMario Game (SuperMario). We also elaborate specific model architecture and and implementation details in this section.

\subsection{Domain Details}
    
\begin{wrapfigure}{R}{0.45\textwidth}
    % \vspace{-0.2em}
    \includegraphics[width=0.45\columnwidth]{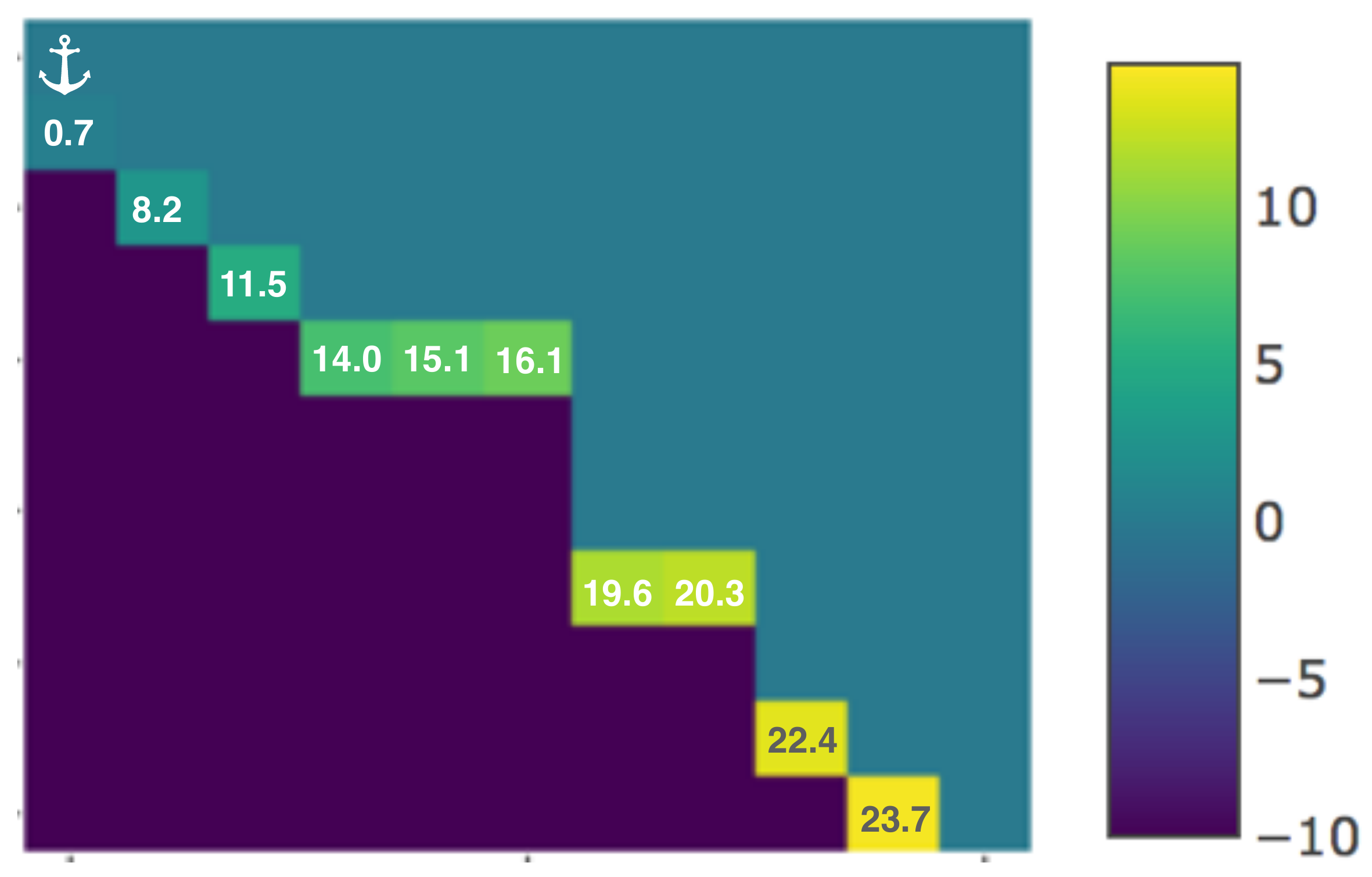}
%    \vspace{-1.5em}
    \caption{Deep Sea Treasure (DST):  An agent controls a submarine searching for treasures in a $10 \times 11$-grid world.  The state $s_t$ consists of the agent’s current coordinates $(x, y)$. An agent’s action spaces is navigation in four directions. The reward received by the agent is a 2-dimensional vector (time penalty, treasure value).}
%   	\vspace{-0.2em}
    \label{fig:dst-env}
\end{wrapfigure}

\paragraph{Deep Sea Treasure (DST)} Our first experiment domain is a grid-world navigation problem, {\em Deep Sea Treasure}. This episodic problem was first explicitly created to highlight the limitations of linear scalarization~\cite{VamplewDBID11} . However, in this paper, we use this environment as a delayed linear preference scenario. We ensure the Pareto frontier of this environment is convex, therefore the Pareto frontier itself is the its CCS.

In DST, an agent controls a submarine searching for treasures in a $10 \times 11$-grid world while trading off {\tt time-cost} and {\tt treasure-value}. The grid world contains 10 treasures of different values. Their values increase as their distances from the starting point $s_0= (0, 0)$ increase. An agent's action spaces are formed by navigation in four directions. The reward has two dimensions: the first dimension indicates a time penalty, which is $-1$ on all turns; and the second dimension is the treasure value which is $0$ except when the agent moves into a treasure location. We ensure the Pareto frontier of this environment to be convex. We depicted the map in Figure~\ref{fig:dst-env}.

\paragraph{Fruit Tree Navigation (FTN)} Our second experiment domain is a full binary tree of depth $d$ with randomly assigned vectorial reward $\bm r\in \mathbb R^6$ on the leaf nodes. These rewards encode the amounts of six different components of nutrition of the {\em fruits} on the tree: $\{\mathtt{Protein}, \mathtt{Carbs}, \mathtt{Fats}, \mathtt{Vitamins}, \mathtt{Minerals}, \mathtt{Water}\}$. For every leaf node, $\exists \bm \omega$ for which its reward is optimal, thus all leaves lie on the $\mathtt{CCS}$. The goal of our MORL agent is to find a path from the root to a leaf node that maximizes utility for a given preference, choosing between left or right subtrees at every non-terminal node.

Figure \ref{fig:ftn-env} shows an instance of the {\em fruit tree navigation} task when $d=6$, in which every non-leaf node is associated with zero reward and every fruit is a potential optimal solution in the convex cover set of the Pareto frontier. To construct this, we sample $\bm r^{(i)} = (\bm v^{(i)}_{+} + \bm v^{(i)}_{-})/ \|\bm v^{(i)}\|_{2}$, where $\bm v^{(i)}\sim \mathcal N_{6}(\bm 0,\bm I)$, for each fruit $i$ on a leaf node. The optimal multiple-policy model for this tree structured MOMDP should contain all the paths from the root to different desired fruits. In experiments, we also test on $d=5$ and $d=7$ cases.

In this multi-objective environment, an optimal policy can be easily learned if we know the preference function $f_{\bm \omega}(\cdot ) = \langle \bm \omega, \cdot \rangle$ for scalarization. However, since here we are interested in evaluating whether a multiple-policy neural network, trained with deep MORL algorithms, can find and maintain all the potential optimal policies (i.e., paths to every leaf node) when the preference function is unknown, and adapt to the optimal policy when a specific preference is given or hidden during execution.

\begin{figure}[t]
    \centering
    \includegraphics[width=\columnwidth]{./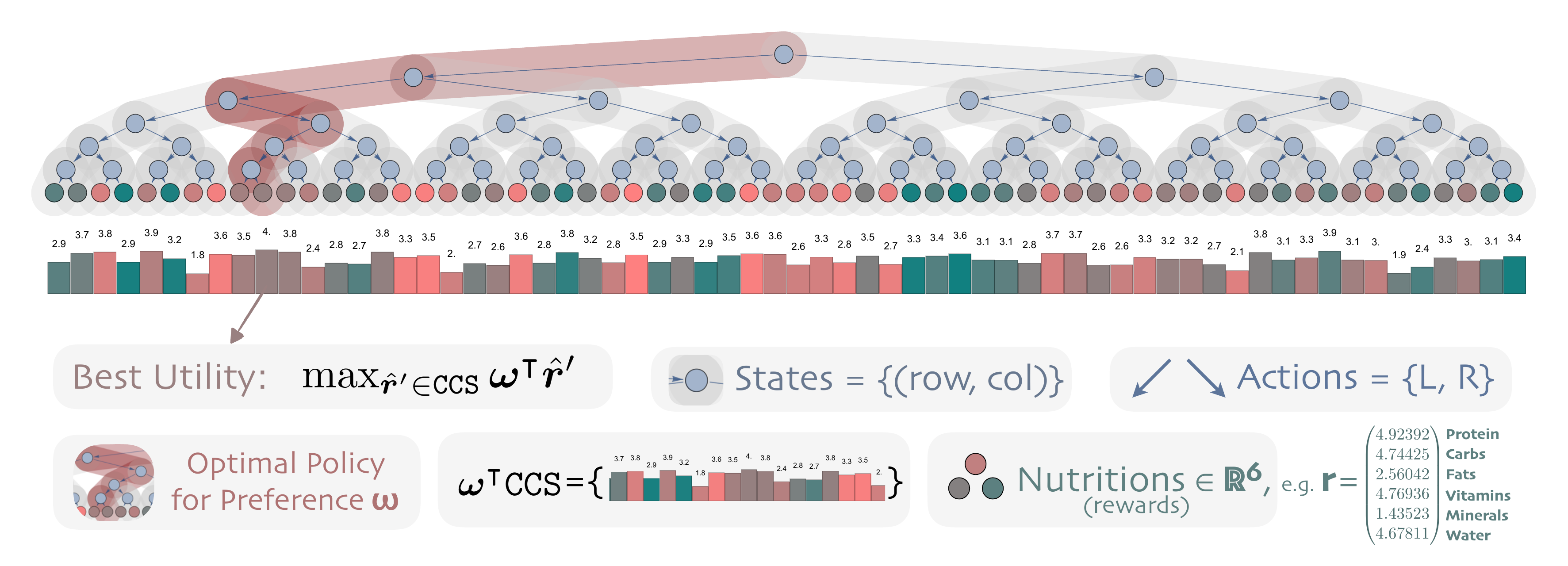}
    \caption{Fruit Tree Navigation (FTN):  An agent travels from the root node to one of the leaf node to pick a fruit according to a post-assigned preference $\bm \omega$ on the components of nutrition, treated as different objectives. The observation of an agent is its current coordinates $(\mathtt{row}, \mathtt{col})$, and its valid actions are moving to the left or the right subtree.}
    \label{fig:ftn-env}
\end{figure}

\paragraph{Task-Oriented Dialog Policy Learning (Dialog)}
Our third experimental domain is a modified task-oriented dialog system in the restaurant reservation domain based on PyDial~\cite{UltesRSVKCBMWGY17}, where an agenda-base user simulator~\cite{SchatzmannY09} with an error model to simulate the recognition and understanding errors arisen in the real system due to in the speech noise and ambiguity. A. We consider the task success rate and the dialog brevity (measured by number of turns) as two competing objectives of this domain. 

Finding a good trade-off between multiple potentially competing objectives is usually domain-specific and not straightforward. 
For example, in the case when the objectives are brevity and success, if the relative importance weight for success is too high, the resulting policy is insensitive to potentially annoying actions such as {\tt repeat()} provided that the dialogue is eventually successful. In this case, the obtained optimal policy cannot fit all users' preference, and sometimes is out of our expectation. Adaptation to user preferences and balancing these objectives is rarely considered. 

In the standard single-objective reinforcement learning formulation, the goal of the policy model is to interact with a human user by choosing actions in each turn to maximize future rewards. We define the dialogue state shared by dialogue state tracker in the $t$-th turn as state $s_t$ . The action taken by policy model under current policy $\pi_{\theta}$ with parameters $\theta$ in the $t$-th turn as $a_t$ , and $a_t\sim\pi(\cdot|s_t)$. The stochastic transition kernel is unknown but determined by human users or user simulators. In an ideal dialogue environment, once the policy model emits an action $a_t$ , the human user will give an explicit feedback, like a normal response or a feedback of whether the dialogue is successful, which will be converted to a reward signal $r_t$ delivering to the policy model immediately, and then the policy model will transit to next state $s_t$ . The reward signal is an average of values of two objectives, brevity and success, e.g., $r_t = 0.5 r_t^{\tt turn} + 0.5 r_t^{\tt succ}$. Typically, $r^{\mathtt{turn}}_t$ is fixed for each turn as a negative constant $R^{\mathtt{turn}}$, while $r^{succ}_t$ equals a positive constant $R^{\mathtt{succ}}$ only when the dialogue terminates and receives a successful user feedback otherwise it equals zero.

To address this problem, we transform the dialogue learning process into a MORL scenario with vectorized (2-D) rewards:
$
	\bm r_t = \left[\begin{matrix}
        r_t^{\mathtt{turn}} & r_t^{\mathtt{succ}}
    \end{matrix}\right]^{\intercal}
$,
where $r_t^{\mathtt{turn}}$ is a turn penalty for the brevity objective and $r_t^{\mathtt{succ}}$ is a reward provided on successful completion of the task. In the learning phase, the linear preference $\bm \omega$ over these two objectives are unknown, while the computational resources are abundant. The task-oriented dialogue system needs to learn all the possible optimal policies with sampled $\bm \omega$'s (achieved by user simulators or collected interactions with real users). While in the adaptation phase, learning is unaffordable because of the limitation of resources. The task-oriented dialogue system needs to respond the user with a specified user preference. User's utility increase is aligned with the system's utility increase $\bm \omega^{\intercal}\bm r_t$. Paper~\cite{UltesBCMRSWGY17} proposes a structured method, which is equivalent to the scalarized baseline without hindsight experience replay, for finding the optimal weights for a multi-objective reward function.

As for more experimental details, our dialog domain is a restaurant reservation hotline which provides information about restaurants in Cambridge. There are 3 search constraints, 9 informational items that the user can request, and 110 database entities. The reward $R^{\tt turn}$ is $-1$ for each turn, and $R^{\tt succ} = 20$. The maximal length of dialogue is 25. We apply our envelope deep MORL algorithm to this dialogue policy learning task, and compare to traditional single-objective methods and other baselines. All the single-objective and multi-objective reinforcement learning are trained for 3,000 sessions with 15\% simulated speech recognition and understanding error rate.

\paragraph{Multi-Objective SuperMario Game (SuperMario)} Our final environment is a version of the popular video game Super Mario Bros. We modify the open-source environment from OpenAI gym~\cite{gym-super-mario-bros} to provide vectorized rewards encoding five different objectives: {\tt x-pos}: value corresponding to the difference in Mario's horizontal position between current and last time point,  {\tt time}: a small negative time penalty, {\tt deaths}: a large negative penalty given each time Mario dies\footnote{Mario has up to three lives in one episode.},  {\tt coin}:  rewards for collecting coins, and {\tt enemy}: rewards for eliminating an enemy. The state is a stack of four continuous frames of game images rendered by the simulator, and there are seven valid actions each step: \{`NOOP',`right',`right+A',`right+B',`right+A+B', `A', `left'\}, where the button `A' is used to jump and the button `B' is used to run. We restrict the Mario to only play the stage I.

We use an A3C~\cite{MnihBMGLHSK16} variant of our envelope MORL algorithm. During the learning phase, the agent does not know the underlying preference, and hence needs to learn a multi-objective policy within 32k training episodes. During the adaptation phase, we test our agents under 500 uniformly random preferences and test the its preference elicitation ability (as described in Section~\ref{s:inference}) within 100 episodes to uncover the underlying preference that maximizes utility.

\subsection{Implementation Details}

Our multi-objective Q-network can be replaced with any model similar to that in single-objective off-policy RL algorithms like DDPG~\cite{LillicrapHPHETS15}, NAF~\cite{GuLSL16} or SDQN~\cite{MetzIJD17}. In the experiment, we use a variate of deep reinforcement learning techniques including double Q-learning~\cite{HasseltGS16} with a target network and prioritized experience replay~\cite{SchaulQAS15}, which stabilize and speed up our algorithms. 

\vspace{-.5em}
\paragraph*{Architectures of the Multi-objective Q-Network} We implement the Multi-objective Q-networks (MQNs) by 4 fully connected hidden layers with  $\{16, 32, 64, 32\} \times (\mathtt{dim}(S)+m)$ hidden unites respectively. The multi-objective Q-networks are similar to Deep Q-Networks (DQNs)~\cite{MnihKSRVBGRFOPB15}, but differs on inputs. An input of the  multi-objective Q-network is a concatenation of state representation and parameters of a linear preference function. The output layer of the scalarized MORL algorithm is of size $|\mathcal A|$, and that of envelope version is of size $m\times |\mathcal A|$. Here $\mathtt{dim}(\mathcal S)$ is the dimensionality of the state space, $|\mathcal A|$ is the cardinality of the action set, and $m$ is the number of objectives.

\vspace{-.5em}
\paragraph*{Multi-Objective A3C}
We use the multi-objective A3C (MoA3C) algorithms for Mario experiment. The skeleton of the envelope MoA3C algorithm is provided in Algorithm \ref{algo:enve-a3c}. Im MoA3C Both critic and actor networks contain three shared convolutional layers for feature extraction from raw images input. The extracted features are then concatenated with preferences, and fed into two-layer fully connected networks for output. For the scalarized version MoA3C, the output of the critic network is just one-dimensional utility prediction, whereas the output of the envelope version critic network is $m$-dimensional returns prediction. Both scalarized and envelope versions have the same actor network architecture to output the probability distribution over the action space. We train them with 16 workers in parallel with different sampled preferences, and it take around 10 hours for the envelope version MoA3C to converge to a good level of performance.

\vspace{-.5em}
\paragraph*{Training with Prioritized Double Q-Learning} When training the with our MORL algorithm on DST and FTN tasks, we employ techniques of {\em prioritized experience reply}~\cite{SchaulQAS15} and {\em double Q-learning}~\cite{HasseltGS16} to speed up the training process and to yield more accurate value estimates. Double Q-Learning introduces a target network $Q_{\textrm{target}}$ to replace the estimate of $\mathcal T Q(s, a, \bm \omega)$, $y_t(\bm \omega) = \bm\omega^{\intercal} \bm r + \gamma Q_{\textrm{target}}(s',a,\bm \omega)$ with $y_t^{\textrm{double}}(\bm \omega) = \bm\omega^{\intercal} \bm r + \gamma  Q_{\textrm{target}}(s', \argmax_a Q(s',a,\bm \omega), \bm \omega)$ for scalarized version of algorithm, and similarly we replace $y_t(\bm \omega) = \bm\omega^{\intercal} \bm r + \gamma \bm \omega^{\intercal} \bm Q_{\textrm{target}}(s',a,\bm \omega)$ with $y_t^{\textrm{double}}(\bm \omega) = \bm\omega^{\intercal} \bm r + \gamma \bm \omega^{\intercal} \bm Q_{\textrm{target}}(s', \argmax_a \bm \omega^{\intercal} \bm Q(s',a,\bm \omega), \bm \omega)$ for envelope version of MORL algorithm.  We update the target network by coping from Q-network every $100$ steps. 
The priority of sampling transition $\tau_i=(s,a, \bm r, s')$ is $p_i^{\textrm{scalarized}}=|y^{\textrm{double}}(\bm \omega) - Q(s, a, \bm \omega)|$ for the scalarized version of MORL algorithm, and similarly $p_i^{\textrm{envelope}}=|y^{\textrm{double}}(\bm \omega) - \bm \omega^{\intercal} \bm Q(s, a, \bm \omega)|$ for the envelope version of algorithm, where $\bm \omega$ is sampled from the distribution $\mathcal D_{\omega}$. When updating the network, a trajectory is sampled by $\tau_{i}\sim P(i) = p_i/\sum_{i}p_i$. The replay memory size is 4000 and the batch size is 32. For the deep tree navigation task, we train each model for total 5000 episodes, and update it by Adam optimizer every step after at least a batch experiences are stored in reply buffer, with a learning rate $\mathtt{lr}=0.001$.

\vspace{-.5em}
\paragraph*{Training Details for Dialogue Policy Learning}
All the single-objective and multi-objective reinforcement learning are trained for 3,000 sessions. We evaluate learned policies on 5,000 sessions with randomly assigned user preferences. The preference distribution $\mathcal D_\omega$ is same as the one we used in previous deep sea treasure experiment (see Section 4.3.1), which is a nearly uniform distribution. For the single-objective reinforcement learning algorithms, we set three groups of $\bm \omega$ as $\{(0.5, 0.5),$ $(0.2,0.8),$ $ (0.8,0.2)\}$. For the envelope deep MORL algorithms, the homotopy path is a monotonically increasing track where $\lambda$ increases from $0.0$ to $1.0$ exponentially. The number of sampled preferences $N_{\omega}$ is 32 for both scalarized and envelope deep MORL algorithms. The exploration policy used for training these reinforcement learning algorithms is $\epsilon$-greedy, where $\epsilon = 0.5$ initially and then decays to zero linearly during the training process. For all the single-objective and multi-objective algorithms, we employ the same deep Q-network architecture, which comprises 3 fully connected hidden layers with $\{16, 32, 32\} \times (\mathtt{dim}(S)+m)$ hidden units. The minibatch size is 64 for all. An Adam optimizer is used for updating the parameters of all these algorithms with an initial learning rate $\mathtt{rl} = 0.001$.

\vspace{-.5em}
\paragraph*{Computing Infrastructure} We ran the synthetic experiments and the dialog experiments on a workstation with one GeForce GTX TITAN X GPU, 12 Intel(R) Core(TM) i7-5820K CPUs @ 3.30GHz, and 32G memory and ran the SuperMario experiments on a cluster with twenty 2080 RTX GPUs, 40 CPUs and 200GB memory.
\begin{algorithm}[t]
    \caption{Envelope Multi-Objective A3C (EMoA3C) Algorithm - Pseudocode for each actor-learner thread}
    \label{algo:enve-a3c}
    \KwIn{
    	\vspace{-0.5em}
        \begin{itemize}
        	\setlength{\itemsep}{0pt}
            \item a preference sampling distribution $\mathcal D_{\omega}$;
            \item minibatch sizes for transitions $N_{\tau}$ and for preference $N_{\omega}$;
            \item a multi-objective critic-network $\bm V$ parameterized by $\theta_{v}$;
            \item an actor-network $\pi$ parameterized by $\theta_{\pi}$;
            \item a balance weight $\lambda$ for critic losses $L^{\tt A}$ and $L^{\tt B}$.
        \end{itemize}
    }
    
    Initialize replay buffer $\mathcal D_{\tau}$.\\
    \For{episode = $1, \dots, M$}{
        Synchronize thread-specific parameters $\theta_{v}' = \theta_{v}$  and $\theta_{\pi}' = \theta_{\pi}$.\\
        Sample a linear preference $\bm \omega \sim \mathcal D_{\omega}$.\\
        \For{t = $0, \dots, N-1$}{
        	Observe state $s_t$.\\
            Sample an action $a_t$ using according to policy $\pi(a_t | s_t, \bm \omega; \theta_\pi')$.\\
            Receive a multi-objective reward $\bm r_t$ and observe new state $s_{t+1}$.\\
            Store transition $(s_t, a_t, \bm r_t, s_{t+1})$ in $\mathcal D_{\tau}$.\\
            \If{update}{
                Sample random minibatch of transitions $(s_j, a_j, \bm r_j, s_{j+1})$ from $\mathcal D_{\tau}$.\\
                Sample $N_{\omega}$ preferences $W = \{\bm \omega_{1}, \bm \omega_{2}, \dots, \bm \omega_{N_{\omega}}\}\sim \mathcal D_{\omega}$.\\
                Compute $(\mathcal T \bm V)_{ij} = \begin{cases}
                    \bm r_j. & \textrm {for terminal~} s_{j+1};\\
                    \bm r_j + \gamma \arg_{V}\max_{a\in\mathcal A, \bm \omega'\in W} \bm \omega^{\intercal}_i\bm V(s_{j+1}, \bm \omega'; \theta), & \textrm {for non-terminal~} s_{j+1}.
                \end{cases}$\\
                Calculate $d\theta_v$ according to similar equations \ref{eq:enve-lossA} and \ref{eq:enve-lossB} w.r.t. $\theta_v'$:
                \begin{equation*}
                   d\theta_v =  (1-\lambda) \cdot \nabla_{\theta_v'}L^{\mathtt A}(\theta_v') + \lambda \cdot \nabla_{\theta_v'}L^{\mathtt B}(\theta_v').    
                \end{equation*}\\
                Calculate $d\theta_\pi$ using the advantage w.r.t. $\theta_v'$ and $\theta_\pi'$ 
                \begin{equation*}
                   d\theta_\pi =  \frac{1}{N_{\omega} N_{\tau}}\sum_{i,j}\nabla_{\theta_\pi'} \log \pi(a_j|s_j, \bm \omega_i;\theta_\pi')\left[\bm \omega_{i}^{\intercal}((\mathcal T \bm V)_{ij} - \bm V(s_j, \bm \omega_i;\theta_v'))\right].    
                \end{equation*}\\
                Perform asynchronous update of $\theta_v$ using $d\theta_v$ and of $\theta_\pi$ using $d\theta_\pi$.\\
            }
        }
    }
\end{algorithm}
\section{Additional Experimental Results}
\label{app:sec:res}

\subsection{Evaluation Metrics}

We design two metrics to evaluate the empirical performance of our algorithms on test tasks. Slightly different from the main article, we introduce adaptation quality (AQ) here other than adaptation error to adjust the value range and get a score in $(0,1]$.

\begin{figure}[t]
    \includegraphics[width=0.9\columnwidth]{./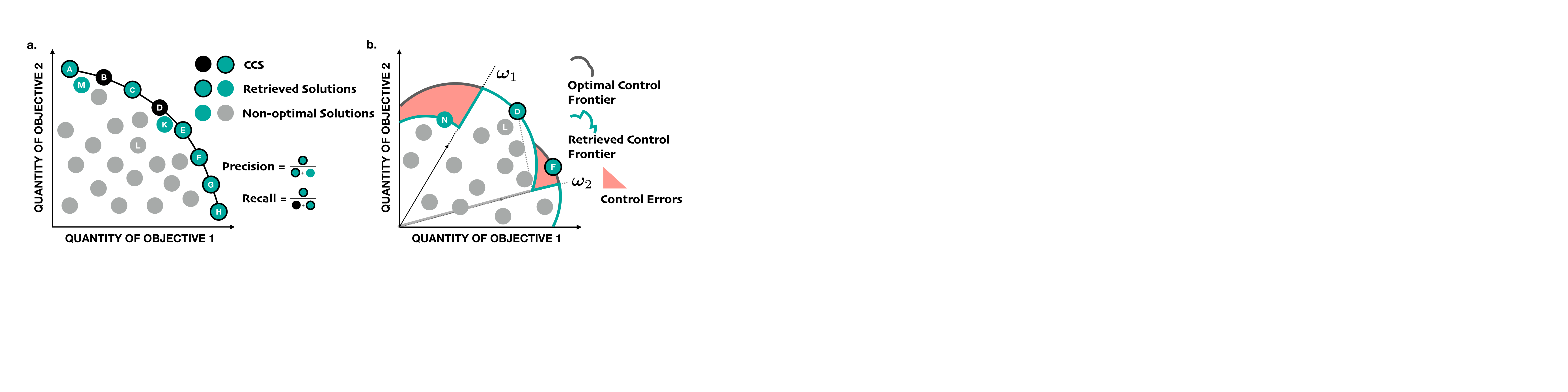}
%    \vspace{-1.5em}
    \caption{Quantitative evaluation metrics for multi-objective reinforcement learning. (a.) Coverage ratio measures an agent's ability to find all the potential optimal solutions in the convex coverage set of Pareto frontier.  (b.) Adaptation quality measures an agent's ability of policy adaptation to real-time specified preferences.}
    \label{fig:eval-metric-2} 
  	\vspace{-0.2em}
\end{figure}    

\textbf{Coverage Ratio (CR).}
\label{app:sec:eval-metic:cr}
The first metric is \textit{coverage ratio} (CR), which evaluates the agent's ability to recover optimal solutions in the convex coverage set ($\mathtt{CCS}$). If $\mathcal F \subseteq \mathbb R^{m}$ is the set of solutions found by the agent (via sampled trajectories),
% and $\mathcal F_{Q} \subseteq \mathbb R^{m}$ is the set of agent's guessed solutions (via predictions from Q-network), 
we define $\mathcal F \cap_{\epsilon}\mathtt{CCS}:=\{x\in \mathcal F \mid \exists y \in \mathtt{CCS} \textrm{~s.t.~} \|x-y\|_{1}/\|y\|_1 \leq \epsilon \}$ as the intersection between these sets with a tolerance of $\epsilon$. 
The CR is then defined as:
\begin{equation}
    \mathtt{CR_{F1}}(\mathcal F) = \mathtt{2} \cdot \frac{ \mathtt{precision\cdot recall}}{\mathtt{precision} + \mathtt{recall}},
\end{equation}
where the $\mathtt{precision} = |\mathcal F \cap_{\epsilon} \mathtt{CCS}|/|\mathtt{\mathcal F}|$, indicating the fraction of optimal solutions among the retrieved solutions, and the $\mathtt{recall} = |\mathcal F \cap_{\epsilon}\mathtt{CCS}|/|\mathtt{CCS}|$, indicating the fraction of optimal instances that have been retrieved over the total amount of optimal solutions (see Figure~\ref{fig:eval-metric}(a)).
The F1 score is their harmonic mean. 
In our evaluation of both synthetic tasks DST and FTN, we set $\epsilon = 0.00$ for $\mathcal F_{\Pi_{\mathcal L}}$ (executive frontier) and $\epsilon = 0.20$ for $\mathcal F_{Q}$ (frontier predicted by Q-function). 
Figure \ref{fig:eval-metric-2} (a.) illustrates an example of the computation of coverage ratio.

\textbf{Adaptation Quality (AQ).}
\label{app:sec:eval-metic:aq}
Our second metric compares the retrieved control frontier with the optimal one, when an agent is provided with a specific preference $\bm\omega$ during the adaptation phase. 
The adaptation quality is defined by
\begin{equation}
    \mathtt{AQ}(\mathcal C) = \frac{1}{1+ \alpha \cdot \mathtt{err}_{\mathcal D_{\omega}}},
\end{equation}
where the $\mathtt{err}_{\mathcal D_{\omega}} = \mathbb E_{\bm \omega \sim \mathcal D_{\omega}} [|\mathcal C(\bm \omega) - \mathcal C_{\textrm{opt}}(\bm \omega)| / \mathcal C_{\textrm{opt}}(\bm \omega)]$ is the expected relative error between optimal control frontier $\mathcal C_{\textrm{opt}}: \Omega \rightarrow \mathbb R$ with $\bm \omega \mapsto \max_{\bm{\hat r}\in \mathtt{CCS}} \bm \omega^{\intercal} \bm{\hat r}$ and the agent's control frontier $\mathcal C_{\pi_{\bm \omega}} = \bm \omega^{\intercal} \bm{\hat r}_{\pi_{\bm \omega}}$ , and $\alpha$ is a scaling coefficient to amplify the discrepancy. 

Similarly, $\mathcal C_{Q}$ is the control frontier guessed by an agent (via predictions from Q-network) and we can compute the predictive ${\tt AQ}(\mathcal C_{Q})$ to evaluate the quality of multi-objective Q-network on the value prediction accuracy.

In all experiment domains, we use Gaussian distributions which are restricted to be positive part and $\ell1$-normalized as our $\mathcal D_{\omega}$. We set $\alpha = 0.01$ for the DST task (because the penalty range is large), and $\alpha = 10.0$ for the FTN task (because the value differences are small). 
Figure \ref{fig:eval-metric-2} (b.) shows examples of optimal control frontier, retrieved control frontier, and the control discrepancy.
Overall, CR provides an indication of agent's ability to learn the space of optimal policies in the learning phase, while AE tests its ability to adapt to new scenarios.

\subsection{Deep Sea Treasure (DST)}

We show more experimental results on DST tasks in this section. We train all agent for 2000 episodes. After training in the learning phase, our envelope MORL algorithms find all the potential optimal solutions and their corresponding policies. 

Figure~\ref{fig:syn-dst-frontier} presents the real CCS and the retrieved solutions of a MORL algorithm. The scalarized and envelope algorithm  can find all the whole CCS. Figure~\ref{fig:syn-dst-frontier} (b.) illustrates the real control frontier (the blue curve), retrieved control frontier (the green curve), and the predicted control frontier (the orange line). The retrieved control frontier is almost overlapped with the real control frontier, which indicates that the alignment between preferences and optimal policies is perfectly well. The agent can respond any given preference with the policy resulting in best utility.

\begin{minipage}{\textwidth}
  \begin{minipage}[b]{0.48\textwidth}
    \centering
    \includegraphics[width=\textwidth]{./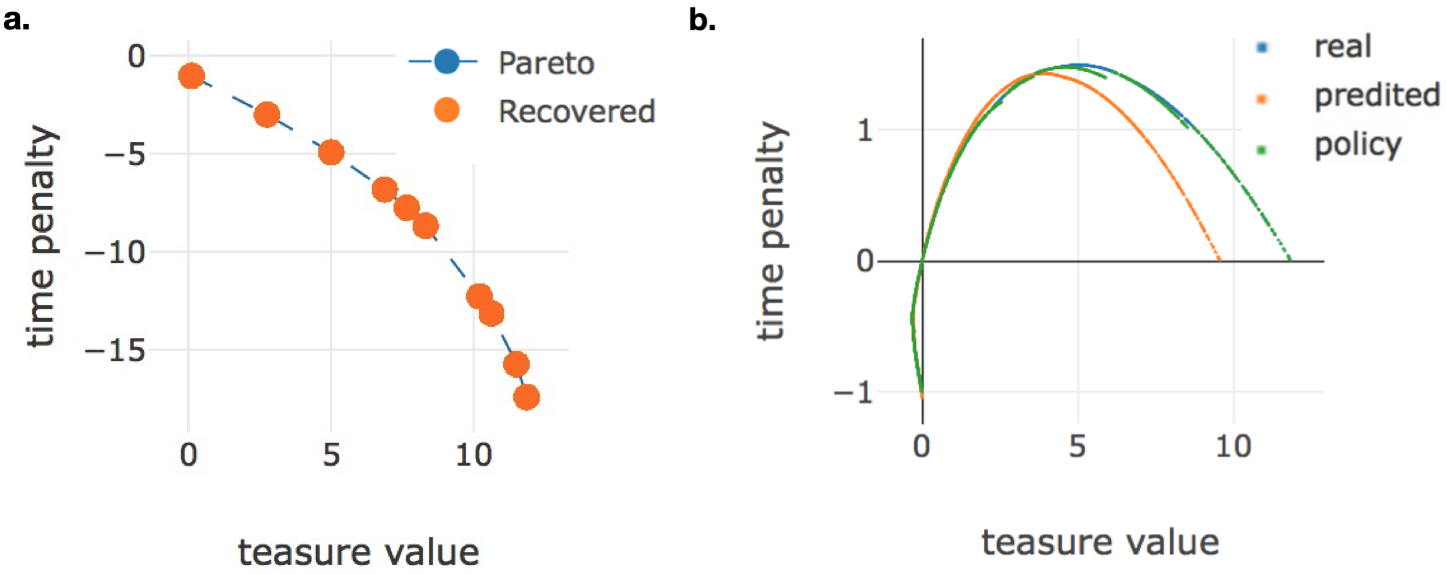}
    \captionof{figure}{The solutions and control frontier found by MORL algorithm in the deep sea treasure task. (a.) The real CCS and the retrieved solutions. (b.) The real control frontier, predicted control frontier, and retrieved control frontier in the adaptation phase.}
    \label{fig:syn-dst-frontier}
  \end{minipage}
  \hfill
  \begin{minipage}[b]{0.50\textwidth}
    \centering
    \resizebox{\columnwidth}{!}{
        \begin{tabular}{cccc}
    	\toprule
		\multirow{2}{3em}{Method} &  \multicolumn{3}{c}{DST} \\
		\cmidrule(lr){2-4}
		{} & {\tt CR F1} & {\tt Exe-AQ ($\alpha = 0.01$)} & {\tt Pred-AQ ($\alpha = 0.01$)}\\
		\midrule
		MOFQI &  
		0.639 $\pm$ 0.421 & 0.417 $\pm$ 0.134 &
		0.226 $\pm$ 0.138
        \\
		CN+OLS&
		0.751 $\pm$ 0.163 & 0.743 $\pm$ 0.008 &
		0.177 $\pm$ 0.089
        \\
		Scalarized &
		0.989 $\pm$ 0.024 & 0.998 $\pm$ 0.001 &
		{\bf 0.950 $\pm$ 0.034}
        \\
		Envelope &
		{\bf 0.994 $\pm$ 0.001} & {\bf 0.998 $\pm$ 0.000} &
		0.850 $\pm$ 0.045
        \\
		\bottomrule
    \end{tabular}
    }
      \captionof{table}{Performance comparison of different MORL algorithm in learning and adaptation phases on the DST environment. The {\tt Exe-AQ} is the AQ measured on the real trajectories in the adaptation phase, and the {\tt Pred-AQ} is the AQ values measured on the predictions of Q functions.}
      \label{tab:dst}
    \end{minipage}
  \end{minipage}
  
Table \ref{tab:dst} provides the coverage ratio (CR) and adaptation quality (AQ) comparisons of different MORL algorithms. We trained all the algorithm in 2000 episodes and test for another 2000 episodes. Each data point in the table is an average of 5 train and test trails. For the CN+OLS baseline, we allow it to iterate for 25 corner weights. The envelope algorithm achieves best CR and execution AQ, and the scalarsized algorithm achieves best predictive adaption quality. Note that traditional evaluations rarely test the algorithm's ability in learning phase and adaptation phase separately. Thus our setting is more challenging.

The classical deep sea treasure task shows the effectiveness of our deep multi-objective reinforcement learning algorithms, while it is relatively easy for the agent to find all the good policies. It only contains 10 potentially optimal solutions in the real CCS, therefore scalarized algorithm can efficiently solve this problem.

\subsection{Fruit Tree Navigation (FTN)}
\begin{figure}[h]
    \centering
    \includegraphics[width=0.98\columnwidth]{./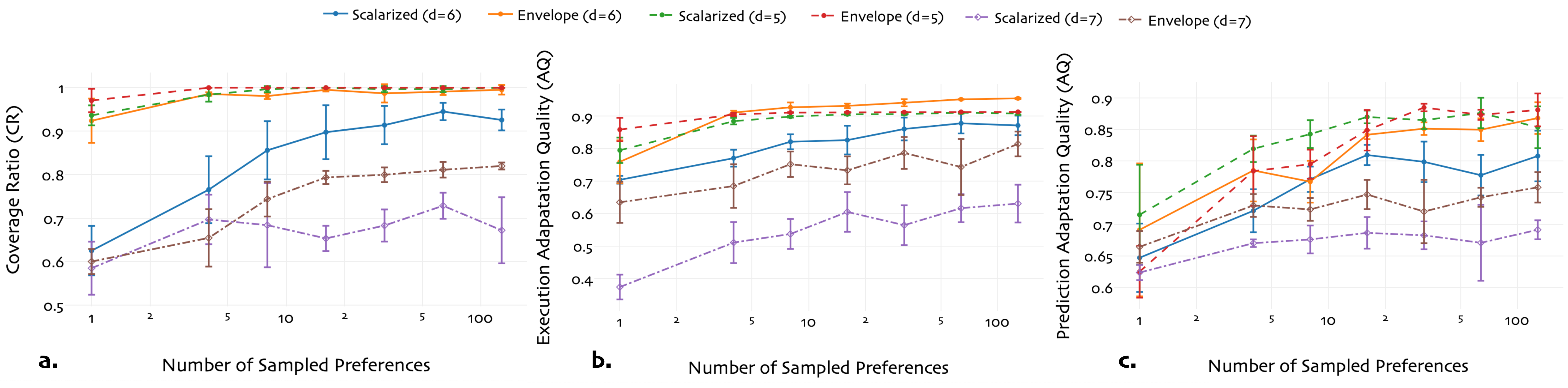}
%    \vspace{-1.5em}
    \caption{Coverage Ratio (CR) and Adaptation Quality (AQ) comparison of the scalarized deep MORL algorithm and the envelope deep MORL algorithm tested on fruit tree navigation tasks of depths d = 5, 6, 7. Trained on 5000 episodes and test on 2000 and 5000 episode to estimate CR and AQ, respectively. Each data point in the figure is an average of 5 trails of training and test.}
    \label{fig:syn-exp-ftn} 
  	\vspace{-0.2em}
\end{figure}    

\paragraph{Sample Efficiency} To compare sample efficiency during the learning phase, we train envelope MORL algorithm and baselines on the FTN task of depth $5,6,7$ for 5000 episodes. We compute coverage ratio (CR) over 2000 episodes and adaptation quality (AQ) over 5000 episodes. Figure \ref{fig:syn-exp-ftn} (blue and orange curves for $d=6$) shows plots for the metrics computed over a varying number of sampled preferences $N_{\omega}$ from $1$ to $128$. When $N_{\omega} = 1$, both algorithms only update under single preference each time, therefore runs fast while it makes wasteful use of interactions. When $N_{\omega} = 128$, both algorithm needs to update for 128 sampled preferences in a batch. In this case the algorithms run slowly, while can make better use of interactions. Each point on the curve is averaged over 5 experiments. We observe that the envelope MORL algorithm consistently has a better CR and AQ scores than the scalarized baseline, with smaller variances. As $N_\omega$ increases, CR and AQ both increase, which shows better use of historical interactions for both algorithms when $N_\omega$ is larger. This reinforces our theoretical analysis that the envelope MORL algorithm has better sample efficiency than the scalarized baseline.

Table \ref{tab:sample-craq-ft6} compares their coverage ratios. Each entry in the table is an average of 5 experiments (train and test). Due to the property of FTN task that every solution is potentially optimal, the CR precision is always 1 for both scalarized and envelope algorithm. While for the recall, which indicates the ability to find unseen optimal solutions in the learning phase, the envelope deep MORL algorithm is better than the scalarized version for all numbers of sampled preferences, and has relatively lower variances. Therefore the same to F1 scores. As $N_{\omega}$ increases, the CR value increases for both scalarized version and envelope version algorithms, which verifies aa better use of historical interactions for both algorithm when $N_{\omega}$ is larger. As here the initial performance for the envelope version algorithm is good enough, it suddenly surpasses 0.98 of CR F1 score when it can sample more than one preference each update.

\begin{table*}[h]
    \centering
    \resizebox{\columnwidth}{!}{
    \begin{tabular}{c|c|c|c|c|c|c}
    & \multicolumn{2}{c}{Execution AQ} & \multicolumn{2}{c|}{Prediction AQ} & \multicolumn{2}{c}{CR F1}\\
    \hline
    $N_{\omega}$ & Scalarized & Envelope& Scalarized & Envelope& Scalarized & Envelope\\
    \hline
    1 &$0.7037 \pm 0.012$ &$0.759 \pm 0.066$ &$0.6474 \pm 0.054$ &$0.6915 \pm 0.105$ & $0.625 \pm 0.057$ & $0.924 \pm 0.051$ \\
4 &$0.7701 \pm 0.026$ & $0.9101 \pm 0.006$ &$0.7216 \pm 0.034$ &$0.7853 \pm 0.049$ & $0.7654 \pm 0.077$ & $0.9856 \pm 0.004$ \\
8 &$0.8205 \pm 0.023$ &$0.9261 \pm 0.015$ &$0.7714 \pm 0.02$ &$0.7675 \pm 0.033$ & $0.856 \pm 0.067$ & $0.9808 \pm 0.007$ \\
16 &$0.8255 \pm 0.044$ &$0.9306 \pm 0.007$ &$0.8097 \pm 0.016$ &$0.8417 \pm 0.007$ & $0.8976 \pm 0.062$ & $0.9952 \pm 0.004$ \\
32 &$0.8597 \pm 0.035$ &$0.9402 \pm 0.011$ &$0.7989 \pm 0.032$ &$0.8513 \pm 0.01$ & $0.914 \pm 0.044$ & $0.987 \pm 0.021$ \\
64 &$0.877 \pm 0.031$ &$0.9506 \pm 0.001$ &$0.7778 \pm 0.032$ &$0.8497 \pm 0.018$ & $0.9452 \pm 0.02$ & $0.9904 \pm 0.007$ \\
128 &$0.8705 \pm 0.03$ &$0.9536 \pm 0.002$ &$0.8081 \pm 0.04$ &$0.868 \pm 0.025$ & $0.9258 \pm 0.024$ & $0.9952 \pm 0.011$ \\
    \hline
    \end{tabular}
    }
    \caption{Sample Efficiency - Coverage Ratio (CR) and Adaptation Quality (AQ) comparison of the scalarized deep MORL algorithm and the envelope deep MORL algorithm tested on fruit tree navigation task, where the tree depth $d=6$. Trained on 5000 episodes.}    
        \vspace{-1em}
    \label{tab:sample-craq-ft6}
\end{table*}

\begin{table*}[h]
    \centering
    \resizebox{\columnwidth}{!}{
    \begin{tabular}{c|c|c|c|c|c|c}
    & \multicolumn{2}{c|}{Execution AQ} & \multicolumn{2}{c|}{Prediction AQ} & \multicolumn{2}{c}{CR F1}\\
    \hline
    $N_{\omega}$ & Scalarized & Envelope& Scalarized & Envelope& Scalarized & Envelope\\
    \hline
$1$  &$0.7943 \pm 0.039$ &$0.8578 \pm 0.036$ &$0.7153 \pm 0.079$ &$0.6254 \pm 0.041$ &$0.9364 \pm 0.023$ &$0.9706 \pm 0.027$ \\
$4$  &$0.8836 \pm 0.01$ &$0.9041 \pm 0.005$ &$0.8195 \pm 0.021$ &$0.7843 \pm 0.055$ &$0.984 \pm 0.016$ &$1 \pm 0$ \\
$8$  &$0.8975 \pm 0.003$ &$0.9099 \pm 0.001$ &$0.8427 \pm 0.022$ &$0.7952 \pm 0.023$ &$0.9968 \pm 0.007$ &$1 \pm 0$ \\
$16$ &$0.9047 \pm 0.003$ &$0.9109 \pm 0.001$ &$0.8698 \pm 0.01$ &$0.8488 \pm 0.032$ &$1 \pm 0$ &$1 \pm 0$ \\
$32$ &$0.9054 \pm 0.003$ &$0.9113 \pm 0.001$ &$0.8647 \pm 0.014$ &$0.8847 \pm 0.006$ &$0.9968 \pm 0.007$ &$1 \pm 0$ \\
$64$ &$0.9096 \pm 0.003$ &$0.9119 \pm 0$ &$0.8761 \pm 0.024$ &$0.8731 \pm 0.008$ &$0.9968 \pm 0.007$ &$1 \pm 0$ \\
$128$&$0.9071 \pm 0.004$ &$0.9121 \pm 0$ &$0.8535 \pm 0.033$ &$0.8809 \pm 0.026$ &$1 \pm 0$ &$1 \pm 0$ \\
    \hline
    \end{tabular}
    }
    \caption{Sample Efficiency when $d=5$ - Coverage Ratio (CR) and Adaptation Quality (AQ) comparison of the scalarized and the envelope deep MORL algorithms tested on fruit tree navigation task, where the tree depth $d=5$. Trained on 5000 episodes.}    
    \label{tab:sample-craq-ft5}
        \vspace{-1em}
\end{table*}
\begin{table*}[t!]
    \centering
    \resizebox{\columnwidth}{!}{
    \begin{tabular}{c|c|c|c|c|c|c}
    & \multicolumn{2}{c|}{Execution AQ} & \multicolumn{2}{c|}{Prediction AQ} & \multicolumn{2}{c}{CR F1}\\
    \hline
    $N_{\omega}$ & Scalarized & Envelope& Scalarized & Envelope& Scalarized & Envelope\\
    \hline
$1$  &$0.3748 \pm 0.038$ &$0.6348 \pm 0.063$ &$0.624 \pm 0.012$ &$0.6646 \pm 0.025$ &$0.5847 \pm 0.061$ &$0.60 \pm 0.029$ \\
$4$  &$0.5112 \pm 0.063$ &$0.6846 \pm 0.067$ &$0.6704 \pm 0.006$ &$0.730 \pm 0.018$ &$0.6969 \pm 0.057$ &$0.6544 \pm 0.066$ \\
$8$  &$0.5376 \pm 0.046$ &$0.7516 \pm 0.039$ &$0.6763 \pm 0.022$ &$0.7237 \pm 0.018$ &$0.6837 \pm 0.097$ &$0.7437 \pm 0.04$ \\
$16$ &$0.6052 \pm 0.061$ &$0.7328 \pm 0.043$ &$0.6867 \pm 0.025$ &$0.7473 \pm 0.023$ &$0.6532 \pm 0.029$ &$0.7936 \pm 0.015$ \\
$32$ &$0.5646 \pm 0.061$ &$0.7862 \pm 0.049$ &$0.6828 \pm 0.022$ &$0.7204 \pm 0.05$ &$0.6829 \pm 0.037$ &$0.7997 \pm 0.017$ \\
$64$ &$0.6168 \pm 0.043$ &$0.743 \pm 0.086$ &$0.671 \pm 0.06$ &$0.7429 \pm 0.015$ &$0.7284 \pm 0.03$ &$0.8112 \pm 0.018$ \\
$128$&$0.6308 \pm 0.058$ &$0.8138 \pm 0.038$ &$0.6916 \pm 0.015$ &$0.7586 \pm 0.024$ &$0.6719 \pm 0.076$ &$0.8199 \pm 0.008$ \\
    \hline
    \end{tabular}}
    \caption{Sample Efficiency when $d=7$ - Coverage Ratio (CR) and Adaptation Quality (AQ) comparison of the scalarized and the envelope deep MORL algorithms tested on fruit tree navigation task, where the tree depth $d=7$. Trained on 5000 episodes.}    
    \label{tab:sample-craq-ft7}
        \vspace{-1em}
\end{table*}

Table \ref{tab:sample-craq-ft6} also compares MORL algorithms' adaptation quality in the adaptation phase. Each entry in the table is an average of 5 experiments (train and test). For all numbers of sampled preferences, the envelope deep MORL algorithm has better execution AQ than the scalarized algorithm, in spite of better CR of the envelope version, it also indicates the envelope version  algorithm has better alignment between preferences and policies. As $N_{\omega}$ increases, the values of execution AQ and prediction AQ of both algorithms keep increase. Note that even though when $N_{\omega} = 1$ the execution AQ of the scalarized version and the envelope version differs only around $0.5$, when $N_{\omega}$ increase to $4$, the envelope version algorithm better utilizes the sampled preferences to improve the execution AQ to above 0.9. This agrees with our theoretical analysis that our envelope deep MORL algorithm has better sample efficiency than the scalarized version.

We also investigate how the size of optimality frontiers will affect the performance of our algorithms. We train our deep MORL algorithms on two new FTN environments with $d=5$ and $d=7$ respectively. One is smaller than the previous environment, which contains only $32$ optimal solutions, the other is larger than the previous environment, containing $128$ solutions on the CCS. We fix the number of episode for training as $5000$, and test $2000$ episode to obtain coverage ratio, and test $5000$ episode for policy adaptation quality. 

Table \ref{tab:sample-craq-ft5} shows the results of coverage ratio evaluation in the environment of tree depth $d=5$. As it shows, both scalarized and envelope algorithms work well in that environment. the CR F1 scores are very close to 1. The envelope version algorithm is more stable than the scalarized version deep MORL algorithm. Besides, the envelope deep MORL algorithm can predict multi-objective solutions, while the scalarized algorithm cannot. The prediction ability will also be improved as $N_{\omega}$ increases.

\paragraph{Visualizing Frontiers}
We also provide a visualization of the convex coverage set (CCS) and control frontier for our envelope algorithm and the scalarized baseline in Figure~\ref{fig:syn-vis}. 
The left figure shows the real CCS and retrieved CCS of both MORL algorithms using t-SNE~\cite{t-SNE}. We observe that the envelope algorithm (green dots) almost completely covers the entire set of optimal solutions in the real CCS whereas the scalarized algorithm (pink dots) does not. 
The right figure presents the slices of optimal control frontier and the control frontier of our algorithms along the $\mathtt{Minerals}$-$\mathtt{Water}$ plane. The envelope version retrieves almost the entire optimal frontier, while the scalarized version algorithm has larger control discrepancies. The small discrepancy between the real control frontier and the one retrieved by envelope version algorithm at the indentation of the frontier indicates an alignment issue between the preferences and optimal policies.

\begin{figure}[t]
    \centering
    \includegraphics[width=0.98\columnwidth]{./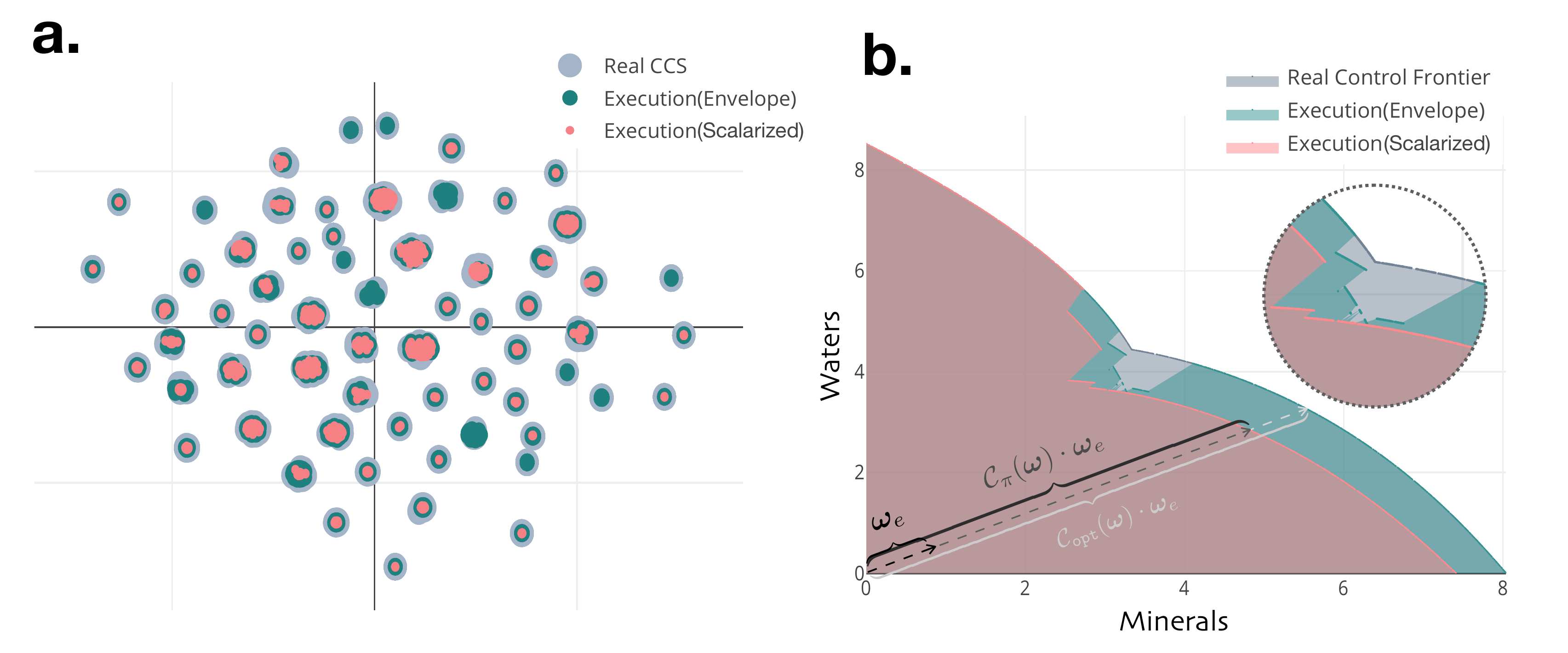}
%    \vspace{-1.5em}    
    \caption{Comparison of CCS and control frontiers of deep MORL algorithms. The left figure (a) is visualizing the real CCS and retrieved CCS of scalarized and envelope MORL algorithms using t-SNE. The right figure (b) presents the slices of optimal control frontier and the control frontier of scalarized and envelope MORL algorithms along the $\mathtt{Mineral}$-$\mathtt{Waters}$ plane.}
    \label{fig:syn-vis}
\end{figure}

\subsection{Task-Oriented Dialog Policy Learning (Dialog)}
\label{supp:sec:dialog}

\begin{table}[h]
    \centering
    \resizebox{\columnwidth}{!}{
    \begin{tabular}{cccccc}
    \toprule
        &Single-(0.5,0.5) & Single-(0.2,0.8) & Single-(0.8,0.2) & Scalarized & Envelope\\
        \hline
    SR & 88.18 $\pm$ 0.90 & 85.30 $\pm$ 0.98 & 87.62 $\pm$ 0.91 & 86.38 $\pm$ 0.95 & {\bf 89.52 $\pm$ 0.85}\\
    \#T & 8.93 $\pm$ 0.13 & 9.40 $\pm$ 0.16 & {\bf 7.42 $\pm$ 0.10} & 8.08 $\pm$ 0.12 & 8.08 $\pm$ 0.12\\
    UT & 2.13 $\pm$ 0.23 & 1.84 $\pm$ 0.23 & 2.53 $\pm$ 0.22 & 2.38 $\pm$ 0.22 & {\bf 2.65 $\pm$ 0.22}\\
    \hline
    AQ  & 0.660 & 0.279 & 0.728 & 0.614 & {\bf 0.814}\\
    \hline
    \end{tabular}}
    \vspace{0.5em}
    \caption{The average success rate (SR), number of turns (\#T), user utility (UT), and adaptation quality (AQ, $\alpha=0.1$) of policies obtained by single-objective RL baselines and two MORL algorithms.}
    \label{tab:dialogue}
\end{table}

\begin{figure}[t!]
    \centering
    \includegraphics[width=0.9\textwidth]{./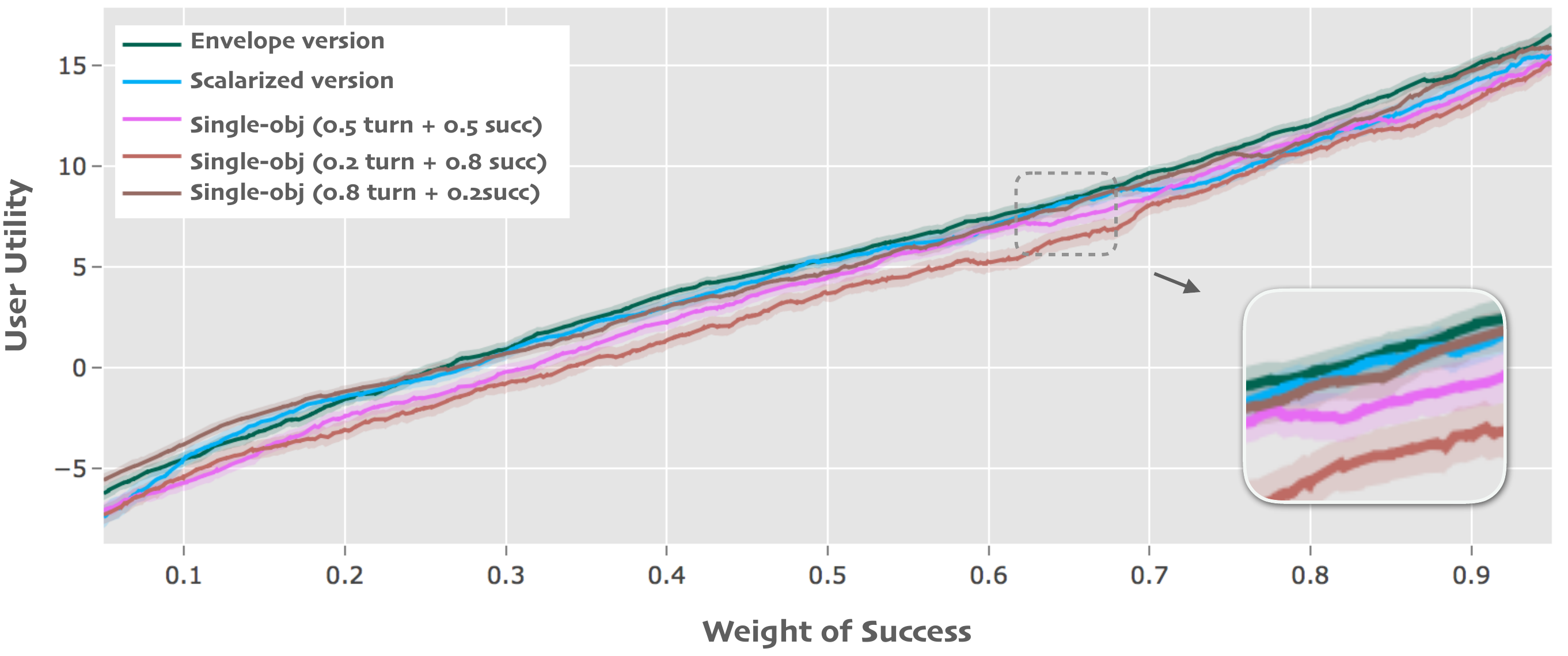}
    \caption{Utility-weight curves for the MORL and single-objective RL dialog policy learning after 3,000 training dialogues. We evaluate policies on 5,000 dialogues with near-uniformly randomly sampled preference. For each curve, each data point is a moving average of the closest 500 dialogues in the interval of around $\pm$ 0.05 weight of success over three trained policies.}
    \label{fig:dialog-utility}
\end{figure}

Table~\ref{tab:dialogue} shows that envelope MORL algorithm achieves best success rate (SR) on average, while the single-objective method with equal weight on both dimensions (0.5) achieves competitive performance. However, on metrics of average utility (UT) and adaptation quality (AQ) (with $\alpha=0.1$, envelope MORL is significantly better than the other methods, including scalarized MORL. This again demonstrates that adaptability of envelope MORL to tasks with new preferences.
The single-objective method with 0.8 success weight has the worst performance on success rate eventually. This is might because lack of turn level reward guidance directly optimize for success is very difficult for the single-objective RL learner. 

Figure~\ref{fig:dialog-utility} illustrates utility-weight curves during a policy adaptation phase with given preferences, where each data point is a moving average of closest 500 dialogues in the interval of around $\pm$ 0.05 weight of success over three trained policies. We find the envelope deep MORL algorithm is almost always better than other methods in terms of utility under certain given preferences, and the scalarized MORL baseline keeps a good level of utility under almost all user preferences. Single-objective RL algorithms are good only when the user's weight of success is close to their fixed preferences while training.

\subsection{Multi-Objective SuperMario Game (SuperMario)}
\label{supp:sec:mario}

\begin{minipage}{\textwidth}
    \centering
    \resizebox{\columnwidth}{!}{
        \begin{tabular}{cccc}
    	\toprule
		\multirow{2}{3em}{Method} &  \multicolumn{3}{c}{Super Mario} \\
		\cmidrule(lr){2-4}
		{} & {\tt Avg.}{\tt UT} ($0.5$ {\tt x-pos} \& $0.5$ {\tt time}) & {\tt Avg.}{\tt UT} ($0.5$ {\tt coin} \& $0.5$ {\tt enemy}) & {\tt Avg.}{\tt UT} (uniform preference)\\
		\midrule
		Scalarized &
		317.1 $\pm$ 123.7 & 76.7 $\pm$ 36.5 &
		301.7 $\pm$ 49.2
        \\
		Envelope &
		{\bf 600.9 $\pm$ 114.9} & {\bf 233.3 $\pm$ 31.2} &
		{\bf 319.7 $\pm$ 34.4}
        \\
		\bottomrule
    \end{tabular}}
      \captionof{table}{Performance comparison of different MORL algorithm in learning and adaptation phases on the SuperMario environment under three different preferences. The first preference emphasizes the fast completion of the task so it is 0.5 on {\tt x-pos} and 0.5 {\tt time}, the second preference emphasizes collecting {\tt coin} and eliminating {\tt enemy}. The third is a uniform preference which has weights 0.2 for all five objectives \{{\tt x-pos}, {\tt time}, {\tt death}, {\tt coin}, {\tt enemy}\}}
      \label{tab:mario}
  \end{minipage}

We compared the average utility of the scalarized baseline and the MORL algorithm with envelope update. The first preference emphasizes the fast completion of the task so it is 0.5 on {\tt x-pos} and 0.5 {\tt time}, the second preference emphasizes collecting {\tt coin} and eliminating {\tt enemy}. The third is a uniform preference which has weights 0.2 for all five objectives \{{\tt x-pos}, {\tt time}, {\tt death}, {\tt coin}, {\tt enemy}\}. The envelope algorithm outperforms the scalarized algorithm under all these preferences.

\begin{figure}[h]
    \centering
    \includegraphics[width=0.9\textwidth]{./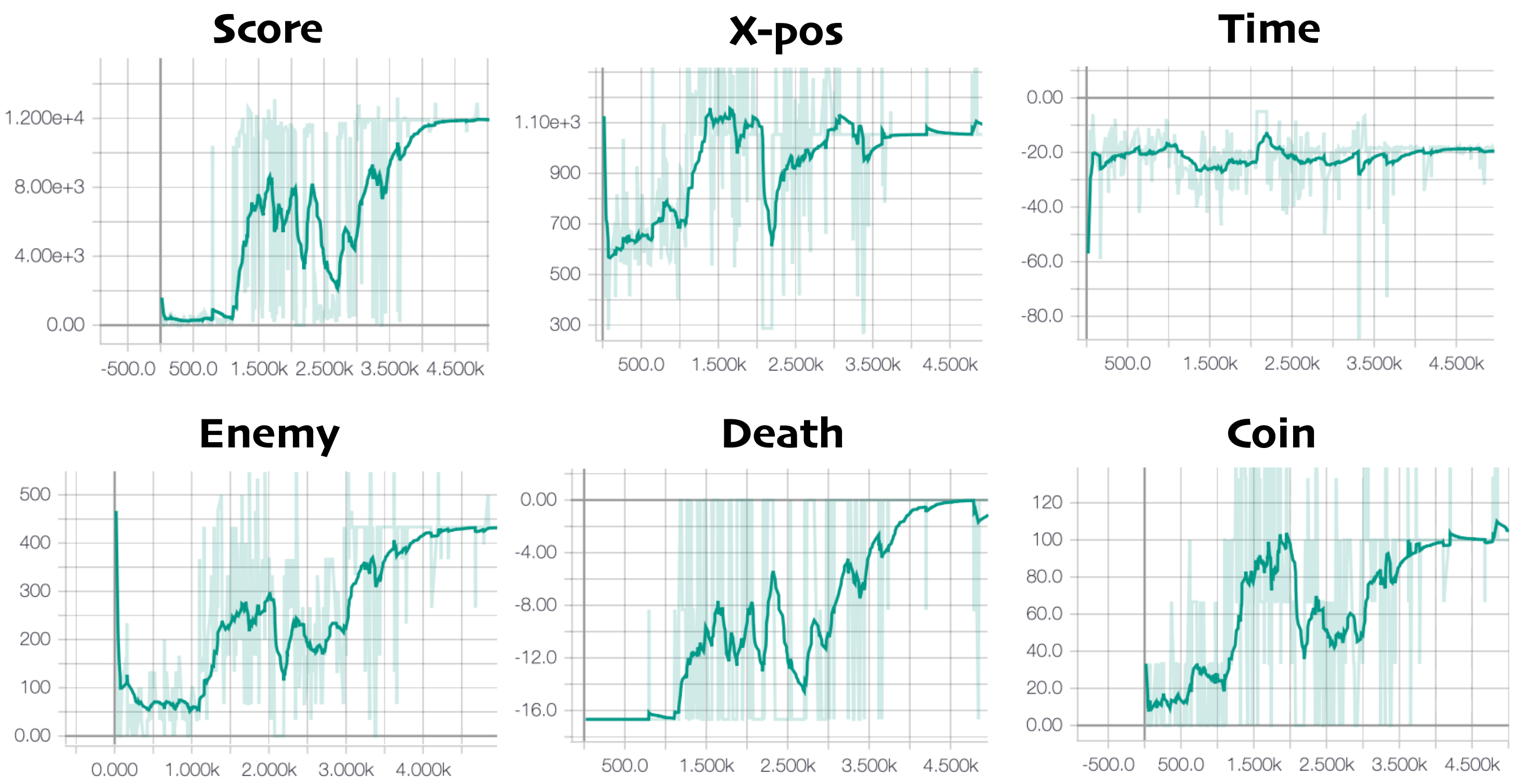}
    \caption{The training curves of the Envelope Multi-Objective A3C (EMoA3C) algorithm.}
    \label{fig:mario-training}
\end{figure}

The training curves of EMoA3C algorithm are shown in  Figure~\ref{fig:mario-training}. To plot Figure~\ref{fig:mario-training}, we use a uniform preference $[0.2~0.2~0.2~0.2~0.2]$ as the probe preference to sample trajectories for evaluation. We observe the algorithm converges within around 5k episodes of training.

\begin{table}[t]
\parbox{.45\linewidth}{
	\centering
	\caption{\small Inferred preferences of the scalarized multi-objective A3C algorithm in different Mario Game variants with 100 episodes.}
	\label{tab:ftn-naive-infer}
	\resizebox{0.48\columnwidth}{!}{
	\begin{tabular}{c|c|c|c|c|c}
	 & {\tt x-pos} & {\tt time} & {\tt life} & {\tt coin} & {\tt enemy} \\ \hline
     {\tt g1}& \cellcolor{pink!23} \underline{0.2315} &\cellcolor{pink!26} 0.2569 &\cellcolor{pink!34}0.335 &\cellcolor{pink!05}0.0522 &\cellcolor{pink!12}0.1243\\ \hline
     {\tt g2}& \cellcolor{pink!16}0.1600 &\cellcolor{pink!14}\underline{0.1408} &\cellcolor{pink!28}0.2822 &\cellcolor{pink!19}0.1955 &\cellcolor{pink!22}0.2216\\ \hline
     {\tt g3}& \cellcolor{pink!18}0.1865 &\cellcolor{pink!18}0.1837 &\cellcolor{pink!08}\underline{0.0874} &\cellcolor{pink!37}0.3759 &\cellcolor{pink!16}0.1665\\ \hline
     {\tt g4}& \cellcolor{pink!12}0.1250 &\cellcolor{pink!30}0.3062 &\cellcolor{pink!35}0.3522 &\cellcolor{pink!12}\underline{0.1240} &\cellcolor{pink!09}0.0926\\ \hline
     {\tt g5}& \cellcolor{pink!23}0.2378 &\cellcolor{pink!16}0.1673 &\cellcolor{pink!18}0.1820 &\cellcolor{pink!34}0.3432 &\cellcolor{pink!06}\underline{0.0698}\\ \hline
	\end{tabular}}
}
~~~~~~
\parbox{.45\linewidth}{
    \centering
	\caption{The difference between the inferred preferences of the envelope and the scalarized multi-objective A3C algorithms.}
	\label{tab:mario-enve-infer-2}
	\resizebox{0.49\columnwidth}{!}{
	\begin{tabular}{c|c|c|c|c|c}
    	& {\tt x-pos} & {\tt time} & {\tt life} & {\tt coin} & {\tt enemy} \\ \hline
      {\tt g1}
      &\cellcolor{pink!30}\underline{+0.2973} &-0.0799 &-0.1850 &-0.0052 &-0.0271\\ \hline
      {\tt g2}
      &\cellcolor{pink!4}+0.0385
      &\cellcolor{pink!8}\underline{+0.0829} &-0.0337
      &-0.0533
      &-0.0348\\ \hline
      {\tt g3}
      &\cellcolor{pink!3}+0.0331
      &-0.0541 &\cellcolor{pink!27}\underline{+0.2667} &-0.1967
      &-0.0490\\ \hline
      {\tt g4}&-0.1039 &-0.0658 &-0.3311 &\cellcolor{pink!57}\underline{+0.5720} &-0.0715\\ \hline
      {\tt g5}&-0.1663&-0.0635 &\cellcolor{pink!2}+0.0249 &\cellcolor{pink!5}+0.0490 &\cellcolor{pink!16}\underline{+0.1555}\\ \hline
    \end{tabular}}
}
\end{table}

Finally we compare the difference between the inferred preferences of the envelope and the scalarized multi-objective A3C algorithm on different variants of SuperMario game, g1 to g5, where only corresponding scalar rewards are available. We only consider maximizing one objective in each game variant, because this orthogonal design helps us characterize the behaviors of agents and compare their inferred preferences. We allow agents to make 100 episodes interactions in each game variance, to determine the preference. Note that only 100 episodes are far from enough for training a single objective reinforcement learning agent, even the model is pre-trained on other tasks. 

As Table~\ref{tab:mario-enve-infer-2} illustrates, the preferences inferred by the envelope MoA3C agent is more concentrate on the diagonal than that of scalarized MoA3C agent, which is more closer to the true underlying preferences.

Even though the EMoA3C agent can do better, our experiments show that the inferred preferences are not exactly the true underlying preferences. It is mainly for two reasons: First, the trade-off frontiers and policy-preference alignment learned by the algorithm is not ideal. There might be some discrepancies between the obtained control frontier and the real optimal control frontier just like what Figure \ref{fig:syn-vis} (b) illustrates. Second, even the agent perfectly learned the frontier and the alignment, close preferences may correspond to the policies with the same expected returns.

We also deploy the trained EMoA3C agents in a Mario Game where only game scores are available. After 100 episodes adaptation, the EMoA3C agent infers that the underlying preference for the achieving higher score mainly focuses on {\tt x-pos} (0.3725) and {\tt time} (0.2307), which is coincident with the strategy human players commonly use – to achieve higher score, especially the stage accomplishment bonus, the first priority is to ensure Mario can move forward towards the flag within the time limit.

% \begin{table}[H]
%     \centering
%     \begin{tabular}{c|c|c|c|c|c}
%     	& {\tt x-pos} & {\tt time} & {\tt life} & {\tt coin} & {\tt enemy} \\ \hline
%       {\tt g1}
%       &\cellcolor{pink!53}\underline{0.5288} &\cellcolor{pink!18}0.1770 &\cellcolor{pink!15}0.1500 &\cellcolor{pink!04}0.0470 &\cellcolor{pink!09}0.0972\\ \hline
%       {\tt g2}
%       &\cellcolor{pink!19}0.1985
%       &\cellcolor{pink!22}\underline{0.2237} &\cellcolor{pink!24}0.2485 
%       &\cellcolor{pink!14}0.1422 
%       &\cellcolor{pink!18}0.1868\\ \hline
%       {\tt g3}
%       &\cellcolor{pink!22}0.2196
%       &\cellcolor{pink!13}0.1296 &\cellcolor{pink!35}\underline{0.3541} &\cellcolor{pink!18}0.1792
%       &\cellcolor{pink!12}0.1175\\ \hline
%       {\tt g4}&\cellcolor{pink!02}0.0211 &\cellcolor{pink!24}0.2404 &\cellcolor{pink!02}0.0211 &\cellcolor{pink!69}\underline{0.6960} &\cellcolor{pink!02}0.0211\\ \hline
%       {\tt g5}&\cellcolor{pink!07}0.0715&\cellcolor{pink!10}0.1038 &\cellcolor{pink!20}0.2069 &\cellcolor{pink!39}0.3922 &\cellcolor{pink!22}\underline{0.2253}\\ \hline
%     \end{tabular}
%     \caption{\small Inferred preferences of the envelope multi-objective A3C algorithm in different Mario Game variants with 100 episodes.}
%     \label{tab:my_label}
% \end{table}
% \input{app-ttest}

\end{document}